\newtheorem{theorem}{Theorem}
\newcommand{\EE}{\mathbb{E}}
\newcommand{\R}{\mathbb{R}}
\newcommand{\bz}{\mathbf{z}}
\DeclareMathOperator{\argmin}{arg\min}
\newcommand{\cZ}{\mathcal{Z}}
\begin{document}

\title{Carath\'eodory Sampling for Stochastic Gradient Descent%
}

\author{Francesco Cosentino\thanks{\scriptsize{Mathematical Institute, University of Oxford  \& The Alan Turing Institute, \url{name.surname@maths.ox.ac.uk}}}
\and Harald Oberhauser\thanks{\scriptsize{Mathematical Institute, University of Oxford  \& The Alan Turing Institute, \url{name.surname@maths.ox.ac.uk}}}
\and Alessandro Abate\thanks{\scriptsize{Dept. of Computer Science,  University of Oxford  \& The Alan Turing Institute, \url{name.surname@cs.ox.ac.uk}}}
}

\date{}

\maketitle

\begin{abstract}
  Many problems require to optimize empirical risk functions over large data sets.
  Gradient descent methods that calculate the full gradient in every descent step do not scale to such datasets.
    Various flavours of Stochastic Gradient Descent (SGD) replace the expensive summation that computes the full gradient by approximating it with a small sum over a randomly selected subsample of the data set that in turn suffers from a high variance.
    We present a different approach that is inspired by classical results of Tchakaloff and Carath\'eodory about measure reduction.
    These results allow to replace an empirical measure with another, carefully constructed probability measure that has a much smaller support, but can preserve certain statistics such as the expected gradient.
    To turn this into scalable algorithms we firstly, adaptively select the descent steps where the measure reduction is carried out; secondly, we combine this with Block Coordinate Descent so that measure reduction can be done very cheaply.
    This makes the resulting methods scalable to high-dimensional spaces. 
    Finally, we provide an experimental validation and comparison.

\end{abstract}
\section{Introduction}\label{sec:intro}
A common taks is the optimization problem
  \begin{align}\label{eq:loss}
\argmin_{\theta } \EE[L_\theta(X,Y)],
  \end{align}
where $L_\theta$ is a generic loss function and the expectation is taken with respect to the joint distribution of $(X,Y)$; e.g.~$L_\theta(x,y)=(x^\top\theta - y)^2 + \lambda|\theta|_1$ for LASSO \cite{Tibshirani1996,Murphy2012}. %
In practice, the distribution of $(X,Y)$ is not known and one approximates it with the empirical measure $\mu=\frac{1}{N}\sum_{i=1}^N \delta_{(x_i,y_i)}$ built from $N$ samples $(x_i,y_i)$ of the pair $(X,Y)$.
That is, one minimizes the so-called empirical risk
\begin{align}\label{eq:emp loss}
\theta^\star:= \argmin_{\theta } \EE[L_{\theta}(Z)] = \frac{1}{N}\sum_{i=1}^N L_\theta(z_i), 
\end{align}
where $Z$ denotes the discrete random variable that takes the values $z_i=(x_i,y_i)$ with equal probability.
If $L_{\theta}$ is smooth with bounded derivatives and convex, standard gradient descent (GD),
\begin{align}\label{eq:GD1}
  \theta_{j+1}-\theta_{j}%
  =-\frac{\gamma}{N}\sum_{i=1}^N\nabla_{\theta}L_{\theta}(z_i)\Big|_{\theta=\theta_{i}}
\end{align}
converges if the learning rate $\gamma$ is appropriately chosen, $\lim_{i \rightarrow \infty} \theta_i =\theta^\star$~\cite{Nesterov2018}.
However, for large-scale problems when the number of samples $N$ is huge, the evaluation of the gradient in
Equation~\eqref{eq:GD1} in every iteration step is prohibitive.

\subsection{Related Literature}
Popular approaches to reduce the cost in each iteration step are the so-called stochastic gradient descent (SGD) algorithms.
The gradient is approximated by selecting at each iteration step $j$ a subset of the $N$ points at random.
Solving the minimization problem~\eqref{eq:emp loss} via (S)GD has a long history and is a research topic that is still rapidly evolving; we refer to \cite{Robbins1951,Bottou2004,bottou2010large,ruder2016overview,Bottou2018} for a general overview.
Our work is inspired by variants of SGD that produce better estimators for the expected gradient than the naive estimator given by subsampling a batch of data points uniformly at random.
This reduces the variance in each step and often guarantees a better convergence rate.
Popular examples of such SGD variants include stochastic average gradients (SAG) \cite{Roux2012}, Iterate Averaging \cite{Polyak1992}, Incremental Aggregated Gradient \cite{Blatt2007}. %

Our work relies on replacing the empirical measure $\mu=\frac{1}{N}\sum_{i=1}^N\delta_{z_i}$ by a measure $\hat \mu$ with much smaller support, 
which however has the property that $\EE_{Z \sim \mu}[\nabla_\theta L_\theta (Z)] = \EE_{Z \sim \hat \mu}[\nabla_\theta L_\theta (Z)]$ at carefully selected iteration steps.
The construction of this reduced measure $\hat \mu$ that matches certain expectations of $\mu$ is known as the {\it recombination problem}.  
Algorithms to construct this reduced measure by solving a constrained linear system have been known for a long time~\cite{Davis1967}, and recently more efficient algorithms have been developed~\cite{Litterer2012,maria2016a,Maalouf2019,Cosentino2020}.
In particular, we rely on \cite{Cosentino2020} that shows strong performance when the number of samples $N$ is very large. 

In the second part of this paper we combine our proposed Caratheodory subsampling with Block Coordinate Descent (BCD) to make Caratheodory sampling effective when $\theta$ is high-dimensional.
Generally, under the block separability assumptions on the regularization term and usual conditions on the principal part of the loss function, e.g. convexity or the Polyak-Lojasiewicz condition, the convergence is proved and the rates of convergence have been found, e.g.~\cite{Nutini2017,Csiba2017}.
The papers \cite{Tseng2007,Tseng2008} study how the smoothness assumptions can be relaxed.
Applications of BCD techniques have been studied in sparse settings~\cite{Nutini2017}, large scale Gaussian process regression~\cite{Bo2012}, L1 Regularized Least Squares~\cite{Santis2016}, Group LASSO~\cite{Meier2008,Qin2013}, Training Support Vector Machines~\cite{Platt1998},
matrix and tensor factorization~\cite{Xu2013,Yu2012} and other works~\cite{Ene2015,Sardy2000,Thoppe2014}.

\subsection{Contribution}
Instead of approximating the sum $\EE[L_\theta(Z)] = \frac{1}{N} \sum_{i=1}^N L_\theta(z)_i$ by subsampling, we construct at certain steps $j$ in the GD iteration of $\theta_j$ a new probability measure $\hat \mu_j$ supported on a \emph{very small subset} of the original $N$ atoms.
This measure $\hat \mu$, however, matches certain statistical functions of the empirical measure $\mu$; in particular, we can choose it such that  $\EE_{Z \sim \mu}[L_{\theta_j}(Z)] = \EE_{Z \sim \hat \mu}[L_{\theta_j}(Z)]$.
The construction of $\hat \mu$ is also known as the \emph{recombination problem} and we use the recent algorithm \cite{Cosentino2020} which scales well in the regime where the number of samples $N$ is large. 
Although it can be relatively costly to carry out the recombination at a given step, in return the gradient is perfectly matched at this step and the expectation can be computed as a sum over $n$ weighted points rather than $N\gg n$ uniformly weighted points. 
We balance this tradeoff by combining two techniques: 
\begin{enumerate*}[label=(\roman*)]
\item \label{itm:control} By using an approximation to the Hessian to derive a control statistic that tells us when to carry out the recombination step.
  In practice, this allows to do only a few recombination computations over the whole descent trajectory.
\item \label{itm:bc} By using Block coordinate descent (BCD) to carry out the reduction only for a subset of the %
coordinates of the gradient. This makes a recombination cheap even if $\theta$ is high-dimensional.
\end{enumerate*}

\subsection{Outline}
Section~\ref{sec:background} introduces the theoretical background on recombination.
Section \ref{sec:GD} provides the main theoretical results and a first application %
to logistic regression;
Section \ref{sec:BCD} then {recalls BCD techniques}, and shows how this allows to efficiently carry out Caratheodory subsampling for high-dimensional $\theta$.
Further, it benchmarks the resulting Carath\'eodory BCD (CaBCD) against classic SGD algorithms SAG and ADAM.
Section \ref{sec:BCD} also compares the rules regarding the selection of the coordinates' blocks introduced in~\cite{Nutini2017} with the same
rules when the Carath\'eodory Sampling is applied.
A Python implementation for all of our experiments can be found at \url{https://github.com/FraCose/Caratheodory_GD_Acceleration}.

\section{The Recombination Problem}\label{sec:background}
We now recall a classic result which shows that for any discrete random variable that can take $N$ different values, there exists another discrete random variable that only takes values in a subset of $n+1$ of the original $N$ points that has the same statistics as defined by $n$ functions $f_1,\ldots,f_n$. %

\begin{theorem}[Carath\'eodory~\cite{DeLoera2013} ]\label{th:cath}
Given a set of $N > n+1$ points in $\R^n$ and a point $z$ that lies in the convex hull of these $N$ points,
$z$ can be expressed as a convex combination of maximum $n+1$ points.
\end{theorem}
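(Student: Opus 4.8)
The plan is to prove Carath\'eodory's theorem by a dimension-reduction argument on the number of points appearing in a convex representation of $z$. Suppose $z = \sum_{i=1}^{m} \lambda_i p_i$ with $\lambda_i > 0$, $\sum_{i=1}^m \lambda_i = 1$, and $m \geq n+2$; I will show the representation can be shortened to one using $m-1$ points, and then iterate until $m \le n+1$.

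First I would extract a linear dependence among the \emph{differences} $p_i - p_m$. Since there are $m-1 \geq n+1$ vectors $p_1 - p_m, \ldots, p_{m-1} - p_m$ in $\R^n$, they are linearly dependent, so there exist coefficients $\alpha_1, \ldots, \alpha_{m-1}$, not all zero, with $\sum_{i=1}^{m-1} \alpha_i (p_i - p_m) = 0$. Setting $\alpha_m := -\sum_{i=1}^{m-1}\alpha_i$, this rearranges to $\sum_{i=1}^m \alpha_i p_i = 0$ with $\sum_{i=1}^m \alpha_i = 0$ and the $\alpha_i$ not all zero; in particular at least one $\alpha_i$ is strictly positive.

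Next I would subtract a multiple of this null combination from the original one. For any $t \in \R$ we have $z = \sum_{i=1}^m (\lambda_i - t\alpha_i) p_i$ and $\sum_i (\lambda_i - t\alpha_i) = 1$, so the new coefficients still sum to $1$; it remains only to choose $t$ so they stay nonnegative and at least one becomes zero. Take $t = \min\{\lambda_i/\alpha_i : \alpha_i > 0\}$, attained at some index $i_0$. Then $\lambda_{i_0} - t\alpha_{i_0} = 0$, and for every $i$ one checks $\lambda_i - t\alpha_i \geq 0$: if $\alpha_i > 0$ this is $\alpha_i(\lambda_i/\alpha_i - t) \geq 0$ by minimality of $t$, and if $\alpha_i \leq 0$ then $-t\alpha_i \geq 0$ since $t \geq 0$. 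Thus $z$ is a convex combination of the remaining $\le m-1$ points $\{p_i : i \neq i_0\}$.

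Finally, iterating this reduction step terminates once the number of points in the representation drops to $n+1$, giving the claim. The only mildly delicate point — the main obstacle, such as it is — is bookkeeping the two cases in the sign check for $\lambda_i - t\alpha_i$ and confirming that $t$ is well-defined and nonnegative, which holds because some $\alpha_i > 0$ and the corresponding $\lambda_i > 0$. Everything else is routine linear algebra.
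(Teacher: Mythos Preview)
Your proof is correct and is in fact the standard reduction argument for Carath\'eodory's theorem. Note, however, that the paper does not supply its own proof of this statement: Theorem~\ref{th:cath} is simply quoted as a classical result with a citation to~\cite{DeLoera2013}, and the paper immediately moves on to deduce Tchakaloff's theorem from it. So there is no proof in the paper to compare against; your argument would serve perfectly well as a self-contained justification should one be desired.
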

As is well-known, this implies Tchakaloff's Theorem~\ref{th:tchakalof}~\cite{Tchak} for the special case of discrete measures:
given $n$ functions $f_1,\ldots,f_n:\cZ \rightarrow \R^n$ define $F: \cZ \rightarrow \R^n$ as $F(z):=(f_1(z),\ldots,f_n(z))$.
Now given a discrete probability measure $\mu$ on $\cZ$ that is supported on $N$ atoms $z_1,\ldots,z_N \in\cZ$, it follows that $\EE_{Z\sim \mu }[F(Z)]= \sum_{i=1}^N F(z_i)\mu(z_i)$.
Since this finite sum
  defines a point within the convex hull of the set of $N$ points $\bz:=\{F(z_i)\}_{i=1}^N$, it follows by Carath\'eodory's Theorem that this point can be equivalently expressed as a convex combination of a subset $\hat \bz$ of $\bz$ comprising at most $n+1$ points.
  As first shown by Tchakaloff, this shows that Theorem~\ref{th:cath} implies the following recombination result. 
\begin{theorem}[Tchakaloff~\cite{Tchak}]\label{th:tchakalof}
  Let $Z$ be a discrete random variable that can take $N$ values $\{z_1,\ldots,z_N\}$.
  For any set $\{f_1,\ldots,f_n\}$ of $n$ real-valued functions there exists a random variable $\hat Z$ such that
\begin{align}\label{eq:F}
 \EE[f_i(Z)] = \EE[f_i(\hat Z)]\quad \text{ for every } i=1, \ldots, n.
 \end{align}
 and $\hat Z$ only takes values in a subset of $\{z_1,\ldots,z_N\}$ of cardinality at most $n+1$.
 We refer to $\hat Z$ as a reduction or recombination of $Z$.
\end{theorem}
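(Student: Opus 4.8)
The plan is to formalise the reduction already sketched just before the statement: the result is essentially a corollary of Carath\'eodory's Theorem~\ref{th:cath}. Write $p_i := \Prob(Z = z_i)$, so that $p_i \ge 0$ and $\sum_{i=1}^N p_i = 1$, and bundle the $n$ functions into the map $F : \cZ \to \R^n$, $F(z) := (f_1(z), \ldots, f_n(z))$. The first step is the observation that the target vector
\begin{align}
  m := \EE[F(Z)] = \sum_{i=1}^N p_i\, F(z_i)
\end{align}
is, by its very definition, a convex combination of the $N$ points $\bz = \{F(z_1), \ldots, F(z_N)\} \subset \R^n$, hence lies in their convex hull.

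If $N \le n+1$ there is nothing to do: set $\hat Z := Z$. Otherwise $N > n+1$ and we invoke Theorem~\ref{th:cath} for the point $m$ and the point set $\bz$, obtaining an index set $S \subseteq \{1, \ldots, N\}$ with $|S| \le n+1$ and weights $q_j \ge 0$ ($j \in S$) satisfying $\sum_{j \in S} q_j = 1$ and $m = \sum_{j \in S} q_j\, F(z_j)$. Now define $\hat Z$ to be the discrete random variable supported on $\{z_j : j \in S\}$ with $\Prob(\hat Z = z_j) = q_j$; this is a genuine probability law precisely because the $q_j$ are nonnegative and sum to one, which is exactly the content of the convex representation furnished by Theorem~\ref{th:cath}. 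Reading off coordinates,
\begin{align}
  \EE[f_i(\hat Z)] = \sum_{j \in S} q_j\, f_i(z_j) = \Big( \sum_{j \in S} q_j\, F(z_j) \Big)_i = m_i = \EE[f_i(Z)]
\end{align}
for every $i = 1, \ldots, n$, which is~\eqref{eq:F}, and by construction $\hat Z$ takes values in a subset of $\{z_1, \ldots, z_N\}$ of size at most $n+1$.

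I do not expect a genuine obstacle here: all of the mathematical weight sits inside Theorem~\ref{th:cath}, and what remains are the two elementary bookkeeping facts that an expectation of a discrete random variable is the convex combination of its values weighted by their probabilities, and that any tuple of nonnegative numbers summing to one defines a probability distribution. If instead one wanted a self-contained proof that does not cite Theorem~\ref{th:cath}, the standard route --- which is, in fact, the usual proof of Carath\'eodory itself --- is an iterative linear-algebra reduction: whenever more than $n+1$ atoms carry positive mass, the vectors $\{(1, F(z_i))\} \subset \R^{n+1}$ are linearly dependent, so there is a nonzero tuple $(\lambda_i)$ with $\sum_i \lambda_i = 0$ and $\sum_i \lambda_i F(z_i) = 0$; replacing $p_i$ by $p_i + t\lambda_i$ preserves both $\sum_i p_i = 1$ and $\sum_i p_i F(z_i) = m$, and taking $t$ to be, in absolute value, the smallest value that forces some coordinate to vanish while keeping all coordinates nonnegative removes at least one atom from the support. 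Iterating this step at most $N - n - 1$ times yields a representation on at most $n+1$ atoms, from which $\hat Z$ is defined as above.
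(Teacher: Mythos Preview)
Your proof is correct and follows exactly the route the paper sketches in the paragraph preceding the theorem: bundle the $f_i$ into $F$, note that $\EE[F(Z)]$ lies in the convex hull of $\{F(z_i)\}$, and invoke Theorem~\ref{th:cath} to extract a convex representation on at most $n+1$ atoms, which defines $\hat Z$. The additional self-contained linear-algebra reduction you include is a bonus (and is indeed the standard constructive proof underlying Theorem~\ref{th:cath}), but the paper itself simply cites Carath\'eodory and leaves it at that.
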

Tchakaloff~\cite{Tchak} showed a more general version for continuous random variables, but in this work the above result for the discrete setting is sufficient.
In our context of the optimization problem~\eqref{eq:emp loss}, we will apply it with $Z=(X,Y)$ denoting a pair consisting of observations $X$ and labels $Y$.

The above derivation already implies an algorithm to calculate $\hat Z$, by finding the subset $\hat \bz$ , that solves $N-n-1$ times a constrained linear system, see~\cite{Davis1967} for details.
More recently, algorithms have been devised that exploit a divide and conquer strategy which reduce the complexity of the needed calculations drastically, but they all require ${O}(Nn+\log(N/n)n^4)$~\cite{Litterer2012,Maalouf2019} respectively ${O}(Nn+\log(N/n)n^3)$~\cite{maria2016a}.
Throughout we use~\cite{Cosentino2020} to construct $\hat \mu$ by a geometric greedy sampling which is advantageous when $N\gg n$.
In particular, what makes the algorithm \cite{Cosentino2020} suitable in contrast to other recombination algorithms \cite{Litterer2012,Maalouf2019,maria2016a} is that although it has a similar worst case complexity, it has a much better average case complexity.  
However, we emphasize that the ideas below are independent of the choice of the concrete recombination algorithm and any improvement on recombination algorithms will result in an improvement of Caratheodory's subsampling for SGD.
\section{Carath\'eodory Gradient Descent (CaGD)}\label{sec:GD}
Given a dataset $\{(x_i,y_i):i=1,\ldots,N\}$ consisting of $N$ observations $x_i$ with labels $y_i$, we denote by $Z$ the discrete random variable that takes the value $z_i=(x_i,y_i)$ with probability $\frac{1}{N}$.
That is, the empirical risk~\eqref{eq:emp loss} at $\theta$ equals $\EE[L_\theta(Z)]$.
Further, denote with $G(\theta,z):=\nabla_{\theta}L_\theta(z) \in \R^n$ the gradient at $\theta$ and with $H(\theta,z):=\nabla^2_{\theta}L_\theta(z) $ the Hessian.
With this notation, the usual GD iteration reads as
\begin{align}
 \theta_{j+1} := \theta_j - \gamma \EE[G(\theta,Z)],
\end{align}
and converges, under assumptions which we recall below, to the minimum $\theta^\star$ as given in~\eqref{eq:emp loss}, see \cite{Nesterov2018}.
However, in every descent step $j$ the evaluation of the sum \[\EE[G(\theta,Z)]=\frac{1}{N}\sum_{i=1}^N G(\theta,(x_i, y_i))\] can be costly. 
Below we use Theorem~\ref{th:tchakalof} to derive a similar iteration $(\hat \theta_j)$, that also converges to $\theta^\star$, which however avoids the evaluation of $\EE[G(\theta_j,Z)]$ at most of the steps.
\paragraph{The first recombination step.}
Initialize $\theta_0 \in \R^n$ as before in (S)GD, but before the first step, apply Theorem~\ref{th:tchakalof} to $Z$ and the $n$ coordinate functions of the gradient $z \mapsto G(\theta_0,z)$ to produce a discrete random variable $\hat Z_0$.
By construction, this random variable $\hat Z_0$ has only $n+1$ different possible outcomes and these outcomes are part of the original dataset $\{z_i=(x_i,y_i), i=1,\ldots,n\}$.
Now define the first descent step
\begin{align}
 \hat \theta_{1}:= \theta_0 - \gamma  \EE[G(\theta_0, \hat Z_0)].
\end{align}
Since by construction, $\EE[G(\theta_0,Z)] = \EE[G(\theta_0,\hat Z_0) ]$, it follows that $\hat \theta_1 = \theta_1$.
In general $\EE[G(\hat\theta_1,Z)] \neq \EE[G(\hat \theta_1,\hat Z_0)]$ but the intuiton is that $\EE[G(\hat \theta_1,\hat Z_0)]$ is a good approximation of $ \EE[G(\hat \theta_1, Z) ]$ for reasonable choices of $\gamma$.
Hence, we continue to iterate
\begin{align}\label{eq:gdzro}
 \hat \theta_{j+1}:= \hat \theta_j - \gamma  \EE[G(\hat \theta_j, \hat Z_0)].
\end{align}
until the first time $\tau_1$ a control statistic tells us that the gradient error has become too large.
\paragraph{A control statistic.}
Let $L$ be convex, twice differentiable, and its gradient be Lipschitz, we show later that
a natural choice for control statistic is the quantity
\begin{align}\label{eq:Delta_def}
\Delta_{j,0}:=\EE[G(\theta_{0},\hat{Z}_{0})]\cdot(\hat{\theta}_{j}-\theta_{0})+\frac{c}{2}\|\hat{\theta}_{j}-\theta_{0}\|^{2},
\end{align}
where $c$ is such that %
$v^\top H(\theta,z) v \le c$ for every $v \in \R^n$; the existence of such a $c$ is justified by the assumptions on $L$.
More precisely, $\Delta_{j,0}< \Delta_{j-1,0}$  guarantees that the loss function $L$ continues to decrease.
Hence, we follow the iteration~\eqref{eq:gdzro} until $\Delta_{j,0}\geq \Delta_{j-1,0}$, that is until step $\tau_1:=\inf \{j>0: \Delta_{j,0}\geq \Delta_{j-1,0}\}$, where we fix $\Delta_{0,0}:=0$.
At time $\tau_1$ we then simply update $\hat Z_0$ to $\hat Z_1$ so that the gradients are matched at the point $\hat\theta_{\tau_1-1}$,
that is $\hat Z_1$ is such that $\EE[G(\hat \theta_{\tau_{1}-1},Z)]=\EE[G(\hat \theta_{\tau_1-1},\hat Z_1)]$, and then we continue as before.

\paragraph{CaGD in a nutshell.}
To sum up, we set $\tau_0:=0$, $\Delta_{0,0}=0$, and construct $\hat Z_0$ such that $\EE[G(\theta_0,Z)]=\EE[G(\theta_0,\hat Z_0)]$.
We then update, for $j\geq 0$,
\begin{align}
  \hat \theta_{j+1}:= \hat \theta_j- \gamma  \EE[G(\hat \theta_j , \hat Z_0)] \text{ as long as } \Delta_{j,0}<\Delta_{j-1,0}.
\end{align}
At time $\tau_1$ %
we compute $\hat Z_1$ such that
\begin{align}
 \EE[G(\hat \theta_{\tau_1-1}, Z)]= \EE[G(\hat \theta_{\tau_1-1}, \hat Z_1)]
\end{align}
and update for $j \ge \tau_1-1$
\begin{align}
   \hat \theta_{j+1}:= \hat \theta_j- \gamma  \EE[G(\hat \theta_j, \hat Z_1)], \text{ as long as } \Delta_{j,1}< \Delta_{j-1,1}
\end{align}
where $\Delta_{j,1}:=\EE[G(\hat{\theta}_{\tau_{1}-1},\hat{Z}_{1})]\cdot(\hat{\theta}_{j}-\hat{\theta}_{\tau_{1}-1})+\frac{c}{2}\|\hat{\theta}_{j}-\hat{\theta}_{\tau_{1}-1}\|^{2}$ and $\Delta_{\tau_1-1,1}=0$.
At time $\tau_2:=\inf \{j>\tau_1: \Delta_{j,1} \ge \Delta_{j-1,1}\}$ we compute $\hat Z_2$ such that $ \EE[G(\hat \theta_{\tau_2-1}, Z)]= \EE[G(\hat \theta_{\tau_2-1}, \hat Z_2)]$, etc.

\subsubsection{Convergence and convergence rate} 
The above is the main structure of our first algorithm, denoted as 
Carath\'eodory Gradient descent (CaGD).
However, we add three further modifications. First, we stop as soon as the gradient or the value of the loss function is smaller than a given $\epsilon$ since this means we are already close enough to the minimum;
second, we bound the number of iterations between two recombinations by a constant, that is $\tau_{k+1}-\tau_k \le \operatorname{it\_max\_Ca} $, to {avoid pathological cases, see Theorem~\ref{th:general_conv} for more details};
third, we allow to match a general oracle direction $D_j$ at step $j$.
The choice $D_j=-\EE[G(\hat \theta_j, Z)]$ is the most relevant for this section, but the general oracle formulation allows to use more involved choices; e.g. momentum strategies in Section \ref{sec:BCD}.
This leads to Algorithm~\ref{algo:Ca-accel}.
In Algorithm \ref{algo:Ca-accel} we write $D_j(\{\theta\},Z)$ to express its dependencies on the data $Z$ and the sequence $\{ \theta\}$ computed
up to the step $j$, although it could depend also on the loss function $L$, in particular it could depend on its derivatives $G$, $H$, etc.

Theorem~\ref{th:general_conv} shows that it converges whenever we match oracle directions.%

\begin{algorithm}\caption{Carath\'eodory Sampling Acceleration}\label{algo:Ca-accel}
\begin{algorithmic}[1]
  \State{Initialize $\hat \theta_0 $}
	\State{$j \gets 1$, $k \gets 0$}\Comment{$j$ counts steps, $k+1$ the number of recombinations}
  \State{$\tau_0 \gets 0$} \Comment{$\tau_k$ is the step we made with the $(k+1)$th recombination}
  \State{{$\Delta_{\tau_0,0} \gets 0$}}
  \State{{$\operatorname{Grad}_0\gets  \EE[G(\hat \theta_{\tau_{0}}, Z)]$}}
     \While{{($\|\operatorname{Grad}_{\tau_k} \|>\epsilon_1$\textbf{ or }$|L(\hat\theta_{\tau_k},Z) |>\epsilon_2$)} \textbf{and} $j\leq$ it\_max }
     \State{{Compute $\hat Z_k$ such that $\ensuremath{\EE[D_{\tau_{k}}(\{\hat{\theta}\},\hat{Z}_{k})]=\EE[D_{\tau_{k}}(\{\hat{\theta}\},Z)]}$}}\label{step:compute_meas}
     \Comment{%
     Reduce $Z$ to $\hat Z_k$}
     \While{$\Delta_{j,k}<\Delta_{j-1,k}$ \textbf{and} $j-\tau_k\leq$ it\_max\_Ca }
     \State{$\hat \theta_{j}\gets \hat \theta_{j-1}+\gamma\EE[D_{j-1}(\{\hat \theta\},\hat Z_{k})]$  }\label{step:compute_reduced_gr}
     \State{$\Delta_{j,k} \gets \operatorname{Grad}_{\tau_k} \cdot (\hat{\theta}_{j}-\hat{\theta}_{\tau_{k}}) + \frac{c}{2} \|\hat{\theta}_{j} - \hat{\theta}_{\tau_{k}}\|^{2}$}\label{step:statistic_control}
     \State{$j\gets j+1$ }%
     \EndWhile
     \If{$j-\tau_k \not= \operatorname{it\_max\_Ca}$ }\label{step:strange_if}
       \State{$\tau_k, j \gets j-1$}
     \Else \State{$\tau_k, j \gets j$}
     \EndIf
     \State{{$\operatorname{Grad}_{\tau_k}\gets  \EE[G(\hat \theta_{\tau_{k}}, Z)]$, $\quad\Delta_{\tau_k,k}\gets0$}}\label{step:full_gradient}
     \State{$k \gets k+1$}
     \EndWhile
     \textbf{ and return} $j$, $\hat \theta_j$
\end{algorithmic}
\end{algorithm}

\begin{theorem}\label{th:general_conv}
{Let $L_\theta$ be convex, twice differentiable in $\theta$ and its gradient $G$ be Lipschitz.
If the quantities $\{\theta_j\}$ defined as
\[
\theta_{j}-\theta_{j-1}=\gamma\EE[D_{j-1}(\{\theta\},Z)]
\]
converge to the minimum $\theta^*$,  i.e.~$\lim_{j\to\infty}\theta_j =\theta^*$, then also the sequence of $\{\hat\theta\}$ computed via Algorithm \ref{algo:Ca-accel} converges to $\theta^*$, $\lim_{j\to\infty}\hat\theta_j =\theta^*$. }
\end{theorem}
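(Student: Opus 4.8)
The plan is to use the control statistic $\Delta_{j,k}$ to show that the empirical loss $\ell(\theta):=\EE[L_\theta(Z)]$ is non-increasing along $\{\hat\theta_j\}$; that each block between two recombinations decreases $\ell$ by at least as much as one \emph{exact} oracle step taken from its base point; and that, since the cap $\operatorname{it\_max\_Ca}$ forces infinitely many recombinations, these decreases are summable, so the exact oracle steps out of the base points become asymptotically negligible, which together with the hypothesis on $\{\theta_j\}$ pins the limit down to $\theta^*$. To fix notation, write $\ell(\theta):=\EE[L_\theta(Z)]$, so that $\nabla\ell=\EE[G(\cdot,Z)]$, $\nabla^2\ell=\EE[H(\cdot,Z)]$ and $\nabla^2\ell(\theta)\preceq cI$ for all $\theta$; $\ell$ is convex, and we take $\theta^*$ to be its unique minimiser so that the sublevel sets of $\ell$ are bounded. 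Let $0=\tau_0<\tau_1<\tau_2<\cdots$ be the recombination steps produced by Algorithm~\ref{algo:Ca-accel} and $p_k:=\hat\theta_{\tau_k}$ the base points; assuming the run does not stop early (otherwise there is nothing to show), one has $\tau_k\uparrow\infty$ since $1\le\tau_{k+1}-\tau_k\le\operatorname{it\_max\_Ca}$. The defining property of the recombination, $\EE[D_{\tau_k}(\{\hat\theta\},\hat Z_k)]=\EE[D_{\tau_k}(\{\hat\theta\},Z)]=:d_k$, means that the first step of block $k$ is the exact oracle update from its base point: $\hat\theta_{\tau_k+1}-p_k=\gamma d_k$.

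I would first establish the descent property. By Taylor's formula with Lagrange remainder and $\nabla^2\ell\preceq cI$, for every $j$ in block $k$,
\[
\ell(\hat\theta_j)\ \le\ \ell(p_k)+\nabla\ell(p_k)\cdot(\hat\theta_j-p_k)+\tfrac{c}{2}\|\hat\theta_j-p_k\|^2\ =\ \ell(p_k)+\Delta_{j,k},
\]
using that $\operatorname{Grad}_{\tau_k}=\nabla\ell(p_k)$. The inner loop runs only while $\Delta_{j,k}<\Delta_{j-1,k}$, and $\Delta_{\tau_k,k}=0$, so $\Delta_{j,k}<0$ at every step $j>\tau_k$ it reaches; in particular $\Delta_{\tau_{k+1},k}\le\Delta_{\tau_k+1,k}<0$ (a block makes at least one step once $\gamma$ is small enough that $\Delta_{\tau_k+1,k}<0$, which is checked below). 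Hence $\ell(p_{k+1})\le\ell(p_k)+\Delta_{\tau_k+1,k}\le\ell(p_k)$, and more generally $\ell(\hat\theta_j)\le\ell(p_{k(j)})$ with $k(j)$ the block containing $j$. Thus $\ell(\hat\theta_j)$ is non-increasing and bounded below by $\ell(\theta^*)$, so it converges to some $\ell_\infty\ge\ell(\theta^*)$, and telescoping gives $\sum_k(-\Delta_{\tau_k+1,k})\le\ell(p_0)-\ell_\infty<\infty$, whence $\Delta_{\tau_k+1,k}\to0$.

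Next I would identify $\ell_\infty$. From the first-step identity, $\Delta_{\tau_k+1,k}=\gamma\,\nabla\ell(p_k)\cdot d_k+\tfrac{c\gamma^2}{2}\|d_k\|^2$. In the principal instantiation $D_j=-\EE[G(\hat\theta_j,Z)]$ one has $d_k=-\nabla\ell(p_k)$ and $\Delta_{\tau_k+1,k}=-\gamma(1-\tfrac{c\gamma}{2})\|\nabla\ell(p_k)\|^2$, which for $\gamma<2/c$ is strictly negative whenever $\nabla\ell(p_k)\ne0$ (so the blocks are non-degenerate until the tolerance is met) and forces $\nabla\ell(p_k)\to0$. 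By convexity, boundedness of the sublevel set $\{\ell\le\ell(p_0)\}$, and uniqueness of $\theta^*$, this gives $p_k\to\theta^*$, hence $\ell(p_k)\to\ell(\theta^*)$, so $\ell_\infty=\ell(\theta^*)$; then the squeeze $\ell(\theta^*)\le\ell(\hat\theta_j)\le\ell(p_{k(j)})\to\ell(\theta^*)$, together with the fact that the nested compact sublevel sets $\{\ell\le\ell(\theta^*)+\varepsilon\}$ shrink to $\{\theta^*\}$, yields $\hat\theta_j\to\theta^*$. For a general oracle $D_j$ one observes, via the descent lemma, that $\Delta_{\tau_k+1,k}$ upper-bounds the gain $\ell(p_k+\gamma d_k)-\ell(p_k)$ of one exact oracle step from $p_k$; its vanishing says these exact steps stop improving $\ell$, and invoking the hypothesis that the ideal iteration converges to $\theta^*$ — in the quantitative, sufficient-decrease/Lyapunov form it carries in practice — gives $p_k\to\theta^*$, and one concludes as above.

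The main obstacle is precisely this last identification for a general oracle. Since $D_j$ depends on the whole past trajectory $\{\hat\theta\}$, which between recombinations differs from the ideal trajectory $\{\theta\}$, one cannot literally embed $(\hat\theta_j)$ into the ideal iteration; what makes the argument go through is that the control statistic is formed from the \emph{true} gradient $\EE[G(\hat\theta_{\tau_k},Z)]$ at the base point, and that recombination (line~\ref{step:compute_meas}) re-synchronises the search direction with the full measure, so each block still realises one genuine oracle step of guaranteed descent. Converting ``per-block oracle gain $\to0$'' into ``$p_k\to\theta^*$'' is where the convergence hypothesis must be used with some uniformity; this, together with the routine bookkeeping around the roll-back in line~\ref{step:strange_if}, the non-degeneracy of blocks for small $\gamma$, and the passage from convergence of $\ell(\hat\theta_j)$ to convergence of $\hat\theta_j$, is where the care is needed.
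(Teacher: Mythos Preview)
Your core argument---the Taylor upper bound $\ell(\hat\theta_j)\le\ell(p_k)+\Delta_{j,k}$ together with the monotonicity of $\Delta_{\cdot,k}$ enforced by the inner loop---is exactly the paper's mechanism. Where you diverge is in how you extract convergence from it. The paper argues by dichotomy: it observes that $\tau_k-\tau_{k-1}\ge 2$ always (because the first step after a recombination is an \emph{exact} oracle step, so $\Delta_{1,k}\le 0$), and then says that when $\tau_k-\tau_{k-1}=2$ the algorithm has simply taken one step of the ideal iteration $\theta_j-\theta_{j-1}=\gamma\EE[D_{j-1}(\{\theta\},Z)]$, while when $\tau_k-\tau_{k-1}>2$ the loss strictly decreases; convergence is then claimed from the hypothesis on the ideal iteration. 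You instead telescope $\sum_k(-\Delta_{\tau_k+1,k})<\infty$ to force $\Delta_{\tau_k+1,k}\to0$, and then, specializing to $D_j=-G$, read off $\nabla\ell(p_k)\to0$ and close via compactness of sublevel sets.

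Your route is more quantitative and, for the case $D_j=-G$, more complete than the paper's sketch---but note that you are silently adding hypotheses (uniqueness of $\theta^*$, bounded sublevel sets) that are not in the statement; under mere convexity and Lipschitz gradient these need not hold. Conversely, the paper's dichotomy handles the general oracle $D$ in one line, but only at the level of a sketch: the sentence ``if $\tau_k-\tau_{k-1}=2$ we follow the directions $D_j(\{\theta\},Z)$ which converge by hypothesis'' implicitly identifies a step taken from $\hat\theta_{\tau_{k-1}}$ along $\EE[D(\{\hat\theta\},Z)]$ with a step of the ideal iteration, which is precisely the trajectory-dependence issue you flag as the main obstacle. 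So your reservation about the general-$D$ case is well founded; the paper does not resolve it either, and what it buys with its brevity is essentially what you call ``the hypothesis used with some uniformity.'' A minor slip: ``$\ell(\hat\theta_j)$ is non-increasing'' does not follow from $\ell(\hat\theta_j)\le\ell(p_{k(j)})$ and $\ell(p_{k+1})\le\ell(p_k)$; within a block $\ell$ need not be monotone. What you actually need (and have) is $\limsup_j\ell(\hat\theta_j)\le\lim_k\ell(p_k)$, which suffices for the squeeze.
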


\begin{proof}%
Thanks to the hypothesis there exists $c$ s.t.%
\begin{align}
\EE[L(\hat{\theta}_{j},Z)]\!=&\EE[L(\hat{\theta}_{0},Z)]\!+\!\EE[G(\hat{\theta}_{0},Z)]\!\cdot\!(\hat{\theta}_{j}\!-\!\hat{\theta}_{0})\!+\!\dfrac{1}{2}(\hat{\theta}_{j}\!-\!\hat{\theta}_{0})^{\top}\!\!\cdot\!\EE[H(\bar{\theta},Z)]\!\cdot\!(\hat{\theta}_{j}\!-\!\hat{\theta}_{0})\\\le&\EE[L(\hat{\theta}_{0},Z)]+\EE[G(\hat{\theta}_{0},Z)]\cdot(\hat{\theta}_{j}-\hat{\theta}_{0})+\frac{c}{2}\|\hat{\theta}_{j}-\hat{\theta}_{0}\|^{2},\quad\text{ for }j\geq0,
\end{align}
where $\bar\theta$ is a convex combination of $\hat\theta_{j}$ and $\hat\theta_{0}$.
It is now easy to see that we have a condition to check, in order to rebuild the measure:
we update the measure after $\tau_1$ steps, where
\begin{align}
\tau_{1}:=&\inf\{j\geq1:\Delta_{j,0}\geq\Delta_{j-1,0}\}\\
\Delta_{j,0}:=&\EE[G(\hat{\theta}_{0}, Z)]\cdot(\hat{\theta}_{j}-\hat{\theta}_{0})+\frac{c}{2}\|\hat{\theta}_{j}-\hat{\theta}_{0}\|^{2},
\end{align}
where $\Delta_{0,0}=0$.
We have that $\{\Delta_{0,0}, \Delta_{1,0},\ldots\Delta_{\tau_1-1,0}\}$ is a negative decreasing sequence and therefore
\[
\EE[L(\hat{\theta}_{\tau_{1}-1},Z)]\leq\EE[L(\hat{\theta}_{0},Z)].
\]
In particular, note that $\Delta_{1,0}\leq0$, since $\hat{\theta}_{1}=\theta_{1}:=\theta_{0}+\gamma\EE[D_{0}(\{\theta\},Z)]$, thanks to Theorem \ref{th:cath} and the definition of $\hat Z_0$, therefore
 $\tau_1\geq 2$.  $\tau_1-1 = 1 $ means that the reduced r.v. $\hat Z_0$ computed has been useless, i.e. we have done only one step with the reduced measure that we could have done directly using $\EE[D_{0}(\{\theta\},Z)]$ without computing the reduced measure.

The reasoning can be easily generalized: we can define for $k> 1$, and 
$j \geq \tau_{k-1}$
\begin{align}
\Delta_{j,k}:=&\EE[G(\hat{\theta}_{\tau_{k}-1}, Z)]\cdot(\hat{\theta}_{j}-\hat{\theta}_{\tau_{k}-1})+\frac{c}{2}\|\hat{\theta}_{j}-\hat{\theta}_{\tau_{k}-1}\|^{2}\\\tau_{k}:=&\inf\{j\geq\tau_{k-1}:\Delta_{j,k-1}\geq\Delta_{j-1,k-1}\},
\end{align}
where $\Delta_{\tau_{k}-1,k}=0$. 
The proof of the convergence follows since if %
$\tau_k-\tau_{k-1}=2$ we follow the directions $D_j(\{\theta\},Z)$ which converge for the hypothesis, whereas if $\tau_k-\tau_{k-1}\geq 2$ the value of $L$ decreases,
\[
\EE[L(\hat{\theta}_{\tau_{k}-1},Z)]\leq\EE[L(\hat{\theta}_{\tau_{k-1}-1},Z)].
\]
Moreover, to avoid pathological cases, e.g. $\Delta_{1,k}<\Delta_{2,k}<\ldots\searrow-a,$ $a>0$ in which cases $L(\hat\theta_j,Z)$ cannot decrease ``enough'', we impose a number of maximum iterations that the Algorithm can do with the reduced measure.
\qed
\end{proof}
Theorem \ref{th:general_conv} can be easily extended to the case where the learning rate $\gamma$ is not fixed.
Theorem~\ref{th:convergence_CaGD} gives the convergence rate for the choice $D_j=-\EE[G(\hat \theta_j, Z)]$.

\begin{theorem}\label{th:convergence_CaGD}
{Let $L_\theta$ be convex, twice differentiable in $\theta$ and its gradient $G$ be Lipschitz.} Then if $D_j=-G(\hat\theta_j,Z)$, Algorithm \ref{algo:Ca-accel} converges to $\theta^*$,
and its convergence rate is
\begin{align}\label{eq:Ca-GDcomplexity}
|L(\hat\theta_{j})-L(\theta^{*})|\leq\frac{1}{2\gamma} J  \frac{\|\hat\theta_{0}-\theta^{*}\|^{2}}{j},
\end{align}
where $j$ is the number of iterations, and $J$ is the number of times the reduced measure is used
(as per Algorithm \ref{algo:Ca-accel}, we can conservatively bound $J<\operatorname{it\_max\_Ca}$).
\end{theorem}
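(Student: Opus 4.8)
\emph{Plan.} The target is precisely the textbook $O(1/j)$ rate of gradient descent for a convex objective whose gradient is $c$-Lipschitz (here $c$ is the constant with $v^{\top}H(\theta,z)v\le c\|v\|^{2}$), merely slowed by the factor $J$ because most steps use the reduced measure. I would therefore run the classical argument on the subsequence of \emph{genuine} gradient steps — the iterate produced immediately after each recombination — and use the control statistic $\Delta_{j,k}$ to certify that the interspersed reduced-measure steps never undo progress. Throughout, take the learning rate admissible, $\gamma\le 1/c$.

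\textbf{Step 1: structure of the trajectory.} Let $a_{0}=\hat\theta_{0},a_{1},a_{2},\dots$ be the recombination anchors, i.e. the points at which Step~\ref{step:compute_meas} of Algorithm~\ref{algo:Ca-accel} builds $\hat Z_{k}$ with $\EE[G(a_{k},\hat Z_{k})]=\EE[G(a_{k},Z)]=:\nabla L(a_{k})$. The first step of phase $k$ is then an \emph{exact} gradient step, $b_{k}:=a_{k}-\gamma\nabla L(a_{k})$. Exactly as in the proof of Theorem~\ref{th:general_conv}, Taylor-expanding $\EE[L(\cdot,Z)]$ about $a_{k}$ with Hessian $\preceq cI$ gives $L(\hat\theta_{j})\le L(a_{k})+\Delta_{j,k}$ for every iterate $j$ of phase $k$; since in that phase $\Delta_{\cdot,k}$ starts at $0$ and is strictly decreasing, with first value $\Delta_{\tau_{k}+1,k}=-\gamma(1-\tfrac{c\gamma}{2})\|\nabla L(a_{k})\|^{2}\le -\tfrac{\gamma}{2}\|\nabla L(a_{k})\|^{2}$, we obtain both $L(\hat\theta_{j})\le L(a_{k})$ throughout phase $k$ and the one-step descent inequality
\[
L(a_{k+1})\ \le\ L(a_{k})-\tfrac{\gamma}{2}\,\|\nabla L(a_{k})\|^{2}.
\]
Discarding the last ``bad'' step and the cap $\operatorname{it\_max\_Ca}$ only shorten phases and leave this intact.

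\textbf{Step 2: telescoping on the anchors and counting iterations.} For the exact step $a_{k}\mapsto b_{k}$, expand $\|b_{k}-\theta^{*}\|^{2}=\|a_{k}-\theta^{*}\|^{2}-2\gamma\,\nabla L(a_{k})\cdot(a_{k}-\theta^{*})+\gamma^{2}\|\nabla L(a_{k})\|^{2}$, use convexity $\nabla L(a_{k})\cdot(a_{k}-\theta^{*})\ge L(a_{k})-L(\theta^{*})$, and use Step~1 as $\gamma^{2}\|\nabla L(a_{k})\|^{2}\le 2\gamma(L(a_{k})-L(a_{k+1}))$. This yields $2\gamma\,(L(a_{k+1})-L(\theta^{*}))\le\|a_{k}-\theta^{*}\|^{2}-\|b_{k}-\theta^{*}\|^{2}$. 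If the reduced-measure steps of phase $k$ do not increase the distance to $\theta^{*}$, i.e. $\|a_{k+1}-\theta^{*}\|\le\|b_{k}-\theta^{*}\|$ (automatic when the phase has length one, $a_{k+1}=b_{k}$), the right-hand side is $\le\|a_{k}-\theta^{*}\|^{2}-\|a_{k+1}-\theta^{*}\|^{2}$; summing over the $K$ completed phases and using that $(L(a_{k}))_{k}$ is non-increasing gives $L(a_{K})-L(\theta^{*})\le\|\hat\theta_{0}-\theta^{*}\|^{2}/(2\gamma K)$. Since the algorithm forces a recombination at least every $\operatorname{it\_max\_Ca}$ iterations, after $j$ iterations at least $K\ge j/J$ phases are complete, where $J$ bounds the number of reduced-measure steps between consecutive genuine steps ($J<\operatorname{it\_max\_Ca}$); combining with $L(\hat\theta_{j})\le L(a_{K})$ from Step~1 gives the claimed bound. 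Convergence $\hat\theta_{j}\to\theta^{*}$ then follows from $L(\hat\theta_{j})\to L(\theta^{*})$, or is simply inherited from Theorem~\ref{th:general_conv}, whose hypothesis holds because plain GD with $D_{j}=-\nabla L$ converges.

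\textbf{Main obstacle.} The only non-routine point is the inequality $\|a_{k+1}-\theta^{*}\|\le\|b_{k}-\theta^{*}\|$ used in Step~2: a reduced-measure step decreases $\EE[L(\cdot,Z)]$ (all that the control statistic certifies), but it is a gradient step of the \emph{perturbed} convex objective $\theta\mapsto\EE[L(\theta,\hat Z_{k})]$, whose minimiser $\theta^{*}_{k}$ need not equal $\theta^{*}$, so a priori it can move away from $\theta^{*}$. I would close this by exploiting that $\EE[L(\cdot,\hat Z_{k})]$ and $\EE[L(\cdot,Z)]$ share value \emph{and} gradient at $a_{k}$ — hence $b_{k}$ is also the first iterate of the reduced trajectory and every reduced step is non-expansive towards $\theta^{*}_{k}$ — and then controlling $\|\theta^{*}_{k}-\theta^{*}\|$; alternatively, tightening the control statistic so that it also caps the drift $\|a_{k+1}-b_{k}\|$ and absorbing it as a lower-order term, or tracking $b_{k}$ rather than $a_{k+1}$ and paying for it through non-expansiveness of the exact step. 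This distance-control step is where the real work lies; the remainder is the standard convex-GD telescoping with the iteration budget split into $j/J$ effective steps.
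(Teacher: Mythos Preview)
Your Steps~1--2 track the paper's proof closely: bound $L(\hat\theta_j)$ by the loss after the first (exact) step of its phase via the control statistic, apply convexity plus the descent lemma to that exact step to obtain $L(\hat\theta_j)-L(\theta^*)\le\tfrac{1}{2\gamma}\bigl(\|a_k-\theta^*\|^2-\|b_k-\theta^*\|^2\bigr)$, and then combine. The paper sums this inequality over all $j$ (picking up the weights $\tau_k-\tau_{k-1}$), pulls out their maximum as the factor $J$, and concludes by asserting
\[
\sum_k\bigl(\|\hat\theta_{\tau_k-1}-\theta^*\|^2-\|\hat\theta_{\tau_k}-\theta^*\|^2\bigr)\ \le\ \|\hat\theta_0-\theta^*\|^2.
\]
Your bookkeeping (sum over anchors and then invoke $K\ge j/J$) is a cosmetic variant of the same computation.

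The obstacle you isolate in your final paragraph is genuine, and it is precisely the step the paper passes over without comment: consecutive terms in the displayed sum do \emph{not} share an endpoint, because between $b_k=\hat\theta_{\tau_k}$ and the next anchor $a_{k+1}=\hat\theta_{\tau_{k+1}-1}$ sit the reduced-measure iterates, and the control statistic $\Delta_{\cdot,k}$ certifies only that $\EE[L(\cdot,Z)]$ decreases along them, not that $\|\cdot-\theta^*\|$ does. The paper supplies no argument for $\|a_{k+1}-\theta^*\|\le\|b_k-\theta^*\|$; it simply treats the sum as if it telescoped. So your plan is faithful to the paper's proof, and you have honestly located a gap that the paper itself leaves open --- none of the fixes you sketch (bounding $\|\theta^*_k-\theta^*\|$, tightening $\Delta$ to cap drift, or tracking $b_k$ instead of $a_{k+1}$) appear there.
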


\begin{proof}%
The convergence is a simple application of Theorem \ref{th:general_conv}.
We can show that Algorithm \ref{algo:Ca-accel} does not reduce the order of convergence of the standard GD. Let us call $\hat\theta_i$ the
sequence of weights obtained by Algorithm \ref{algo:Ca-accel} in chronological order %
\begin{align}
\{\hat{\theta}_{0},\hat{\theta}_{1},\ldots,\hat{\theta}_{\tau_{1}-1},\hat{\theta}_{\tau_{1}},\ldots,\hat{\theta}_{\tau_{2}-1},\hat{\theta}_{\tau_{2}},\ldots\},
\end{align}
where for $k>1$ ($k=1$) $\tau_k$ indicates the number of times we use the reduced measure computed using $\theta_{\tau_{k-1}-1}$ ($\theta_{0}$). Moreover, let us suppose that for any step $j$ we have a map $S$ that tell us
the step where we had recomputed the measure the last time, so $
S(j)=\max\{k: \tau_k\leq j\}$.
Let us recall that if the function is convex we have that
\[
L(\theta)\leq L(\theta^{*})+\nabla L(\theta)(\theta-\theta^{*})
\]
where $\theta^{*}$ is the minimum, moreover if $\{\theta_i\}$ are the weights computed using the standard GD, we can say that
\[
L(\theta_{i+1})\leq L(\theta_{i})-\frac{1}{2}\gamma\|\nabla L(\theta_{i})\|^{2},
\]
if $\gamma$ respects the usual conditions, i.e. $\gamma\leq 1/\text{Lip}(L)$, where $\text{Lip}(L)$ indicates the Lipschitz constant of $L$.
We know that $L(\hat{\theta}_{j})\leq L(\hat{\theta}_{\tau_{S(j)}-1})+\Delta_{j,\,S(j)}$
therefore, since $\Delta_{j,\,S(j)}\leq 0$%
\begin{align}
L(\hat{\theta}_{j})\leq L(\hat{\theta}_{\tau_{S(j)}})\leq L(\theta^{*})+\nabla L(\hat{\theta}_{\tau_{S(j)}-1})(\hat{\theta}_{\tau_{S(j)}-1}-\theta^{*})-\frac{1}{2}\gamma\|\nabla L(\hat{\theta}_{\tau_{S(j)}-1})\|^{2},
\end{align}
which rearranging the terms and using that $\hat{\theta}_{\tau_{S(j)}}-\hat{\theta}_{\tau_{S(j)}-1}=\EE[G(\hat{\theta}_{\tau_{S(j)}-1}, Z)]=\EE[G(\hat{\theta}_{\tau_{S(j)}-1},\hat{Z}_{\tau_{S(j)}-1})]$ becomes
\begin{align}
L(\hat{\theta}_{j})-L(\theta^{*})\leq\frac{1}{2\gamma}\left(\|\hat{\theta}_{\tau_{S(j)}-1}-\theta^{*}\|^{2}-\|\hat{\theta}_{\tau_{S(j)}}-\theta^{*}\|^{2}\right).
\end{align}
Thus, %
\begin{align}
\sum_{l=1}^{j}L(\hat{\theta}_{l})-L(\theta^{*})\leq&\frac{1}{2\gamma}\sum_{l=1}^{j}\left(\|\hat{\theta}_{\tau_{S(l)}-1}-\theta^{*}\|^{2}-\|\hat{\theta}_{\tau_{S(l)}}-\theta^{*}\|^{2}\right)\\=&\frac{1}{2\gamma}\sum_{k:\tau_{k}\leq j}\left(\tau_{k}-\tau_{k-1}\right)\left(\|\hat{\theta}_{\tau_{k}-1}-\theta^{*}\|^{2}-\|\hat{\theta}_{\tau_{k}}-\theta^{*}\|^{2}\right)\\\leq&\frac{1}{2\gamma}\max_{k:\tau_{k}\leq j}\left\{ \tau_{k}-\tau_{k-1}\right\} \sum_{k:\tau_{k}\leq j}\left(\|\hat{\theta}_{\tau_{k}-1}-\theta^{*}\|^{2}-\|\hat{\theta}_{\tau_{k}}-\theta^{*}\|^{2}\right)\\\leq&\frac{1}{2\gamma}\max_{k:\tau_{k}\leq j}\left\{ \tau_{k}-\tau_{k-1}\right\} \|\hat{\theta}_{0}-\theta^{*}\|^{2}.
\end{align}
Therefore it holds that
\begin{align}
L(\hat{\theta}_{j})-L(\theta^{*})\leq\frac{1}{j}\sum_{l=1}^{j}L(\hat{\theta}_{l})-L(\theta^{*})\leq\frac{1}{2\gamma}\max_{k:\tau_{k}\leq j}\left\{ \tau_{k}-\tau_{k-1}\right\} \frac{\|\hat{\theta}_{j}-\theta^{*}\|^{2}}{j}.
\end{align}
\qed
\end{proof}
First, note that the bound in
Equation~\eqref{eq:Ca-GDcomplexity}
assumes that when we use the reduced measures the objective function does not decrease, thanks to
Equation~\eqref{eq:Delta_def}.
Secondly, note that
the constant $c$ in Equation~\eqref{eq:Delta_def} in practice might be unknown and expensive to compute, and when known it might be quite conservative.
In our implementation we use an approximation of the second derivative,
so that $\Delta_{j,k}$ in Equation~\eqref{eq:Delta_def} becomes
\begin{align}
\Delta_{j,k}:=\ensuremath{\EE[G(\hat{\theta}_{\tau_{k}},\hat{Z}_{k})]\cdot(\hat{\theta}_{j}-\hat{\theta}_{\tau_{k}})+\frac{1}{2}(\hat{\theta}_{j}-\hat{\theta}_{\tau_{k}})^{\top}\cdot\mathcal{H}_{k}\cdot(\hat{\theta}_{j}-\hat{\theta}_{\tau_{k}})},\quad j\ge\tau_{k},
\end{align}
where $ \mathcal{H}_{k}:=\left[\EE[G(\hat{\theta}_{\tau_{k}},\hat{Z}_{k})]-\EE[G(\hat{\theta}_{\tau_{k}-1},Z)]\right]^{\top}\cdot\left[1/(\hat{\theta}_{\tau_{k}}-\hat{\theta}_{\tau_{k}-1})\right] $
and
$[1/x]$ denotes a vector whose elements are the reciprocals of those in $[x]$.
To compute the terms $\Delta_{(\cdot,\cdot)}$ we modify Algorithm \ref{algo:Ca-accel} doing two iterations where $\EE [G(\theta,Z)]$ is computed -- see Algorithms \ref{algo:CaBCD_GS} %
and \ref{algo:CaBCD_random} %
in Section~\ref{sec:BCD}.
We do not discuss how to optimally select $\gamma$,  since there exists a broad literature about the optimal selection of the step~\cite{Nesterov2018}.

\subsection{A first comparison of CaGD and GD: logistic regression}\label{sec:exp_GD}
Already comparing the complexity of CaGD, Algorithm \ref{algo:Ca-accel}, to standard GD is not trivial, since the number of recombinations steps is not known a-priori (the times $\tau_k$ specify a recombination compuation depend on the data).
Furthermore, the worst-case complexity of the recombination algorithm itself is much worse than its average complexity, see~\cite{Cosentino2020}.
Hence, the intuition remains that for a sufficiently large number of samples $N$ and low-dimensional $\theta$, the total cost of computing the recombinations is negligible compared to evaluating the full gradient in each descent step.
We present three numerical experiments to test this intuition.
We use classic logistic regression for binary classification (it is easy to check that the assumptions of Theorem \ref{th:convergence_CaGD} are fulfilled in this case, see \cite[Exercise 8.3]{Murphy2012}) and use synthetic data which allows to study various regimes of $N$.
We run both GD and CaGD until either the norm of the gradient is less than \num{1e-3}, or the number of iterations is greater than \num{1e4}.

\begin{figure}[hbt!]
    \centering
         \includegraphics[height=3.3cm,width=0.32\textwidth]{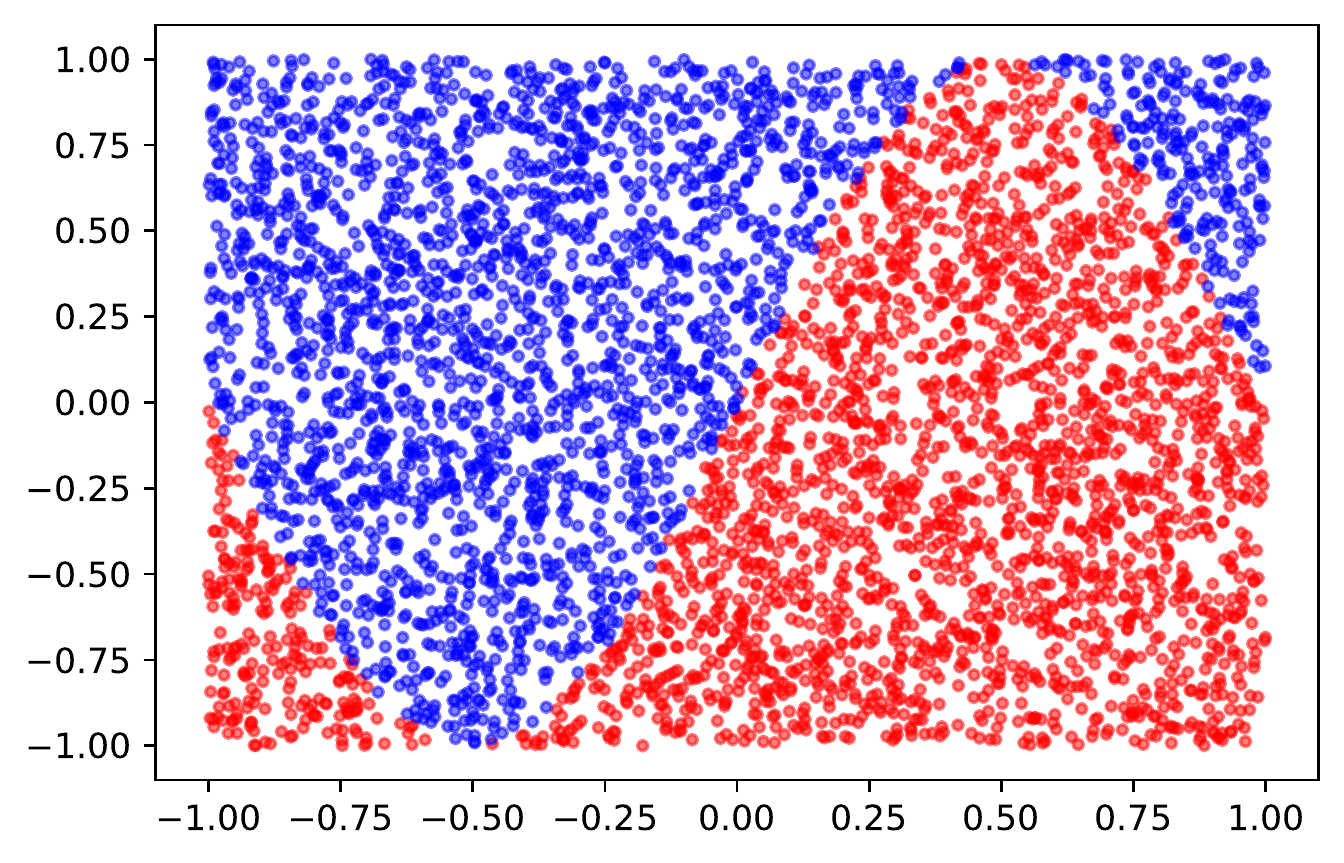}
         \includegraphics[height=3.3cm,width=0.32\textwidth]{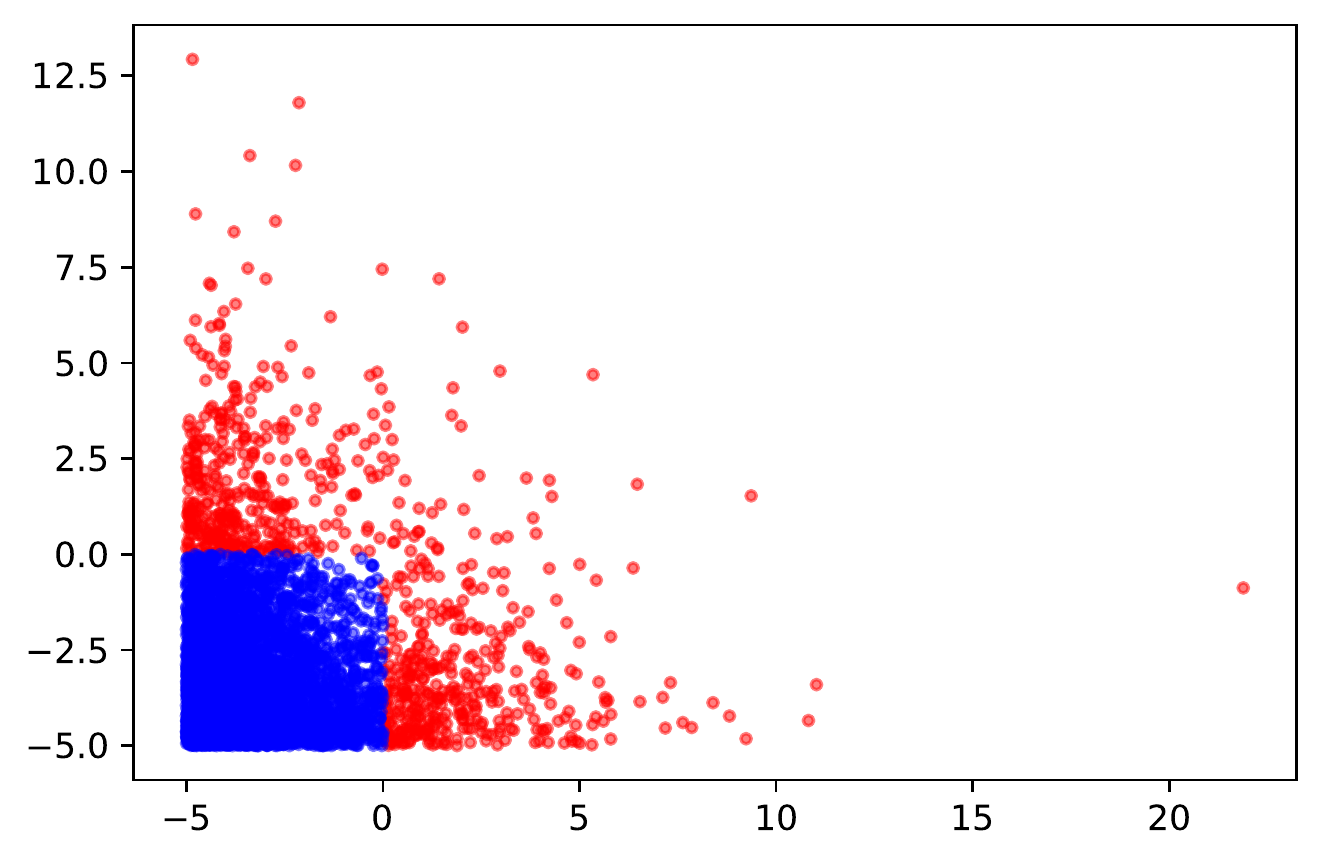}
         \includegraphics[height=3.3cm,width=0.32\textwidth]{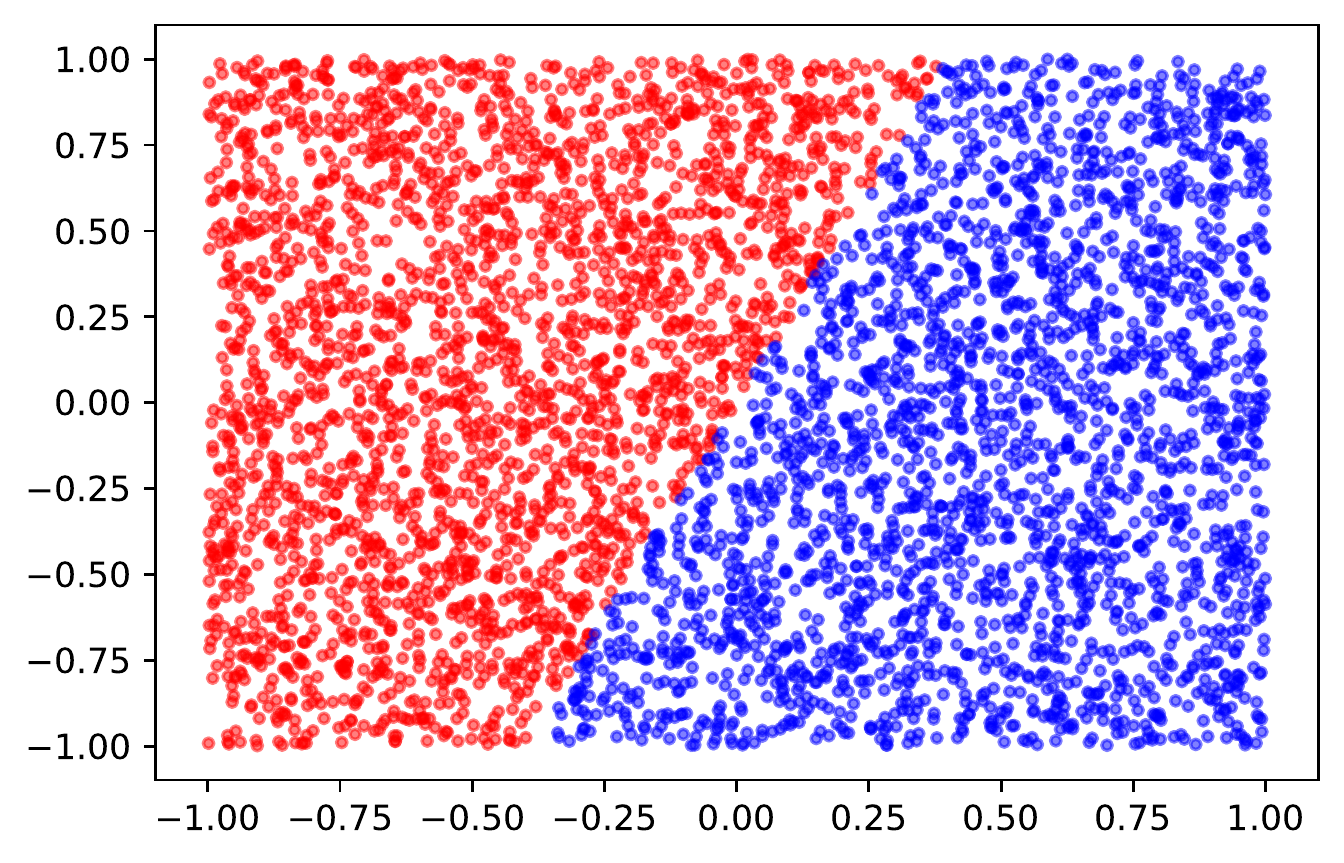}

        \includegraphics[height=3.3cm,width=0.315\textwidth]{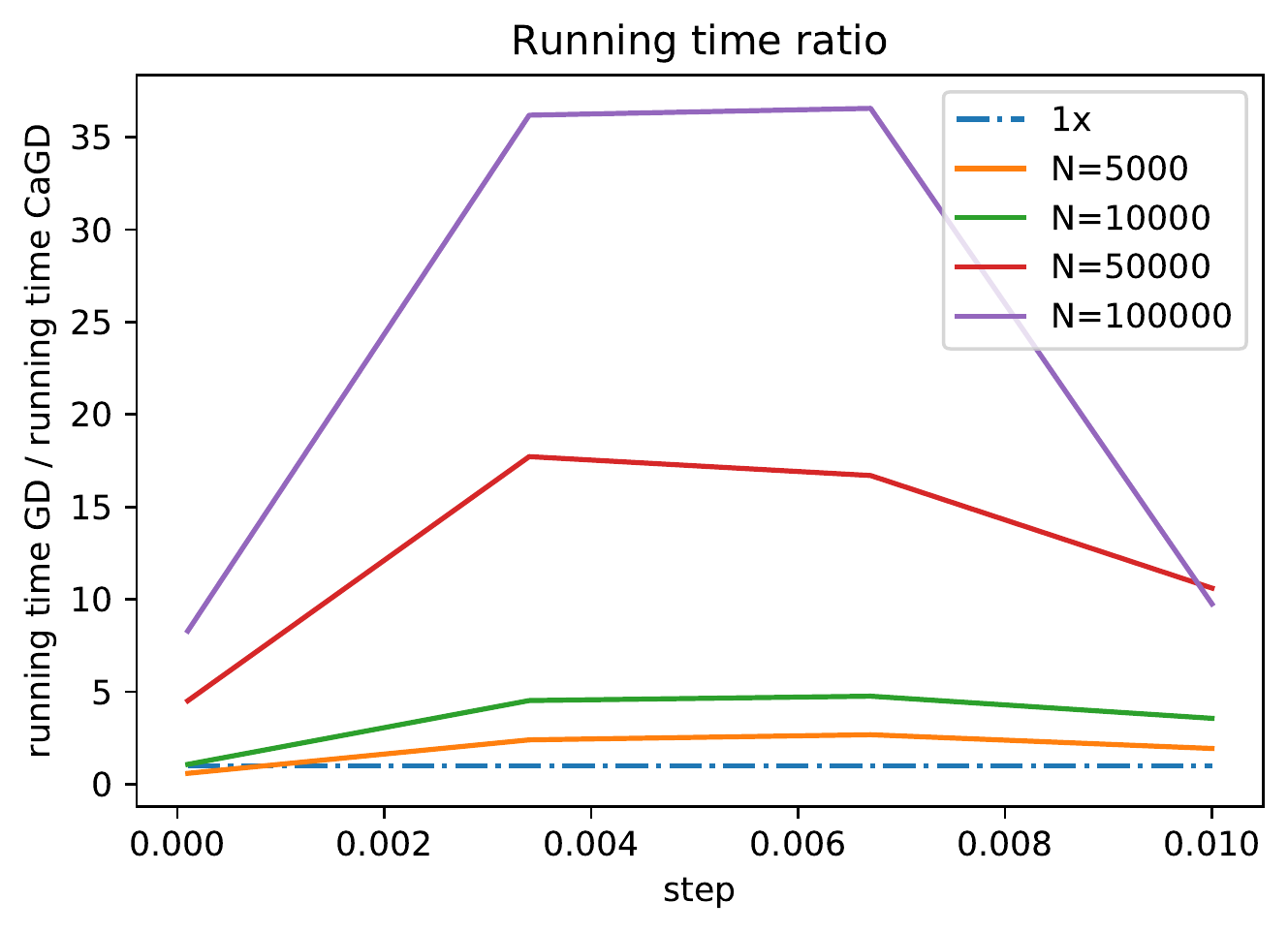}\hspace{1.5mm}
        \includegraphics[height=3.3cm,width=0.315\textwidth]{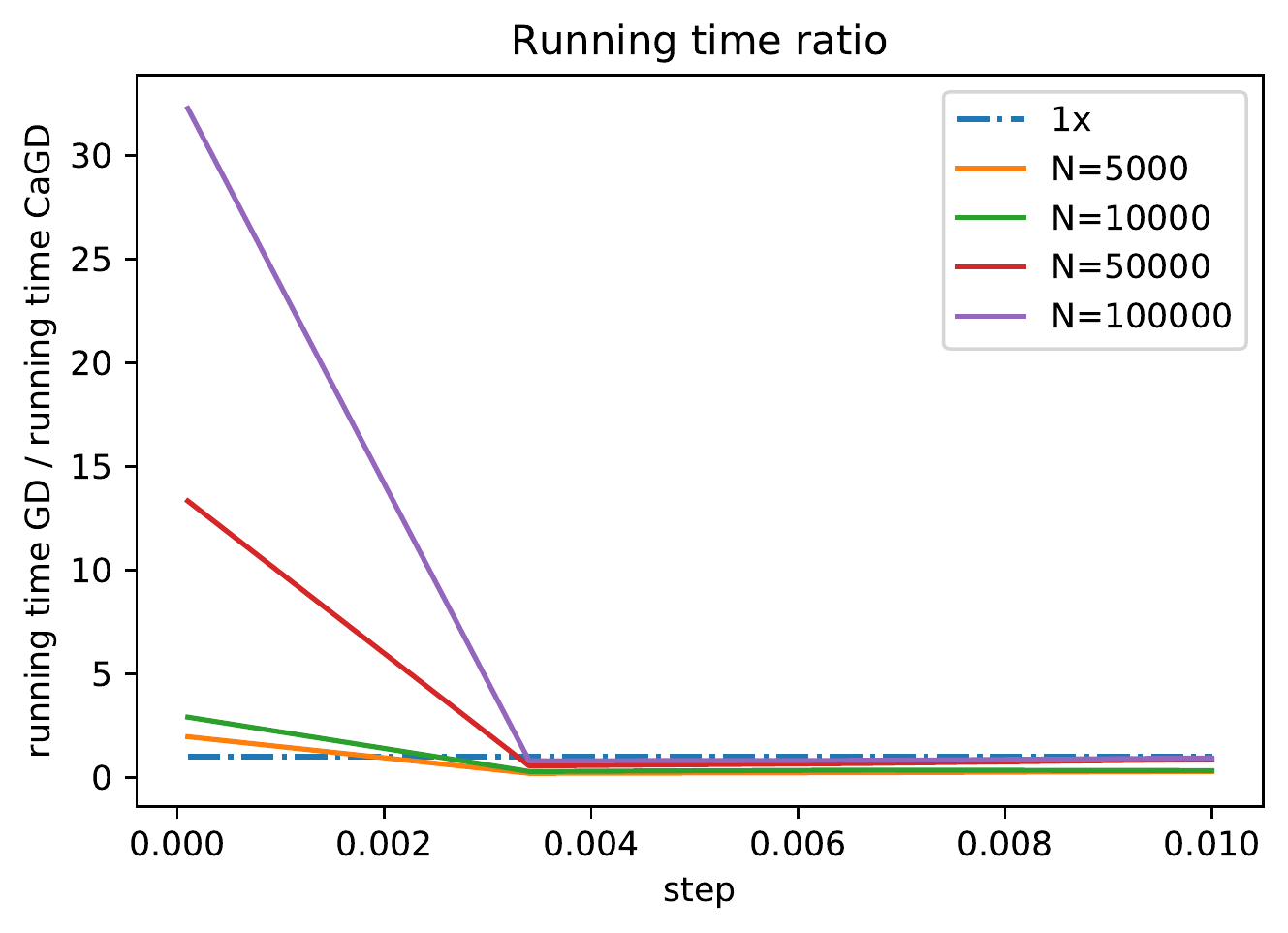}\hspace{1.5mm}
        \includegraphics[height=3.3cm,width=0.315\textwidth]{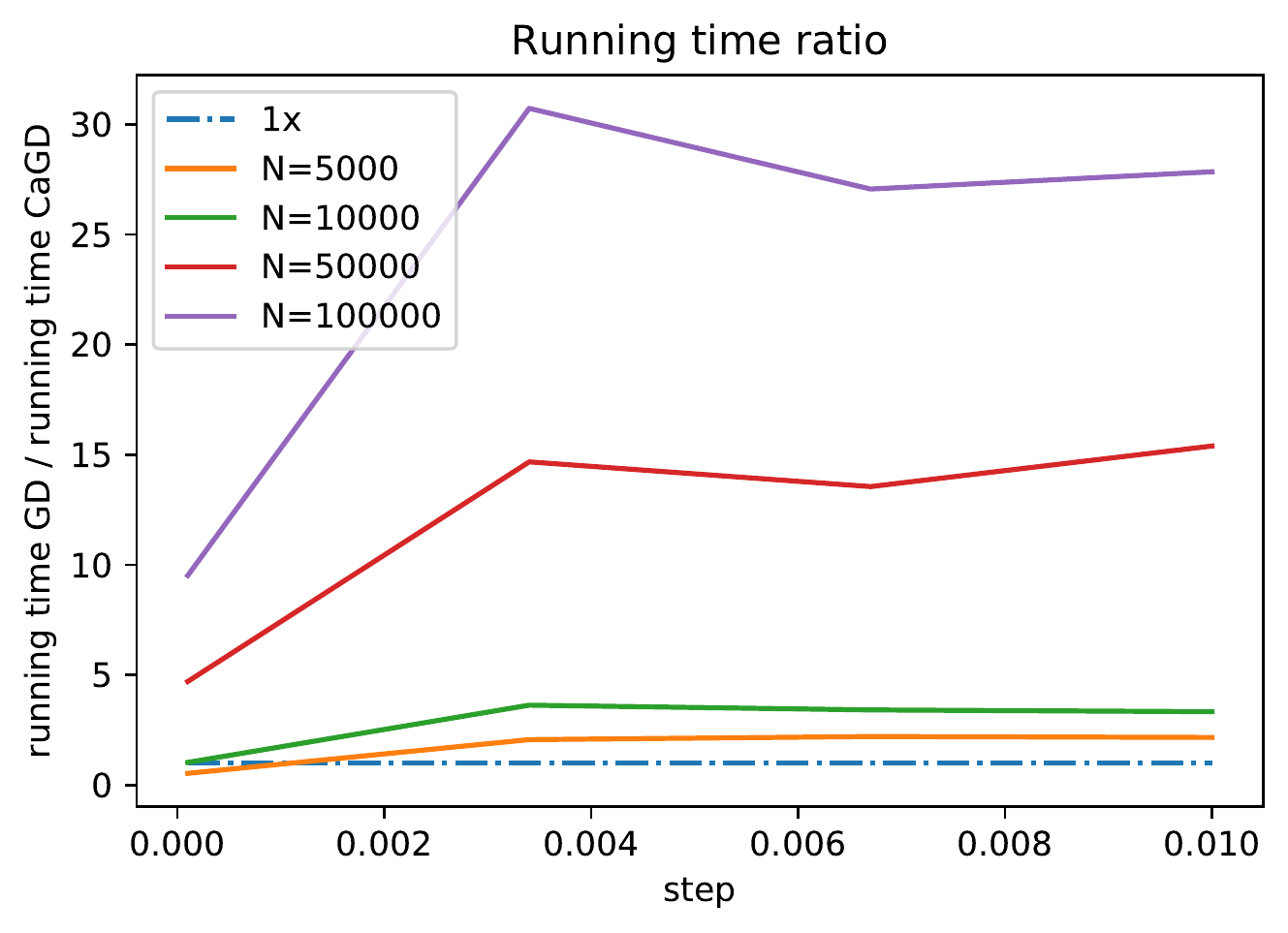}

        \caption{
          Synthetic data generated by sampling from
          (i) a uniform distribution, classified with a sine function,
          (ii) a shifted exponential, classified as $1$ in the third octant, and $0$ otherwise,
          (iii) a uniform distribution, classified with a logistic model with parameter $(-5;2)$.
          The top row shows samples with $N = \num{5000}$.
 The bottom row shows the ratios of running times between standard GD and CaGD, as a function of the step size for various sample sizes $N$.
    }
    \label{fig:2d}
\end{figure}
The results are shown in Figure \ref{fig:2d} and indicate that the improvement in the running time is up to $35$-fold.
Generally the improvement increases as the number of points $N$ increases and when the step size is small (the step of the GD must be small enough for the algorithm to converge to a minimum).
{Another advantageous observation is that  CaGD reaches lower gradient values than GD, because in Algorithm \ref{algo:Ca-accel} the ``true'' gradient $\EE[G(\theta,Z)]$ is only computed at the step~\ref{step:full_gradient}, but we modify $\hat\theta_j$ at step~\ref{step:compute_reduced_gr}.}
In these instances we have employed it\_max\_Ca$=\max\{10/\text{step},10^{\,4}\}$.

Of course, the real benchmarks are SGD variants like SAG and ADAM.
However, CaGD in its simple form above is not competitive to such SGD variants since its computational bottleneck is that the recombination step scales cubically in the dimension $n$ of $\theta$ which makes it infeasible for many datasets.   
In Section~\ref{sec:BCD} below we combine CaGD with BCD to resolve this issues.
This then results in a competitive alternative to SGD variants like SAG and ADAM.
\section{Carath\'eodory Block Coordinate Descent}\label{sec:BCD}
The computational bottleneck of CaGD is in step~\ref{step:compute_reduced_gr}, where a recombination algorithm is run to compute $\hat Z$.
As mentioned before, the recombination algorithm \cite{maria2016a} has the worst case complexity $O(Nn+n^3\log(N/n))$ where $n$ is the dimension of $\theta$ and $N$ the number of samples.
However, unlike the other recombination algorithms, the greedy algorithm in \cite{Cosentino2020} is designed to have a much lower average complexity in $N$. 
Unfortunately, in terms of the dimension $n$, it suffers like all the other recombination algorithms from the cubic term $n^3$ which makes CaGD not suitable for high-dimensional problems.
In this Section, we combine CaGD with Block Coordinate Descent (BCD) \cite{Nesterov2012,Wright2015,Richtarik2016,Nutini2017,Beck2013,Csiba2017} to resolve this cubic complexity.
This leverages the strengths of BCD in terms of compuational efficiency and of CaGD in terms of low variance estimators for the expected gradient estimate.

\subsection{From Block Coordinate Descent to Carath\'eodory Block Coordinate Descent}
BCD selects in every descent step a small subset of the $n$ coordinates of $\theta$. This, in turn allows us to apply the recombination algorithm to a small number of coordinates, typically $n$ is between $2$ and $5$ so that the cubic complexity in number of coordinates becomes negligible.
The core idea of BCD is to update only a subset of the coordinates at every step,%
\begin{align}
\theta_{j+1} = \theta_{j}-\gamma \EE[G^{(B({j}))} (\theta,Z)],
\end{align}
where ${B}({j})$ denotes a set of coordinates, and $G^{(B({j}))}$ denotes the function that returns the gradient for the coordinate in ${B}({j})$,
and sets it to be equal to $0$ for the other coordinates.
If the problem we want to solve is of the form
\begin{align}
\min_\theta \, L_\theta = \min_\theta (f(\theta)+g(\theta)),
\end{align}
where $f$ is convex and $g$ is (block) separable, i.e.~$g=\sum_{m=1}^b g_m$, $g_m:\R^{n_m}\to\R$ and $\sum_m n_m \leq n$, $b\leq n$,
then BCD converges to a minimum and the rate of convergence is that of standard GD, up to some constants depending on different factors, e.g.~the number of directions we update at any step, the strategy to choose such directions, the separability of the objective function $g$; see~\cite{Wright2015,Richtarik2016} for a detailed study.
Notable examples of optimisation problems with functions $(f,g)$ abiding by the previous condition are least-squares problems with LASSO regularisation~\cite{Richtarik2016,Nesterov2012,Fercoq2015}, which is where BCD appears to be more effective (we analyse this problem in the next subsection).
A well-studied aspect of BCD is its parallelisation~\cite{Fercoq2015,Liu2015,Bradley2011,You2016,Scherrer2012},
which can be studied in terms of the spectral radius of the data~\cite{Bradley2011}.
In the following, we focus on simple applications of BCD %
to highlight the improvements that are due to CaBCD, rather than BCD optimizations.

\subsection{Update rules} 
An important aspect is how to select the directions
for the descent step~\cite{Nutini2017,Li2015,Richtarik2016a,Dhillon2011,Glasmachers2013,Lei2016,Li2018,Qu2016,Qu2016a,Beck2013}, and how many directions to select: cyclic versus acyclic, deterministic versus random, or via the Gauss-Southwell (GS) rule (discussed below).
The main differences amongst these options hinges on whether or not we can afford to compute the full gradient:
if we can, then the GS rule is expected to be the best strategy;
if we cannot, then the random acyclic strategy seems to perform better than
the deterministic cyclic ones \cite{Lee2018}.
Moreover, these strategies can be implemented in parallel, exploiting multiple processors.
We focus on the following two acyclic strategies, whose respective procedures are presented in Algorithm~\ref{algo:CaBCD_GS} and~\ref{algo:CaBCD_random}; {for a detailed comparison see~\cite{Nutini2015,Lee2018}}:
\begin{itemize}
\item \textit{Modified Gauss-Southwell (GS).}
If we can compute the full gradient,
then we can select directions where the gradient is larger in absolute value.
A rule of thumb is to consider only a percentage of the ``total'' value of the gradient (in our experiments we consider 75\%):
\begin{enumerate}[label=(\roman*)]
\item Let us call $\nabla_S$ the vector with the absolute value of the directions of the GD sorted in descending order, i.e.
$\big|\nabla L^{\left(\nabla_{S}^{(r)}\right)}\big|\geq\big|\nabla L^{\left(\nabla_{S}^{(q)}\right)}\big|$ if $r\leq q$;
\item We consider the directions where the gradient is bigger in absolute value, namely the first $\hat n$ components of $\nabla_S$, where
\[
\hat n := \inf \left\{ q:\sum_{jr=1}^{q} \left|\nabla L^{\left(\nabla_{S}^{(r)}\right)}\right|>\text{Percentage}\right\};
\]
\item We split the $\hat n$ directions in $b=\hat n/s$ blocks of size $s$, respecting the ordering.
\end{enumerate}
\item \textit{Random.}
If we cannot compute the full gradient,
then we can group the directions
into $n/s$ blocks of size $s$,
and perform BCD over the blocks.
In the experiments of the next subsection we randomly group half of the directions per iteration.
{The condition to terminate in Algorithm~\ref{algo:CaBCD_random} depends ``only'' on the loss function $L$ since we cannot compute the full gradient.}
\end{itemize}
In~\cite{Nesterov2012,Fercoq2015}, the selection of the coordinate step $\gamma$ is given as a sub-optimisation problem, which for simplicity we skip in the following. %
Furthermore,~\cite{Nesterov2012} shows that a momentum application to the single block of directions can improve the rate of convergence: knowing the convexity of the function to be optimised,
it is possible to obtain an accelerated method with an improved rate of convergence ${O}(1/j^2)$, where $j$ is the number of iterations, in the same spirit of~\cite{Nesterov1983}.
Our implementation has been done in a synchronous and parallel manner (cf.~discussion above).
\begin{algorithm}\caption{Carath\'eodory BCD - Modified Gauss-Southwell (GS) rule}\label{algo:CaBCD_GS}
\begin{algorithmic}[1]
   \State{Initialize $\hat \theta_0 $}
	\State{$j \gets 1$, $k \gets 0$
	}\Comment{$j$ counts steps,
	$\sum_{l=0}^{k}b_l$ counts the number of recombinations}
    \State{$\operatorname{Grad_0}\gets \EE[G(\hat\theta_{0},Z)] $}
      \While{{($\|\operatorname{Grad}_{j-1} \|>\epsilon_1$\textbf{ or }$|L(\hat\theta_{j-1},Z) |>\epsilon_2$)} \textbf{and} $j\leq$ it\_max }
      \State{$\hat \theta_{j} \gets \hat \theta_{j-1}+ \gamma \EE[ D_{j-1}(\{\hat\theta\},Z)]$  }
      \State{$\operatorname{Grad}_j\gets \EE[G(\hat\theta_{j},Z)] $}
     \State{Build $b_k$ blocks $B(m,k)$, $m=1,...,b_k$ using the $\EE[G(\hat\theta_{j},Z)] $ and the GS rule}
      \State{$\tau_k\gets j$}%
      \State{$ j\gets j+1$}
     \For{%
     $m=1,...,b_k$,\textbf{ in parallel}}%
     \State{$j_m\gets j$, $\quad \Delta_{\tau_k,k}^{(m)}\gets 0$}\Comment{$j_m-1=\tau_k$}
     \State{$
     \operatorname{Hessian}_{k}^{(m)}\gets\left[\operatorname{Grad}_{\tau_{k}}^{(m)}-\operatorname{Grad}_{\tau_{k}-1}^{(m)}\right]^{\top}\cdot\left[1/(\hat{\theta}_{\tau_{k}}^{(m)}-\hat{\theta}_{\tau_{k}-1}^{(m)})\right]$
     }
     \State{Compute $\hat{Z}_{k}^{(m)}$ s.t. $ \EE\left[D_{\tau_{k}}^{(m)}\left(\{\hat{\theta}\},\hat{Z}_{k}^{(m)}\right)\right]=\EE\left[D_{\tau_{k}}^{(m)}\left(\{\hat{\theta}\},Z\right)\right]$}
     \While{$\Delta_{j_m,k}^{(m)}\leq\Delta_{j_m-1,k}^{(m)}$ %
     \textbf{and} $j_m-\tau_k\leq$it\_max\_Ca }
     \State{$\hat{\theta}_{j_m}^{(m)}\gets\hat{\theta}_{j_m-1}^{(m)}+\EE\left[D_{j_m-1}^{(m)}\left(\{\hat\theta\},\hat{Z}_{k}^{(m)}\right)\right]$  }
     \State{$\delta_{j_m,k}^{(m)}\gets \hat{\theta}_{j_{m}}^{(m)}-\hat{\theta}_{\tau_{k}}^{(m)}$}
     \State{$\Delta_{j_{m},k}^{(m)}\gets\operatorname{Grad}_{\tau_{k}}^{(m)}\cdot\delta_{j_{m},k}^{(m)}+\left(\delta_{j_{m},k}^{(m)}\right)^{\top}\cdot\operatorname{Hessian}_{k}^{(m)}\cdot\delta_{j_{m},k}^{(m)}$}
     \State{$j_m\gets j_m+1$}
     \EndWhile
     \If{$j_m-\tau_k \not= \operatorname{it\_max\_Ca}$ }
       \State{$\tau_{m,k+1} \gets j_m-1$} \Comment{{$\tau_{m,k+1}-\tau_k$ steps in $(k+1)$th recombination relative to $B(m,j)$}}
     \Else \State{$\tau_{m,k+1} \gets j_m$}
     \EndIf
     \EndFor
     \State{$j\gets j+\sum_m \tau_{m,k+1}$}
     \State{$ \hat\theta^{(m)}_j \gets \hat\theta^{(m)}_{\tau_{m,k+1}}, \quad \forall m$
     }\Comment{synchronise and update $\hat\theta$}
     \State{$\operatorname{Grad}_j\gets \EE[G(\hat\theta_{j},Z)] $}
     \State{$k\gets k+1, \quad$ $j\gets j+1$}
     \EndWhile
     \textbf{ and return} $j$, $\hat \theta_j$
\end{algorithmic}
{We write $\cdot^{(m)}$ in place of $\cdot^{(B(m,k))}$ to indicate the restriction to the components in the blocks $B(m,k)$.}
\end{algorithm}

\begin{algorithm}\caption{Carath\'eodory BCD - Random}\label{algo:CaBCD_random}
\begin{algorithmic}[1]
   \State{Initialize $\hat \theta_0 $}
	\State{$j \gets 1$, $k \gets 0$
	}\Comment{$j$ counts steps,
	$\sum_{l=0}^{k}b_l$ counts the number of recombinations}
     \While{$|L(\theta_{j-1},Z)|>\epsilon$ \textbf{and} $j\leq$it\_max }
     \State{Build $b$ blocks $B(m,k)$, $m=1,...,b$ using the \textit{Random} rule}
\For{$m=1,...,b$,\textbf{ in parallel}}%
	\State{$\operatorname{Grad}_{j-1}^{(m)}\gets \EE[G^{(m)}(\hat\theta_{j-1},Z)] $}
      \State{$\hat \theta^{(m)}_{j} \gets \hat \theta^{(m)}_{j-1}+ \gamma \EE[ D^{(m)}_{j-1}(\{\hat\theta\},Z)]$  }
      \State{$\operatorname{Grad}_j^{(m)}\gets \EE[G^{(m)}(\hat\theta_{j},Z)] $}
      \State{$\tau_k\gets j$}
     \State{$j_m\gets j+1$, $\quad \Delta_{\tau_k,k}^{(m)}\gets 0$}\Comment{$j_m-1=\tau_k$}
     \State{$
     \operatorname{Hessian}_{k}^{(m)}\gets\left[\operatorname{Grad}_{\tau_{k}}^{(m)}-\operatorname{Grad}_{\tau_{k}-1}^{(m)}\right]^{\top}\cdot\left[1/(\hat{\theta}_{\tau_{k}}^{(m)}-\hat{\theta}_{\tau_{k}-1}^{(m)})\right]$
     }
     \State{Compute $\hat{Z}_{k}^{(m)}$ s.t. $ \EE\left[D_{\tau_{k}}^{(m)}\left(\{\hat{\theta}\},\hat{Z}_{k}^{(m)}\right)\right]=\EE\left[D_{\tau_{k}}^{(m)}\left(\{\hat{\theta}\},Z\right)\right]$}
     \While{$\Delta_{j_m,k}^{(m)}\leq\Delta_{j_m-1,k}^{(m)}$ %
     \textbf{and} $j_m-\tau_k\leq$it\_max\_Ca }
     \State{$\hat{\theta}_{j_m}^{(m)}\gets\hat{\theta}_{j_m-1}^{(m)}+\EE\left[D_{j_m-1}^{(m)}\left(\{\hat\theta\},\hat{Z}_{k}^{(m)}\right)\right]$  }
     \State{$\delta_{j_m,k}^{(m)}\gets \hat{\theta}_{j_{m}}^{(m)}-\hat{\theta}_{\tau_{k}}^{(m)}$}
     \State{$\Delta_{j_{m},k}^{(m)}\gets\operatorname{Grad}_{\tau_{k}}^{(m)}\cdot\delta_{j_{m},k}^{(m)}+\left(\delta_{j_{m},k}^{(m)}\right)^{\top}\cdot\operatorname{Hessian}_{k}^{(m)}\cdot\delta_{j_{m},k}^{(m)}$}
     \State{$j_m\gets j_m+1$}
     \EndWhile
     \If{$j_m-\tau_k \not= \operatorname{it\_max\_Ca}$ }
       \State{$\tau_{m,k+1} \gets j_m-1\quad\quad\quad$} \Comment{{$\tau_{m,k+1}-\tau_k$ steps in $(k+1)$th recombination relative to $B(m,j)$}}
     \Else \State{$\tau_{m,k+1} \gets j_m$}
     \EndIf
     \EndFor
     \State{$j\gets j+\sum_m \tau_{m,k+1}$}
     \State{$ \hat\theta^{(m)}_j \gets \hat\theta^{(m)}_{\tau_{m,k+1}}, \quad \forall m$
     }\Comment{synchronise and update $\hat\theta$}
     \State{$k\gets k+1, \quad$ $j\gets j+1$}
     \EndWhile
     \textbf{ and return} $j$, $\hat \theta_j$
\end{algorithmic}
We write $\cdot^{(m)}$ in place of $\cdot^{(B(m,k))}$ to indicate the restriction to the components of $\cdot$ in the blocks $B(m,k)$.
\end{algorithm}

\begin{figure}[hbt!]
  \centering
  \includegraphics[height=3cm,width=0.32\textwidth]{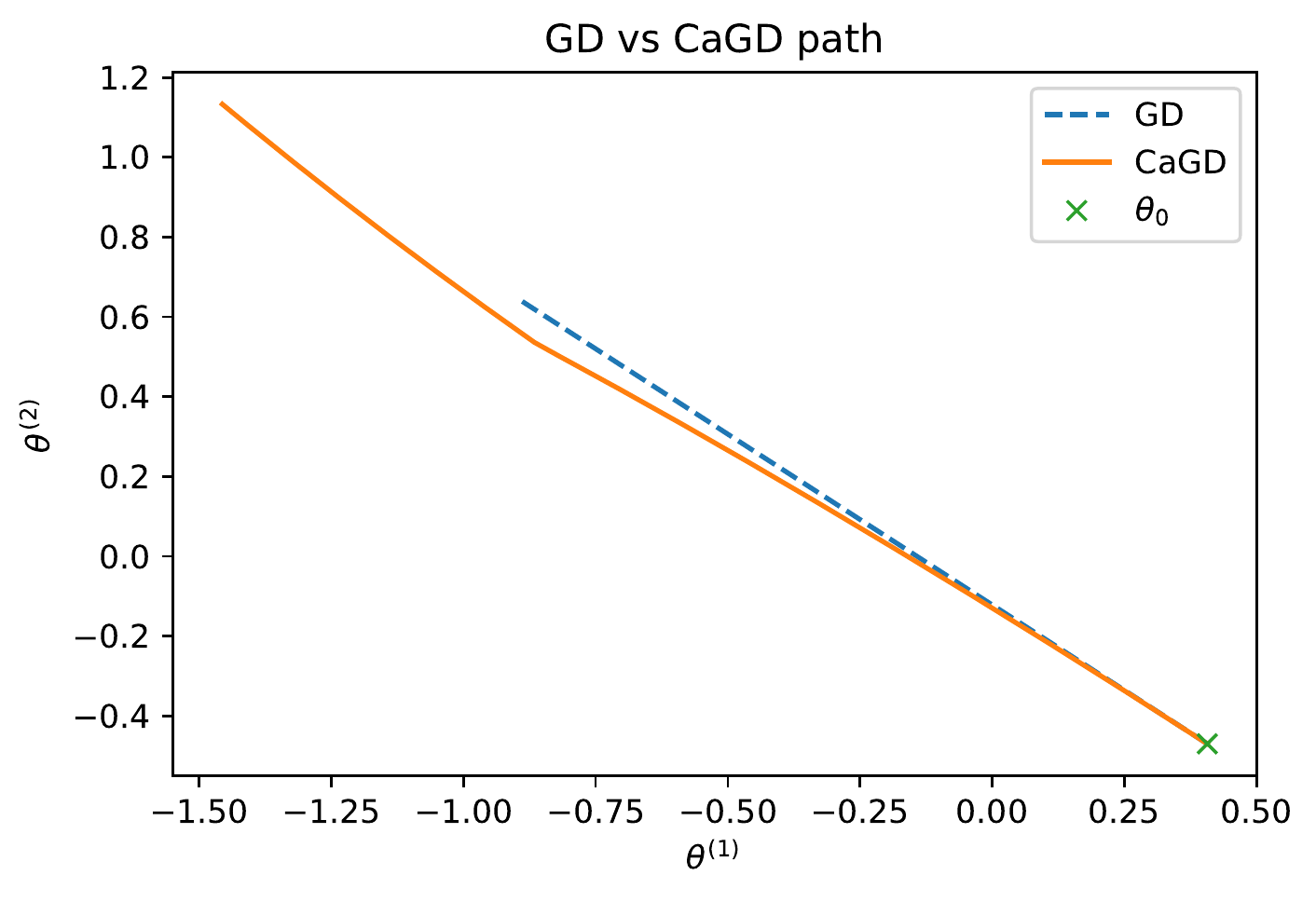}
        \includegraphics[height=3cm,width=0.32\textwidth]{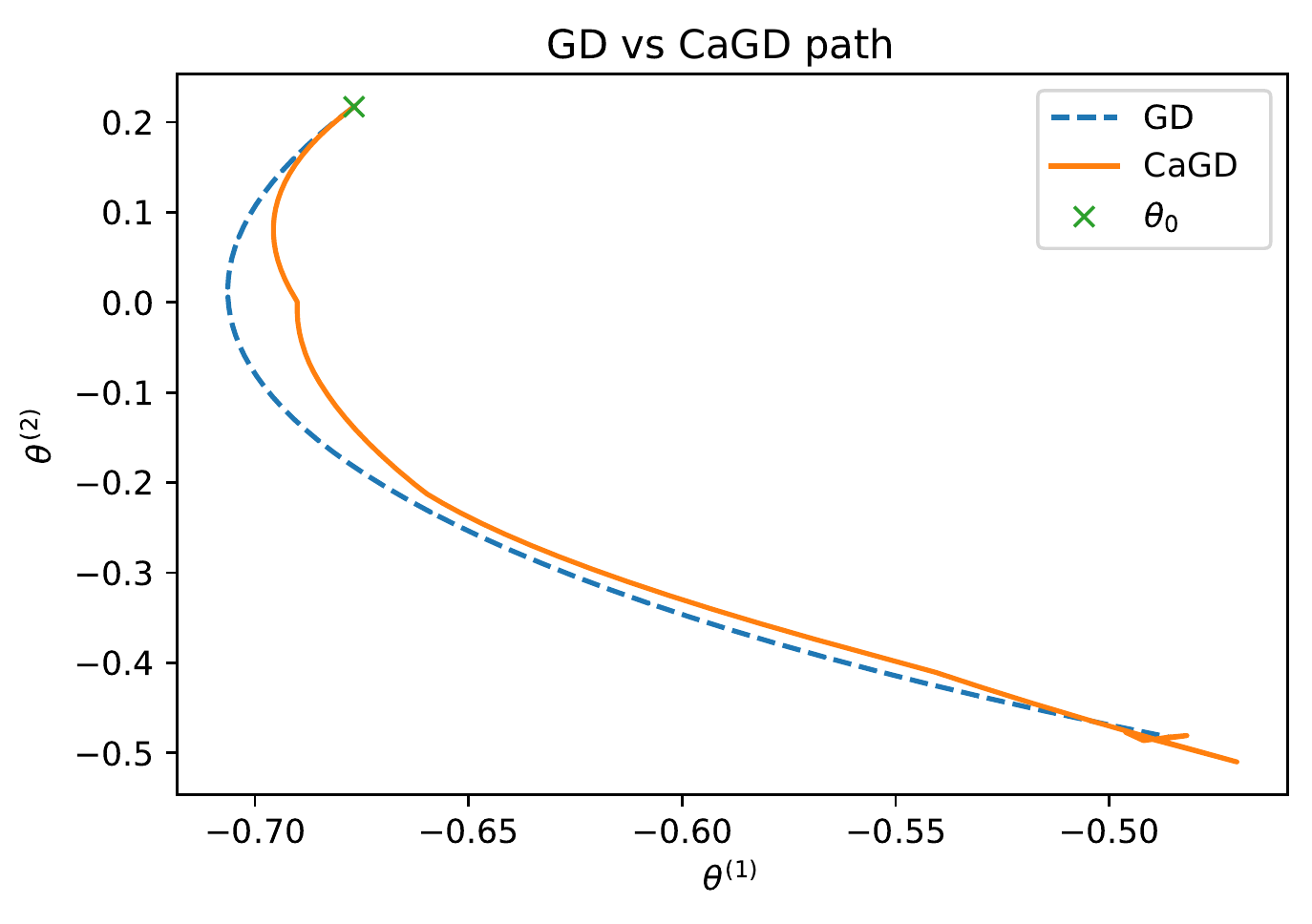}
        \includegraphics[height=3cm,width=0.32\textwidth]{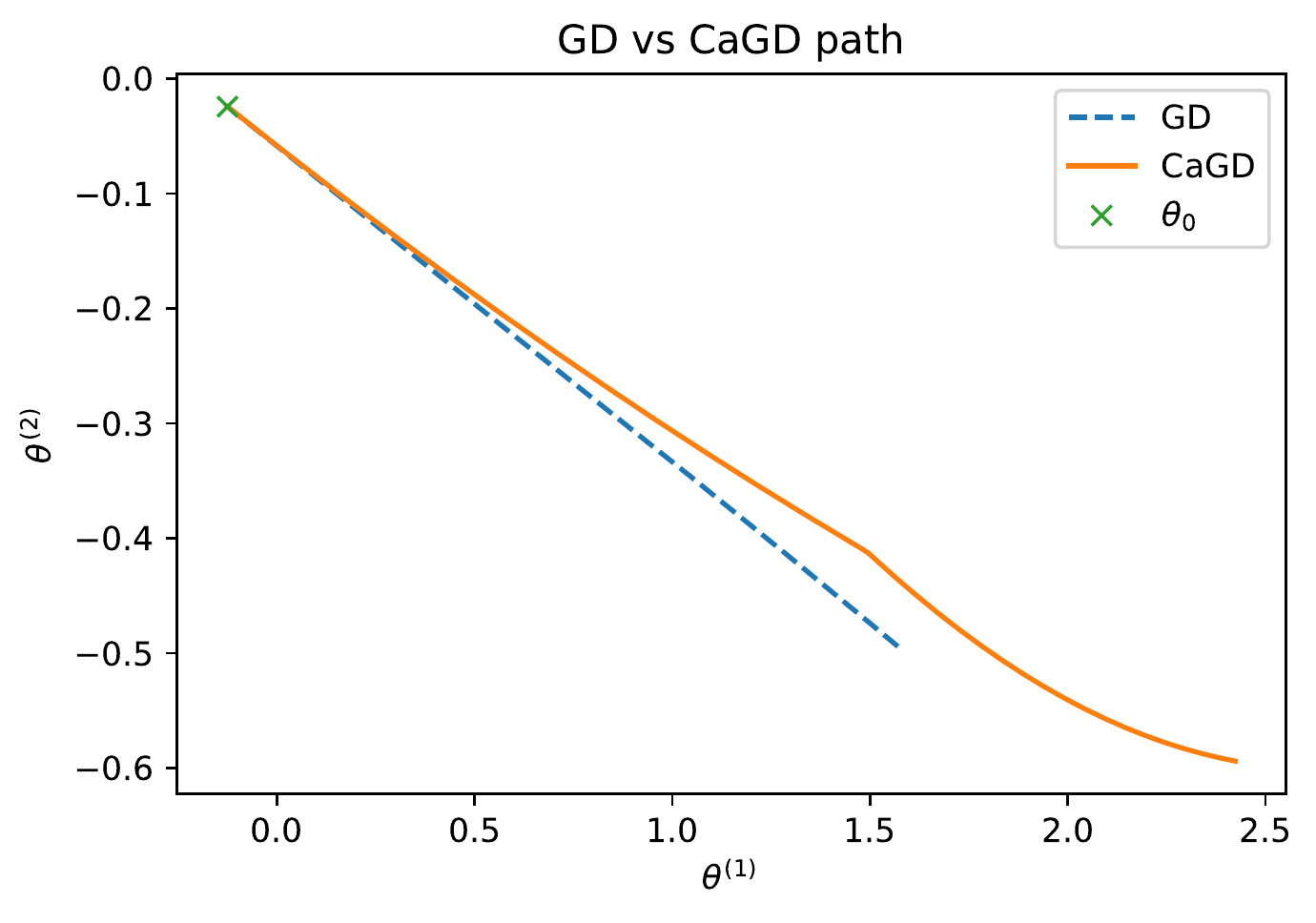}
  \caption{Paths generated by CaGD (Theorem \ref{th:convergence_CaGD}) and GD, for the experiments of Figure~\ref{fig:2d}, same order.}
\end{figure}
\begin{figure}[h!]%
\centering
        \includegraphics[height=3cm,width=0.32\textwidth]{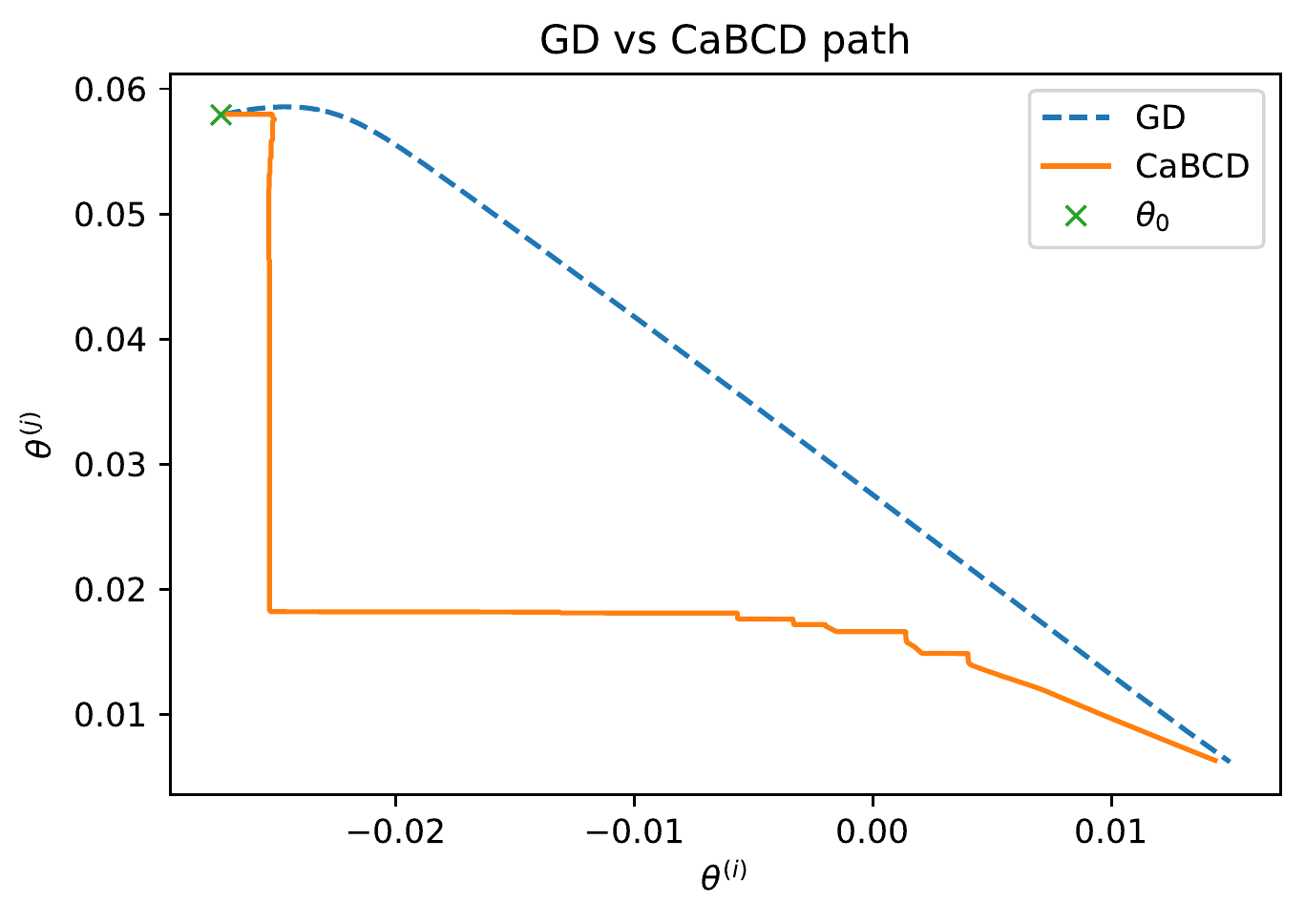}
        \includegraphics[height=3cm,width=0.32\textwidth]{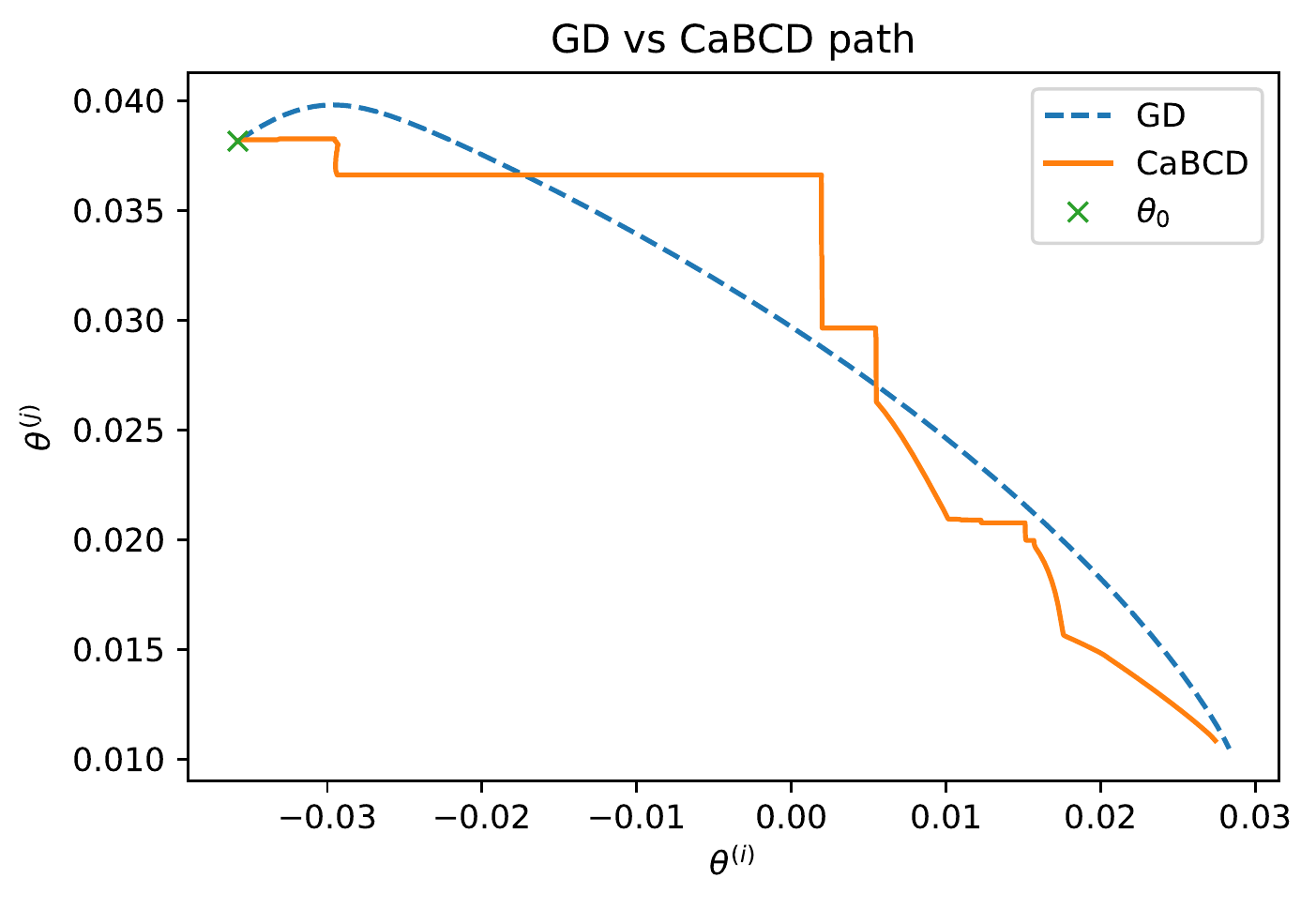}
        \includegraphics[height=3cm,width=0.32\textwidth]{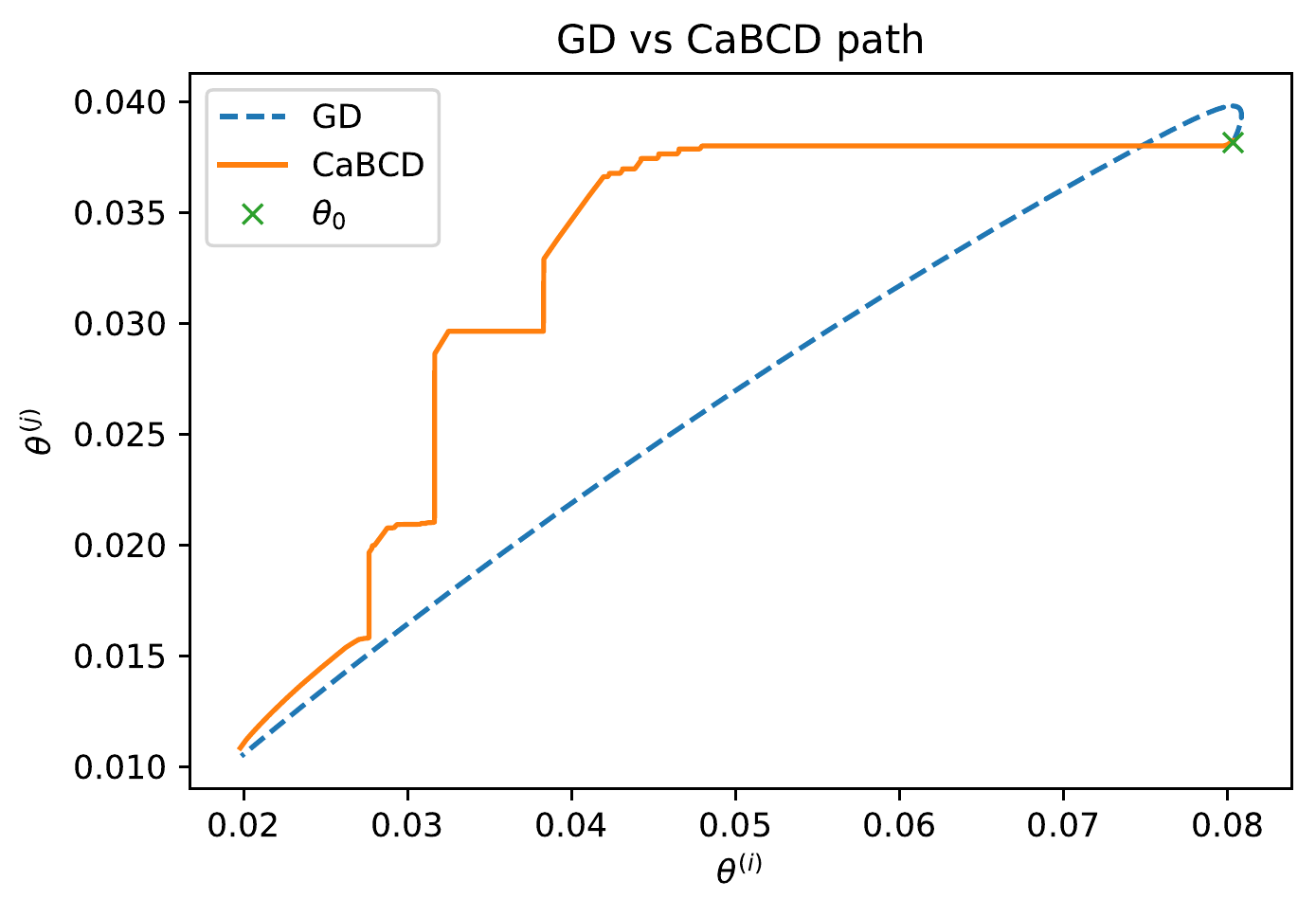}
    \caption{Samples of trajectories followed by the GD and the CaBCD over the parameter space.
    The dotted blue trajectories and the continuous orange trajectories converge to the same desired minimum, though via different paths.
    The CaBCD, between change of directions, uses only a subset of the total points $N$, namely $s+1$ if the size of the selected block is $s$. This Figure has been obtained using the data of Figure~\ref{fig:2d} (center) in the multi-dimensional case.}
    \label{fig:parameter-path}
\end{figure}
\subsection{Experiments: CaBCD vs ADAM vs SAG for LASSO}\label{sec:exp_BCD}
We consider a least-squares problems with LASSO regularisation, i.e.
\begin{align}\label{eq:LASSO}
\min_\theta \frac{1}{N}\sum_i (x_i\theta^\top-y_i)^2 + \lambda	|\theta|_1.
\end{align}
We have used the following datasets:
\begin{enumerate}[label=(\roman*)]
\item Household power consumption~\cite{electrdata}, which consists of $N=\num{2075259}$ data points.
We want to predict the \textit{Voltage} given \textit{active power, reactive power, intensity}.
We have raised to the tensor power of $5$\footnote{Raising to the tensor power of $\alpha$ means that we have added all the ``mixed'' products up to order $\alpha$: if we indicate with $x^i_m$, $i\in\{1,\ldots,n\}$, the $i$-th feature of the $m$-th point, in the case $\alpha=3$, we create all the new features of the form $x^i_m\times x^j_m$ and $x^i_m\times x^j_m\times x^h_m$,  $i,j,k\in\{1,\ldots,n\}$ for all the points $m\in\{1,\ldots N\}$.},
scaled the data, and applied PCA to reduce the number of features to \num{7}.
\item 3D Road Network~\cite{3ddata}, which consists of $N=\num{434874}$ data points.
We want to predict the \textit{Altitude}, given \textit{Longitude} and {Latitude}.
We have raised to the tensor power of $5$, scaled the data, and applied PCA to reduce the number of features to \num{7}.
\item NYC Taxi Trip Duration~\cite{nydata}, which consists of $N=\num{1458644}$ data points.
We want to predict the \textit{trip duration}, given \textit{pickup time/longitude/latitude} and \textit{dropoff longitude/latitude}. We consider only the time of the feature \textit{pickup\_datetime}, without the date.
We have raised to the tensor power of $3$, scaled the data, and applied PCA to reduce the number of features to \num{8}.
In this case we have considered as outliers the points such that $y_i>$\num{10000} -- this amounts to \num{2123} points (\num{0.14}\%).
\end{enumerate}
In all datasets the variance reduction by PCA is greater than \num{99.9}\%, which results from eliminating the symmetries introduced via the tensor power.
Throughout we have chosen $\lambda=0.01$ for the Lasso regularisation.

We have implemented the BCD with and without the Carath\'eodory sampling procedure with Gauss-Southwell rule (\textit{CaBCD GS}, \textit{BCD GS}), 
with a momentum strategy and the GS rule (\textit{CaBCD mom GS}, \textit{BCD mom GS}),
and with the Random rule (\textit{CaBCD mom random}, \textit{BCD mom random}).
For the momentum strategy we have chosen the momentum parameter $\beta = 0.9$.
As benchmarks we used ADAM~\cite{Kingma2014} and SAG~\cite{Roux2012} with standard mini-batches with size of $256$.
The learning rate for the CaBCD
Algorithms and ADAM is \num{1e-3}, as suggested in~\cite{Kingma2014};
we selected it\_max\_Ca $=1/\gamma/10=100$.
SAG %
was more sensitive to the step size and we decreased it to \num{1e-6}
to preserve the convergence.
\subsection{Discussion of results}
The results are summarized in Figure \ref{fig:results_expCaBCD}.
Overall CaBCD strongly outperforms the other methods and within the CaBCD variants the ones that use moments do better. 
\begin{figure}[hbt!]
\centering
        \includegraphics[height=2.8cm,width=0.32\textwidth]{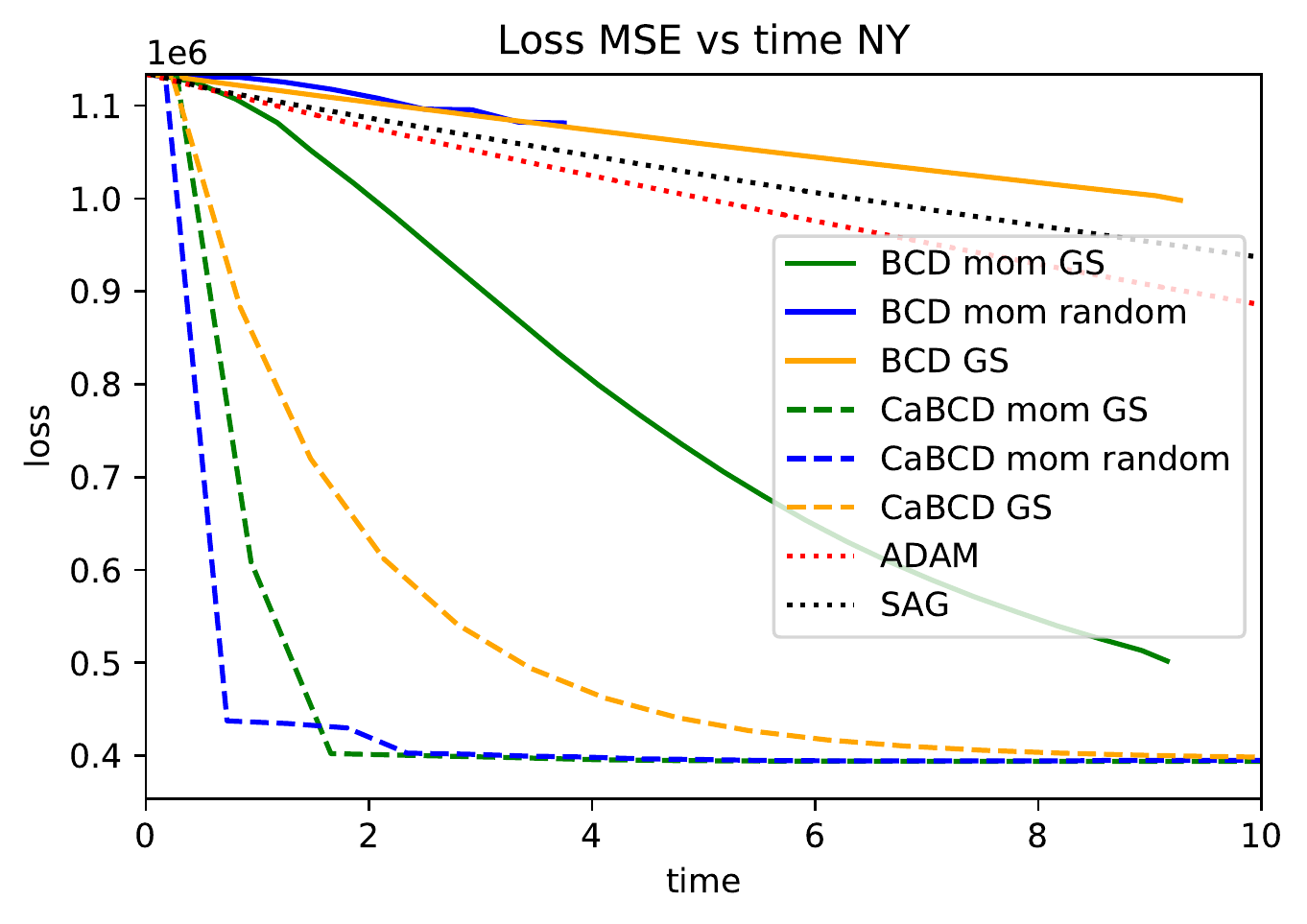}
        \includegraphics[height=2.8cm,width=0.32\textwidth]{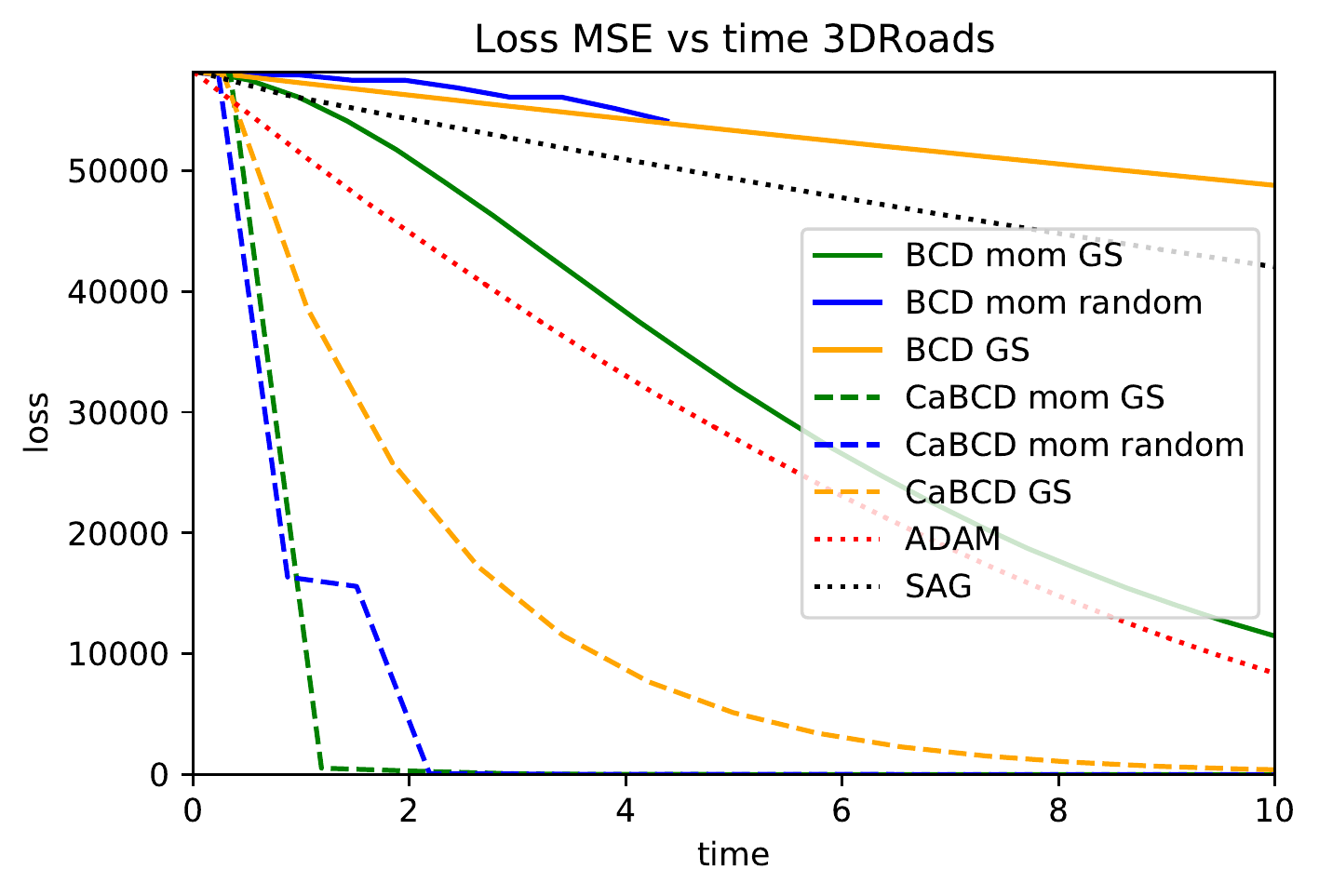}
        \includegraphics[height=2.8cm,width=0.33\textwidth]{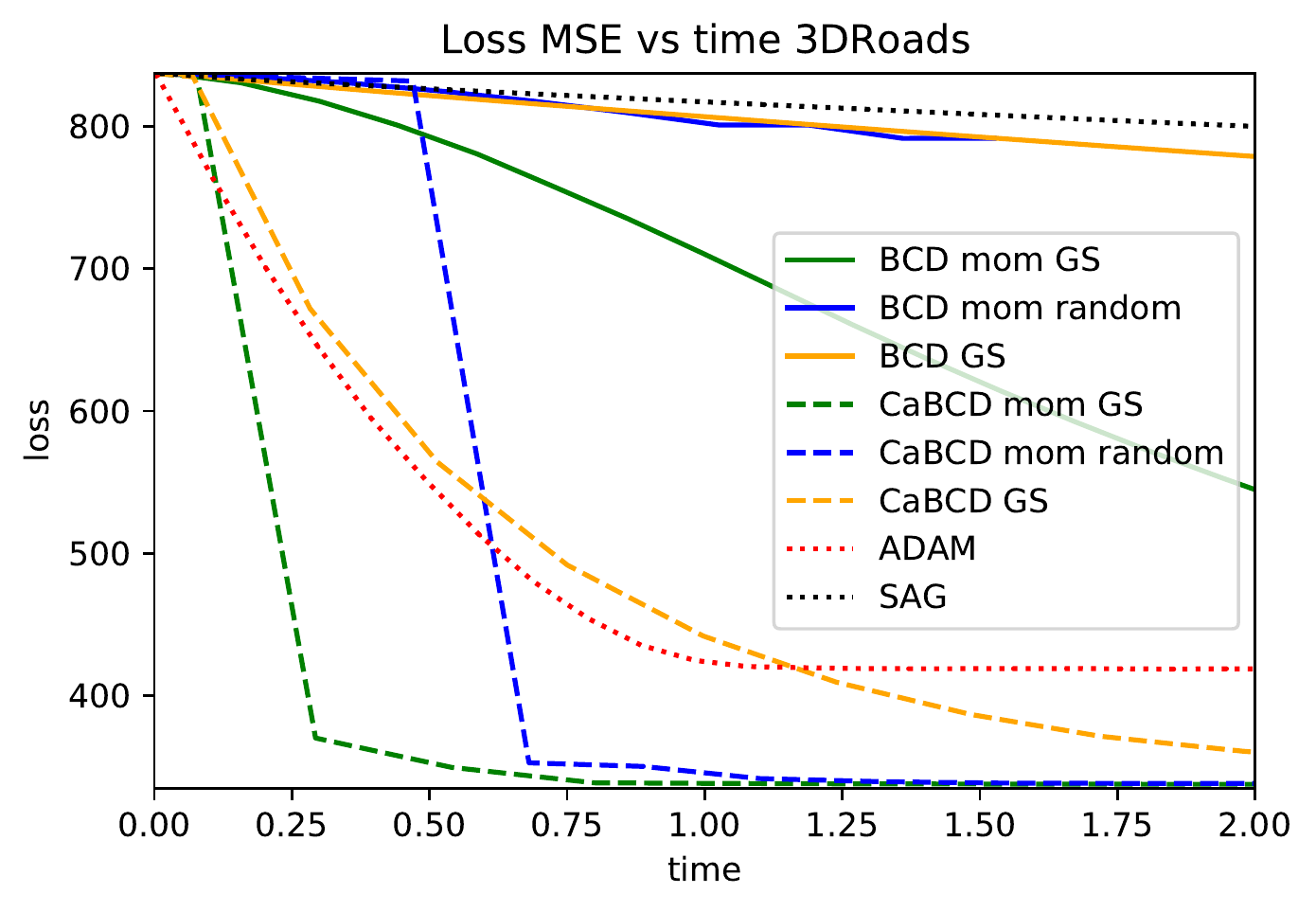}

        \includegraphics[height=2.8cm,width=0.32\textwidth]{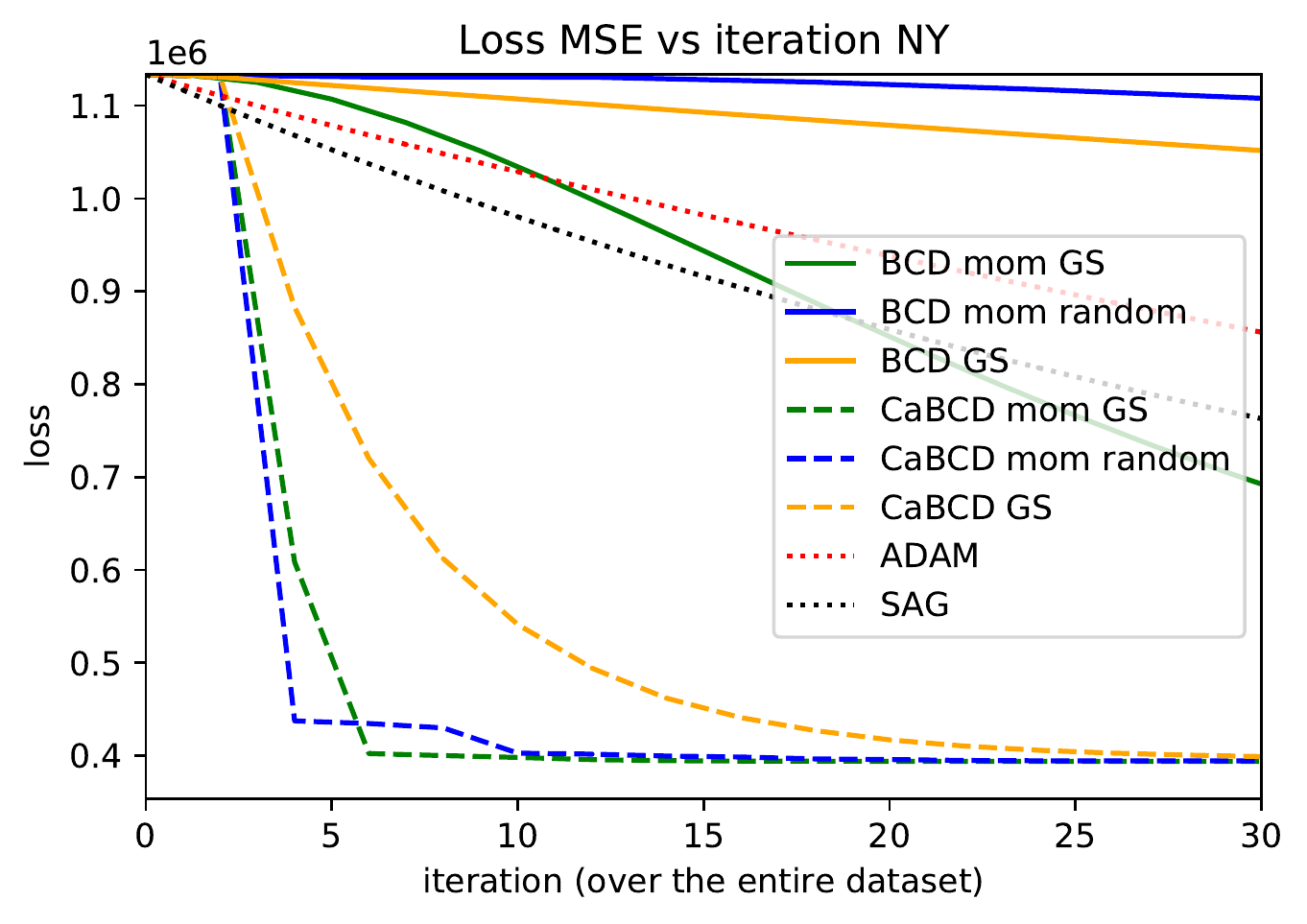}
        \includegraphics[height=2.8cm,width=0.32\textwidth]{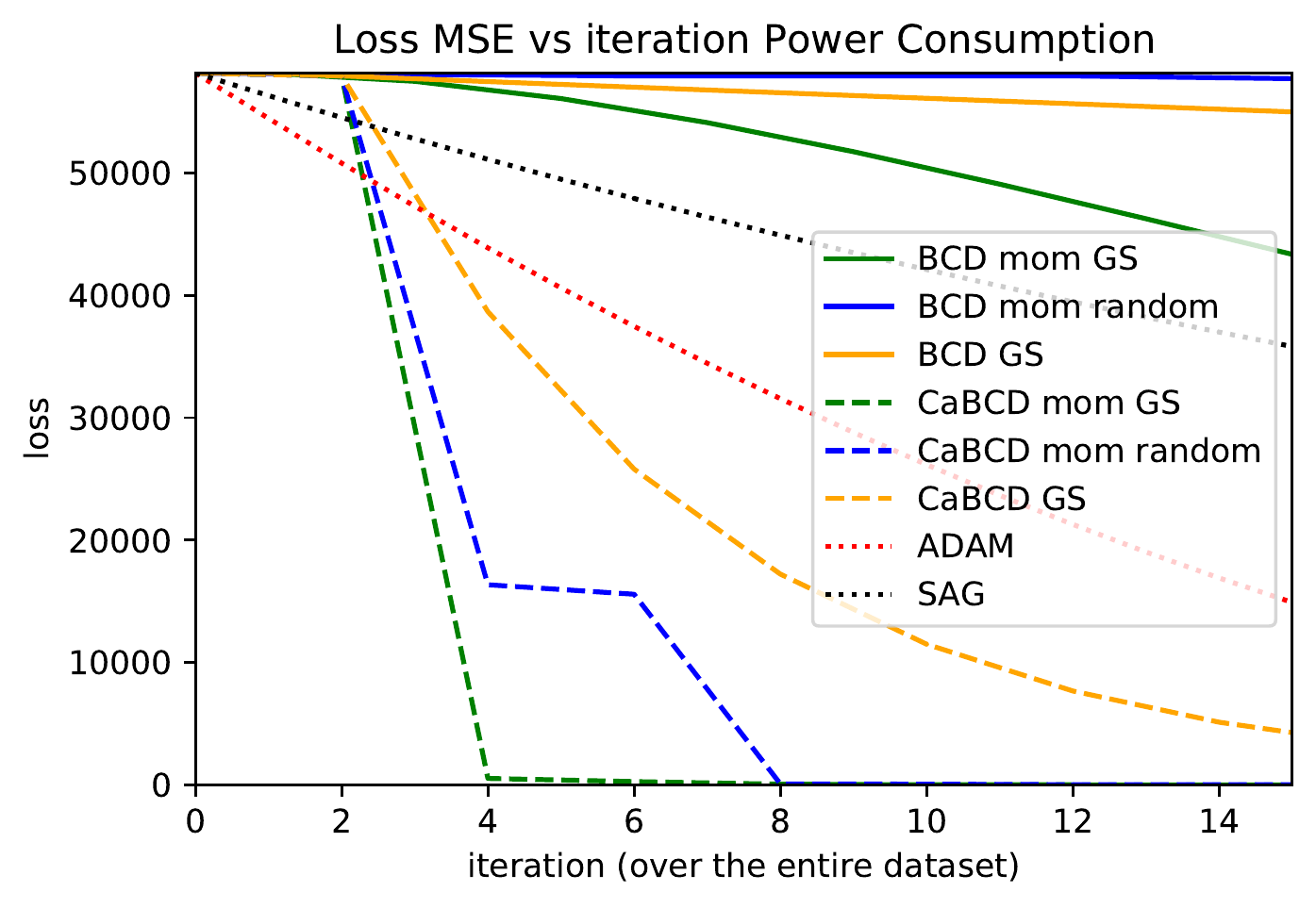}
        \includegraphics[height=2.8cm,width=0.32\textwidth]{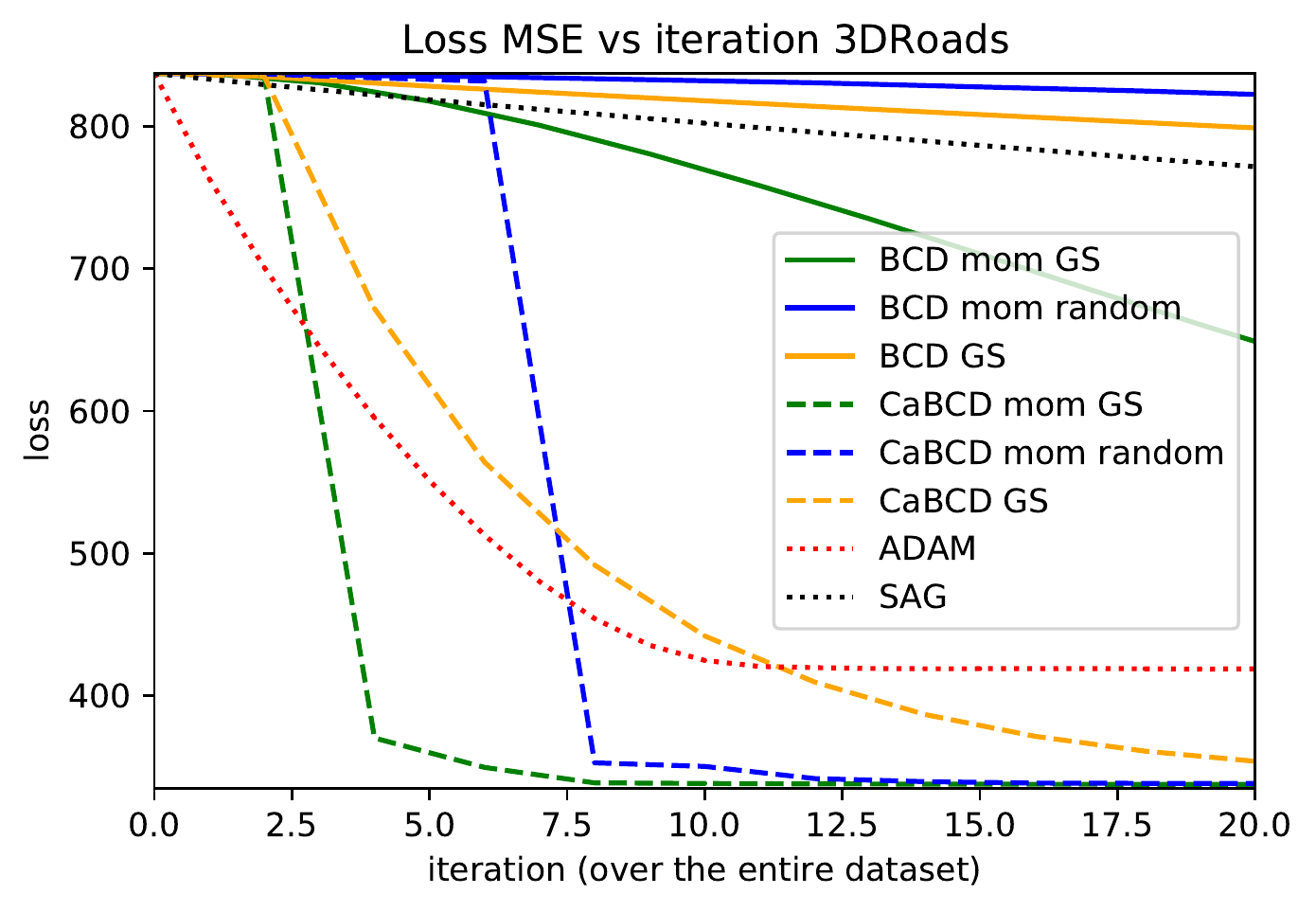}
    \caption{Running times and iterations of the different Algorithms.
    For \textit{CaBCD mom GS},  \textit{BCD mom GS} and \textit{CaBCD mom random}, \textit{BCD mom random}
    the directions have been computed using a standard momentum strategy, and chosen
    respectively by the GS rule and by the Random rule.
    For \textit{CaBCD GS}, \textit{BCD GS}
    the directions have been computed using the standard GD method, and chosen by the GS rule. }%
    \label{fig:results_expCaBCD}
    \end{figure}
    Some further observations are that, firstly, the size of the blocks $s$ has been fixed to two.
    The reason is that experimentally we have observed that if the block's size is between $2$ and $5$ the reduced measure is used for longer, i.e.~the algorithm does more steps with the reduced measure, thus decreasing the runtime.
    Secondly, in the case of CaBCD algorithms we count $1$ iteration when a full gradient has been computed, while we count $\frac{\text{number of points in the reduced measure}}{N}$ for any iteration done with the reduced measure (if the size of the block is $s$, the reduced measure has support on $s+1$ points, see Theorem \ref{th:cath}).
    An analogous reasoning is used to count the iterations of SAG and ADAM.
    Third, the CaBCD algorithms are for the first two iterations are ``slower''.
    This is due to the fact that we compute $\mathcal{H}$, i.e.~the approximation of the second derivative.
    Finally, using the GS rule, the parallelisation of the code has often no effect because the directions to optimise belong to only one block.

\subsection{Let's make Carath\'eodory Block Coordinate Gradient Descent go fast}
The central question of BCD is the choice of the update rule.
In the previous section we used the arguably simples ones, randomized and Gauss--Southwell, for CaBCD. 
However, more sophisticated update rules are possible which in turn could lead to a further performance improvement. 
{To understand this better, we revisit in this section the study of different BCD rules of~\cite{Nutini2017} in the context of our CaBCD.
To do so we follow \cite{Nutini2017} and focus on a least-squares problem 
\begin{align}\label{eq:ls}
\min_\theta \sum_i (x_i\theta^\top-y_i)^2.
\end{align}
We use \cite[Dataset A]{Nutini2017} %
with $N=\num{1000000}$ and $n=\num{500}$.
The data are generated following the same procedure explained in \cite[Appendix F.1]{Nutini2017}.
The $x_i$ values are sampled from a standard normal random variable, then
1 is added to induce a dependency between columns and each
column is multiplied by a sample from a standard normal random variable multiplied by ten, to induce different Lipschitz constants across the
coordinates. Finally, each entry is kept non-zero with probability $10 \log(m)/m$.
$y_i = x_i\cdot \theta^\times + e_i$, where the $e_i$ are drawn from a standard normal random variable.
90\% of $\theta^\times $ is set to zero and the remaining values are sampled from a standard normal random variable.}
\begin{figure}[t!]
\centering
\includegraphics[width=0.335\textwidth, clip=true, trim = 0 125mm 0 0 ]{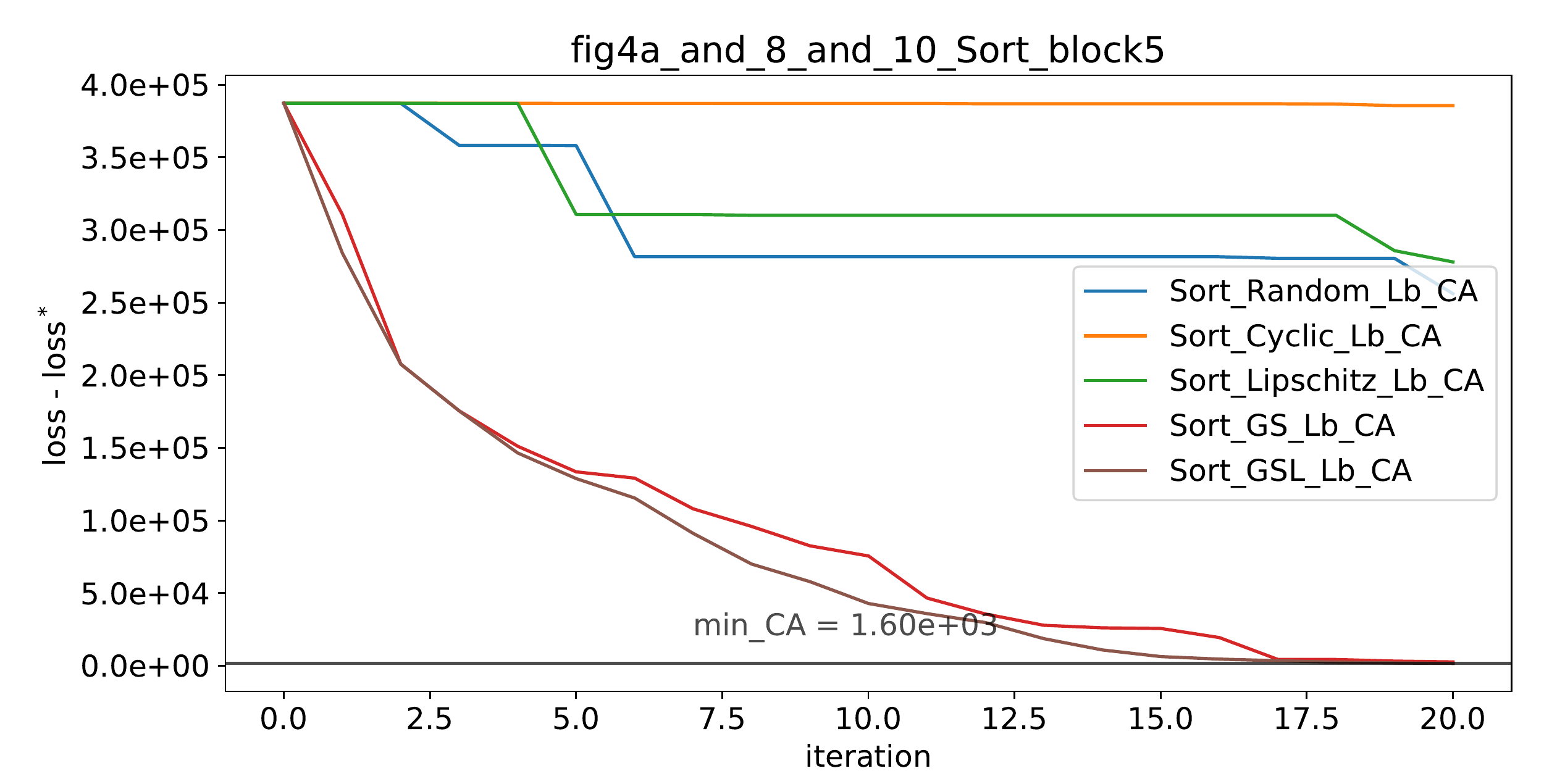}
\includegraphics[width=0.32\textwidth, clip=true, trim = 12mm 125mm 0 0 ]{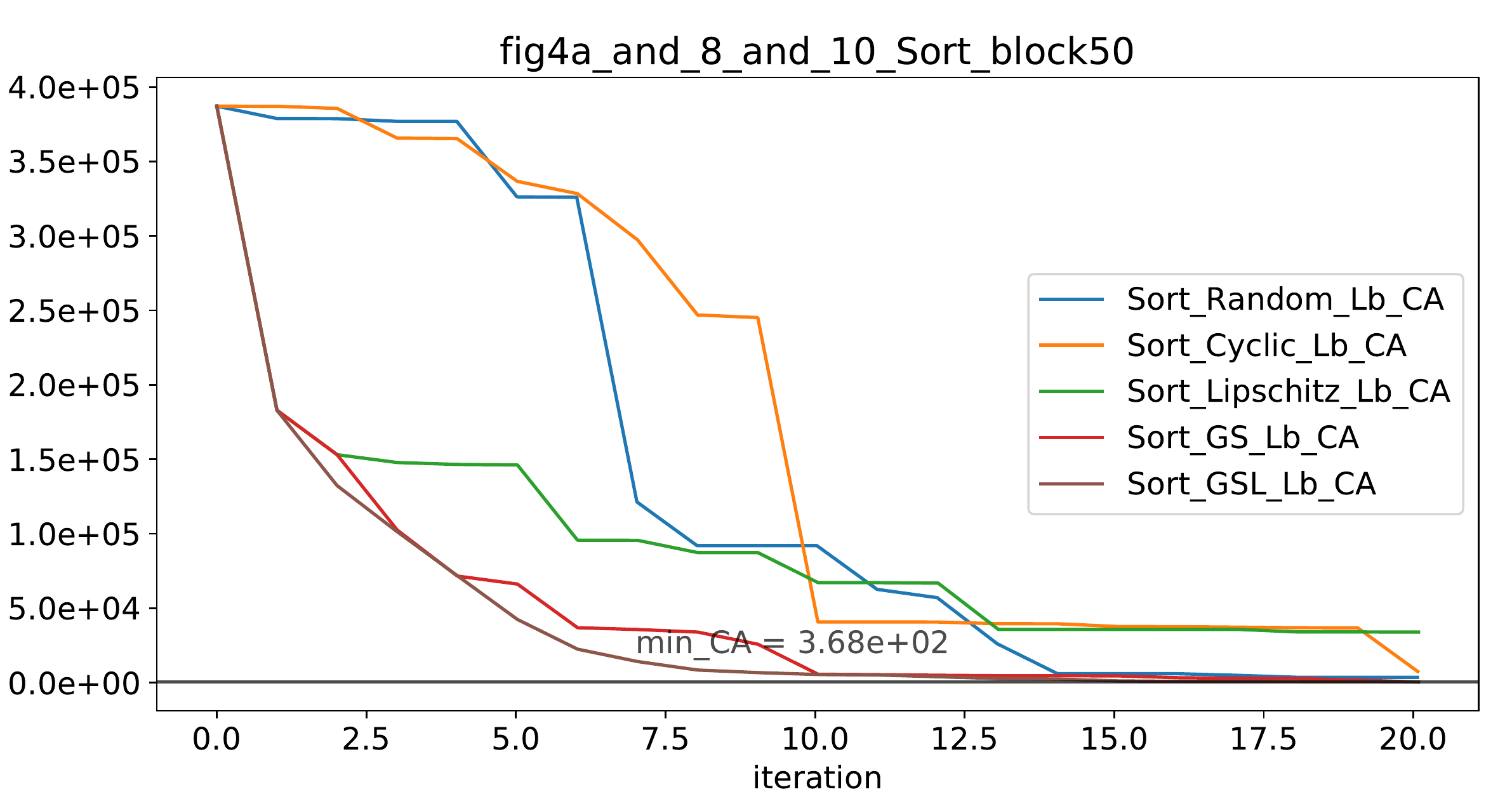}
\includegraphics[width=0.32\textwidth, clip=true, trim = 12mm 125mm 0 0]{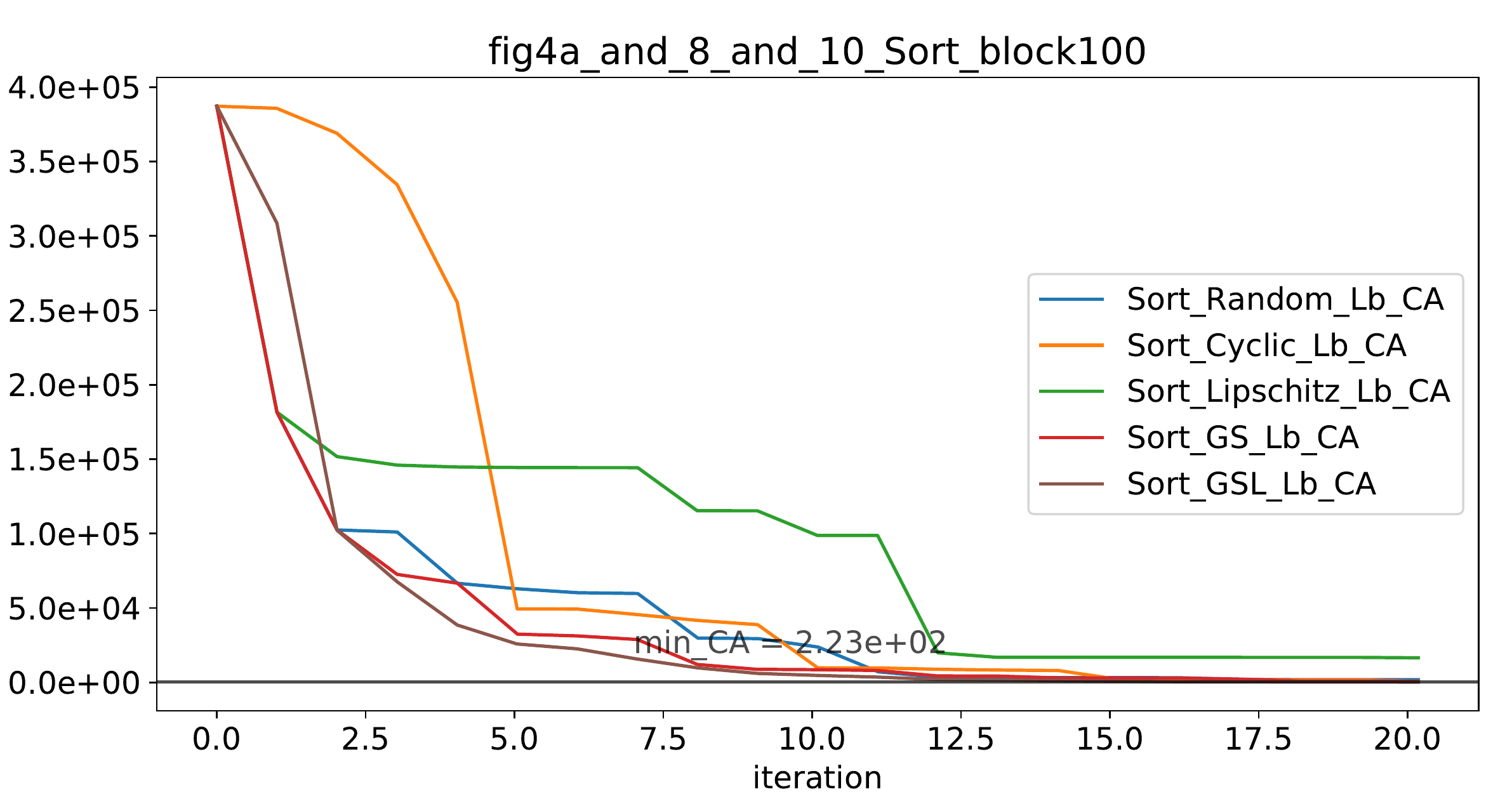}

\includegraphics[width=0.335\textwidth, clip=true, trim = 0 125mm 0 0 ]{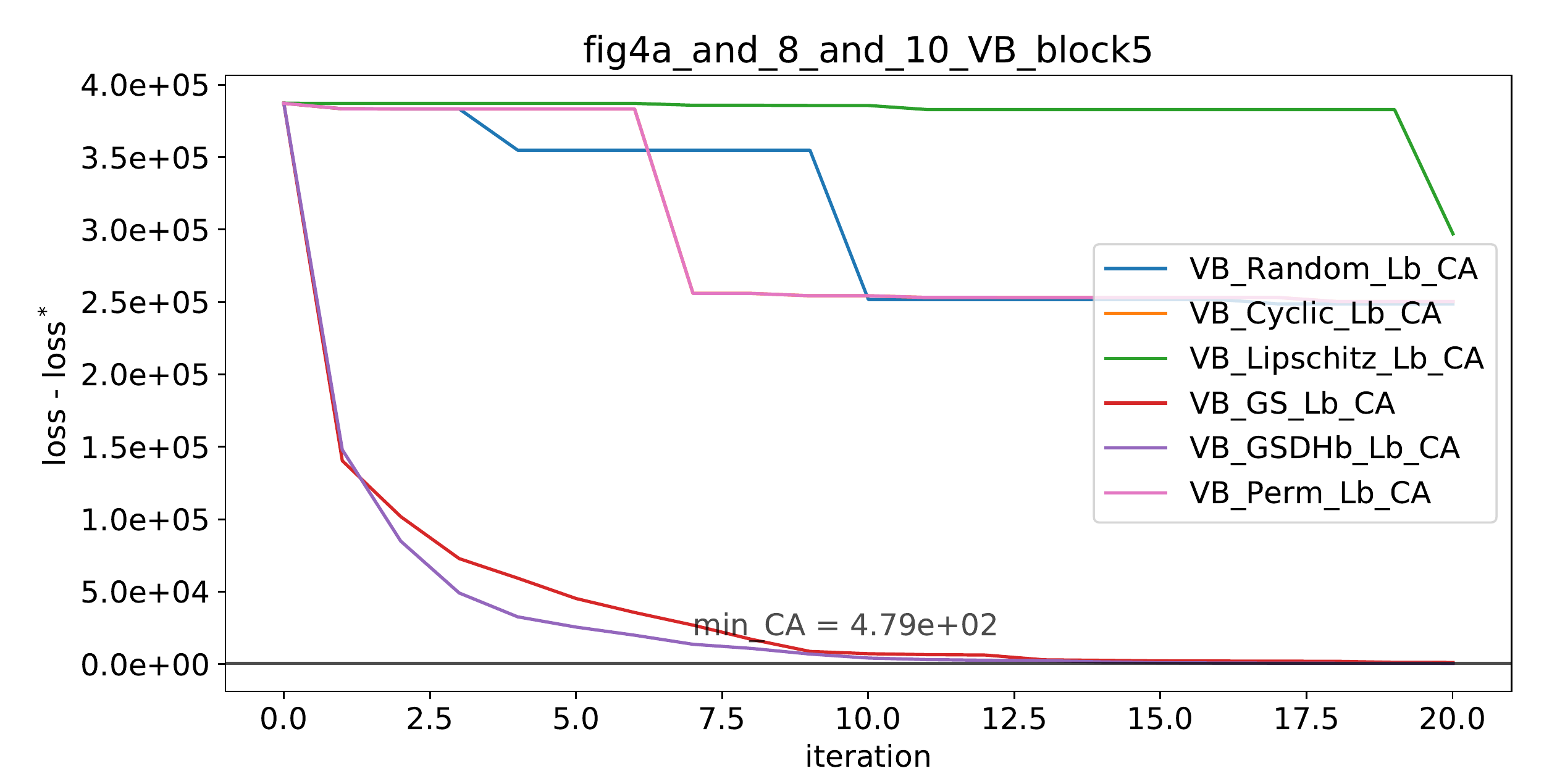}
\includegraphics[width=0.32\textwidth, clip=true, trim = 12mm 125mm 0 0]{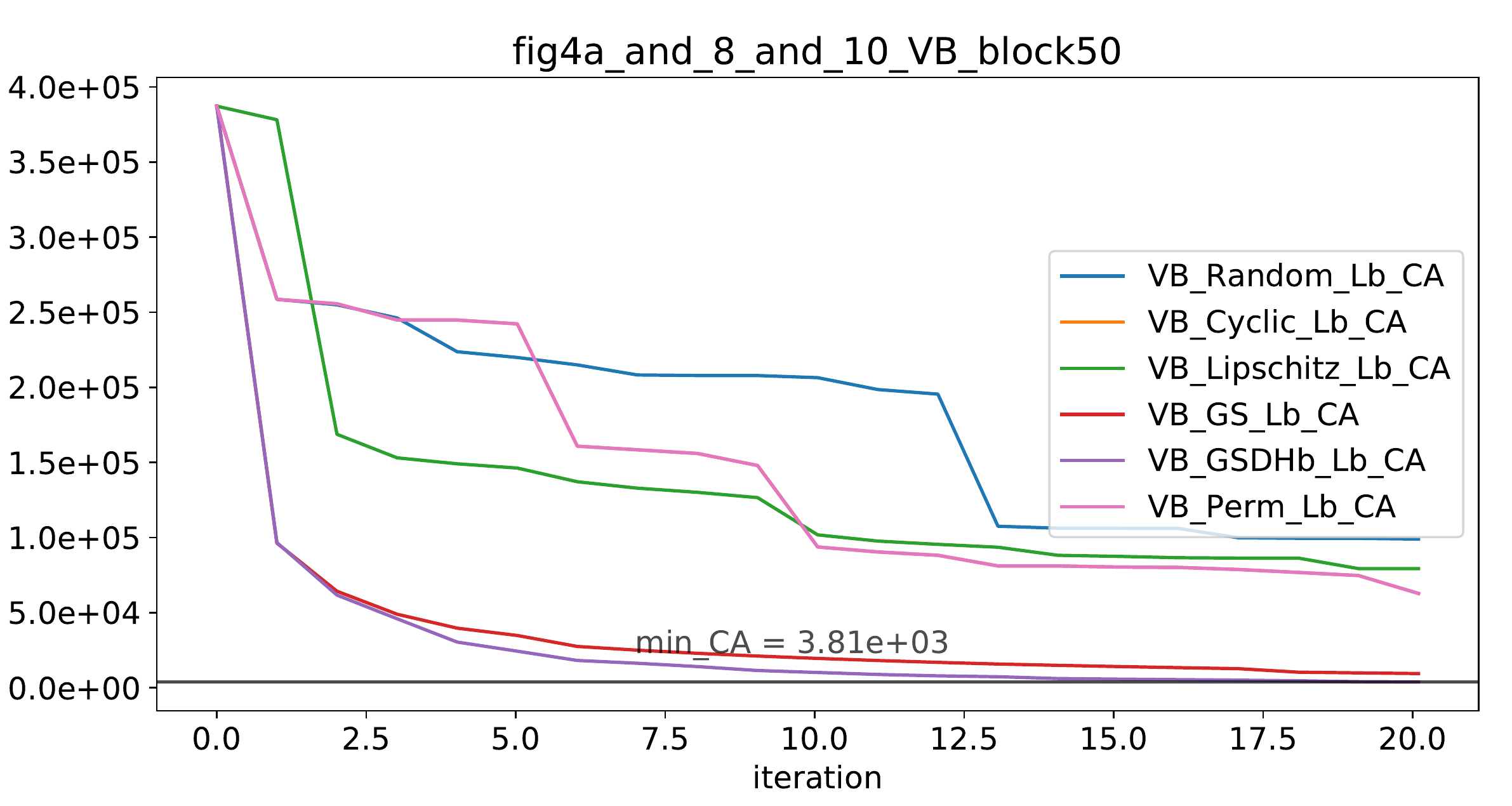}
\includegraphics[width=0.32\textwidth, clip=true, trim = 12mm 125mm 0 0]{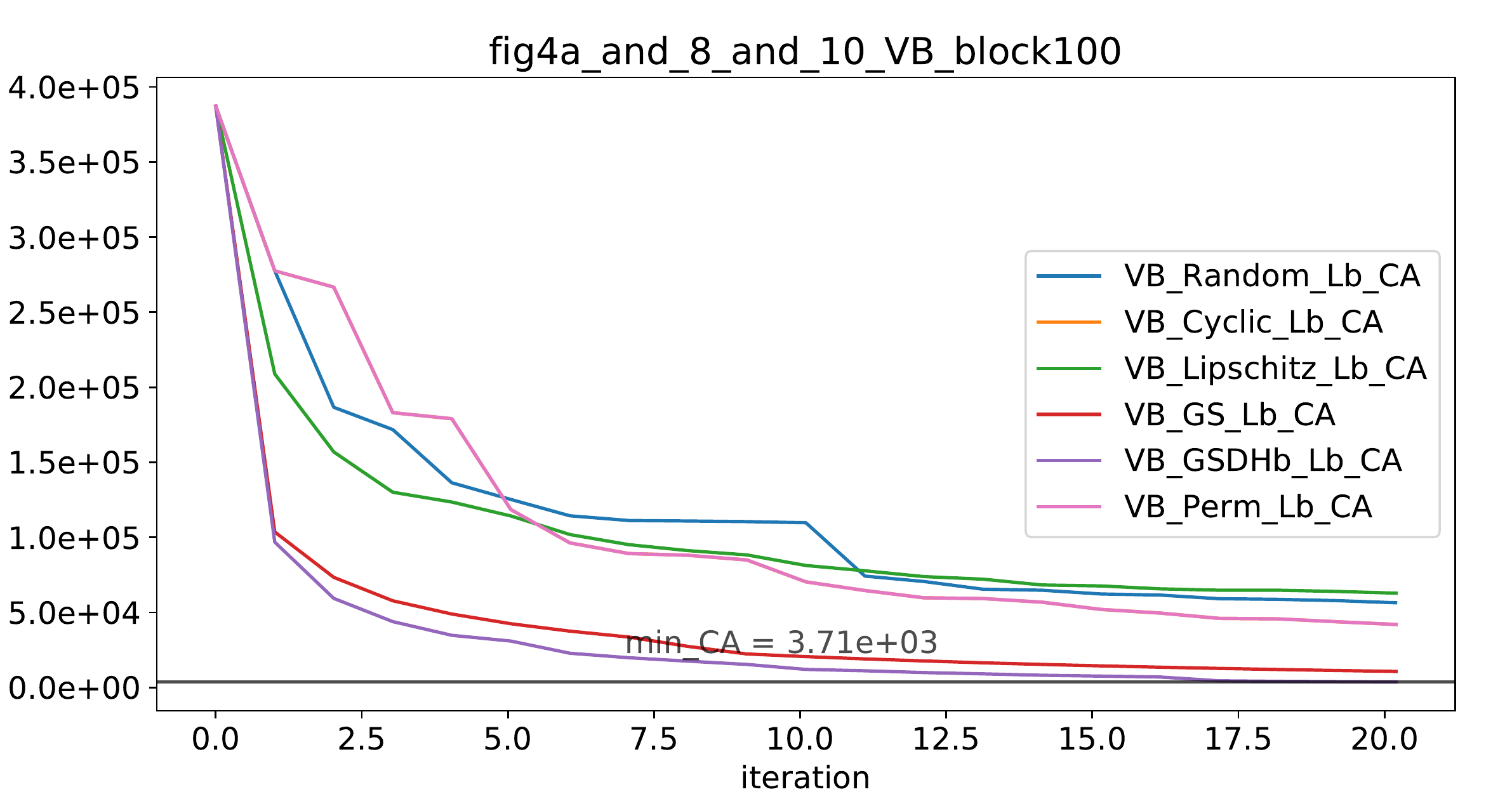}
\includegraphics[width=0.335\textwidth, clip=true, trim = 0 125mm 0 0 ]{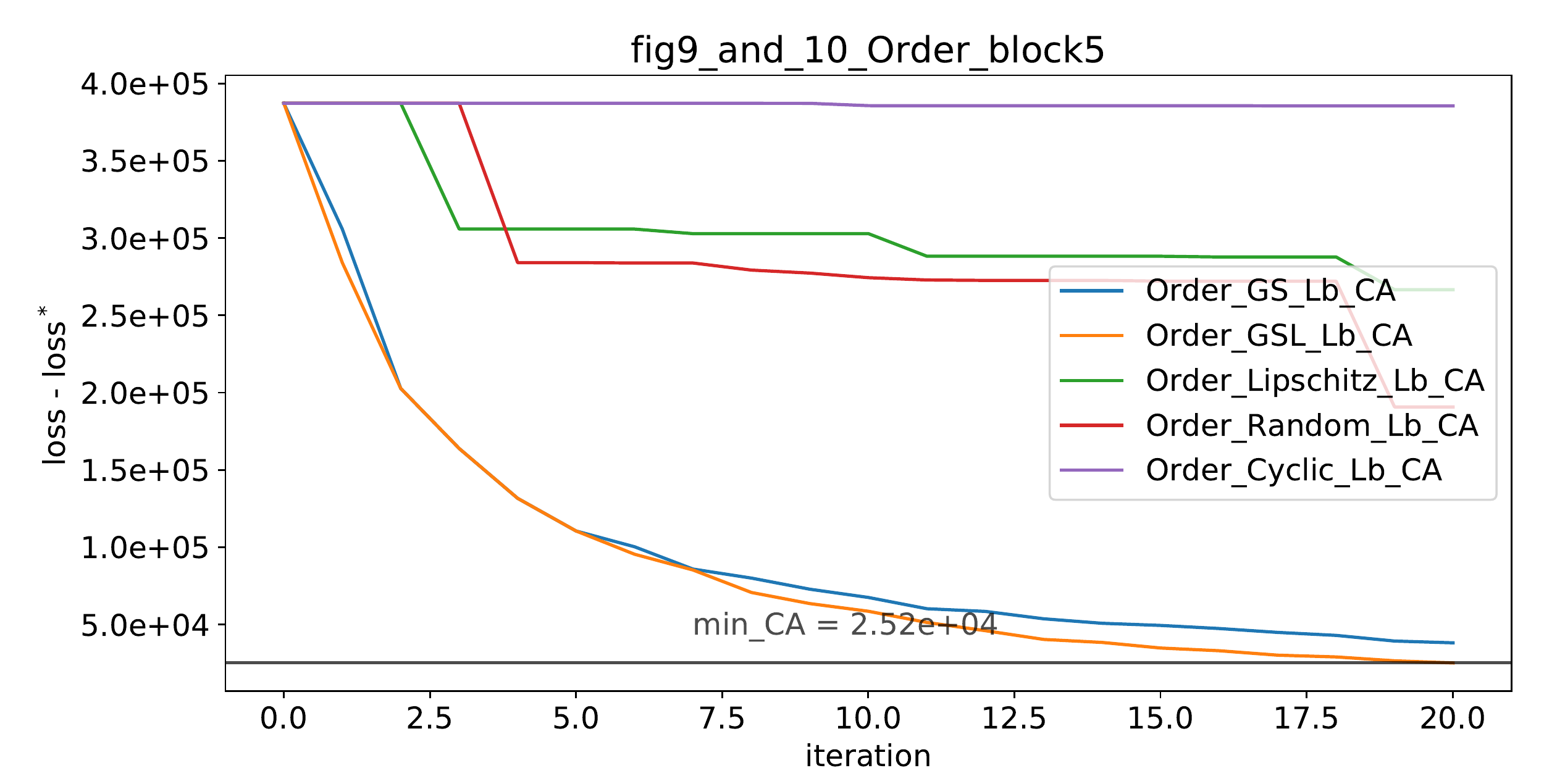}
\includegraphics[width=0.32\textwidth, clip=true, trim = 12mm 125mm 0 0]{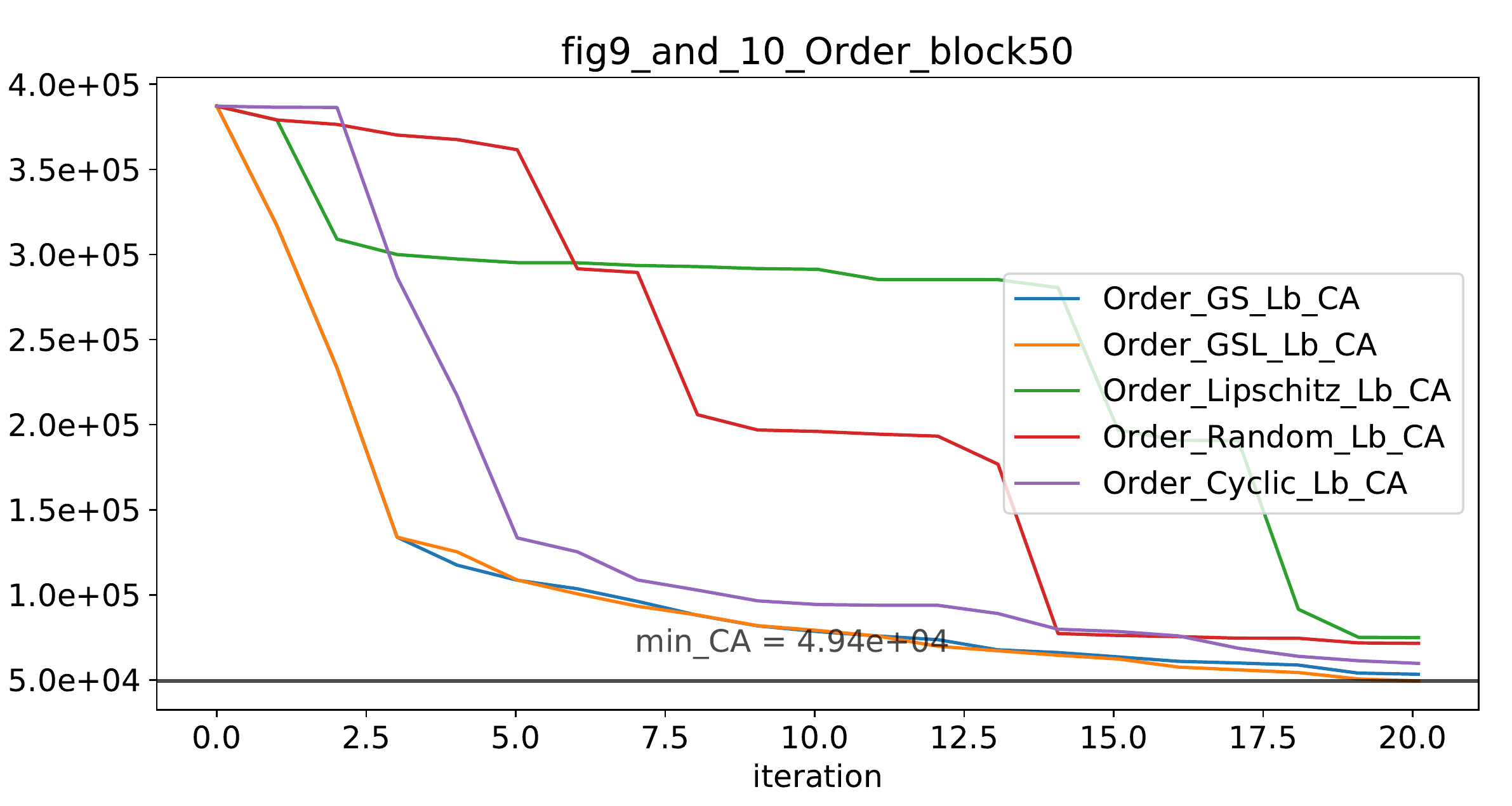}
\includegraphics[width=0.32\textwidth, clip=true, trim = 12mm 125mm 0 0]{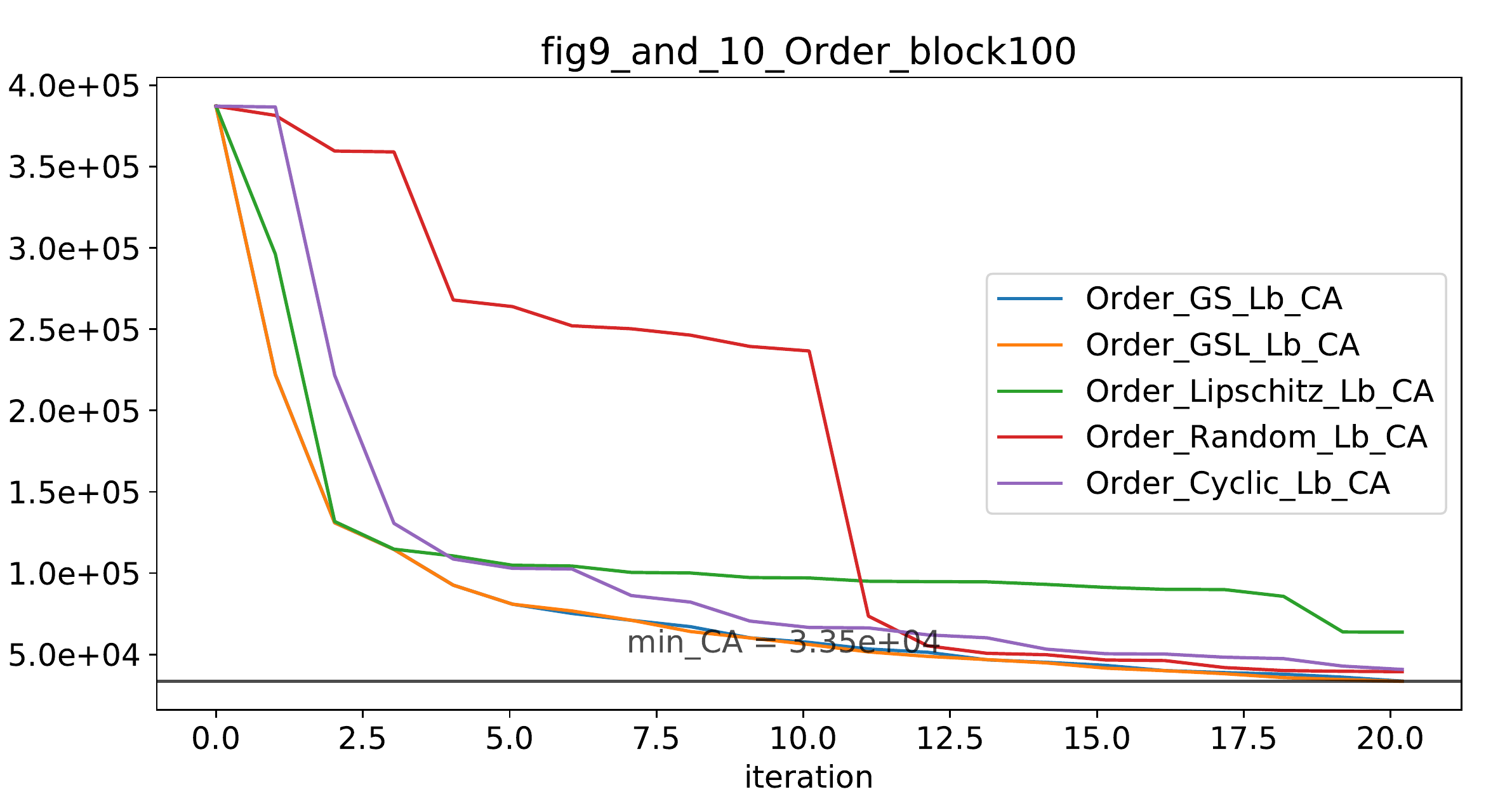}
\includegraphics[width=0.335\textwidth, clip=true, trim = 0 125mm 0 0 ]{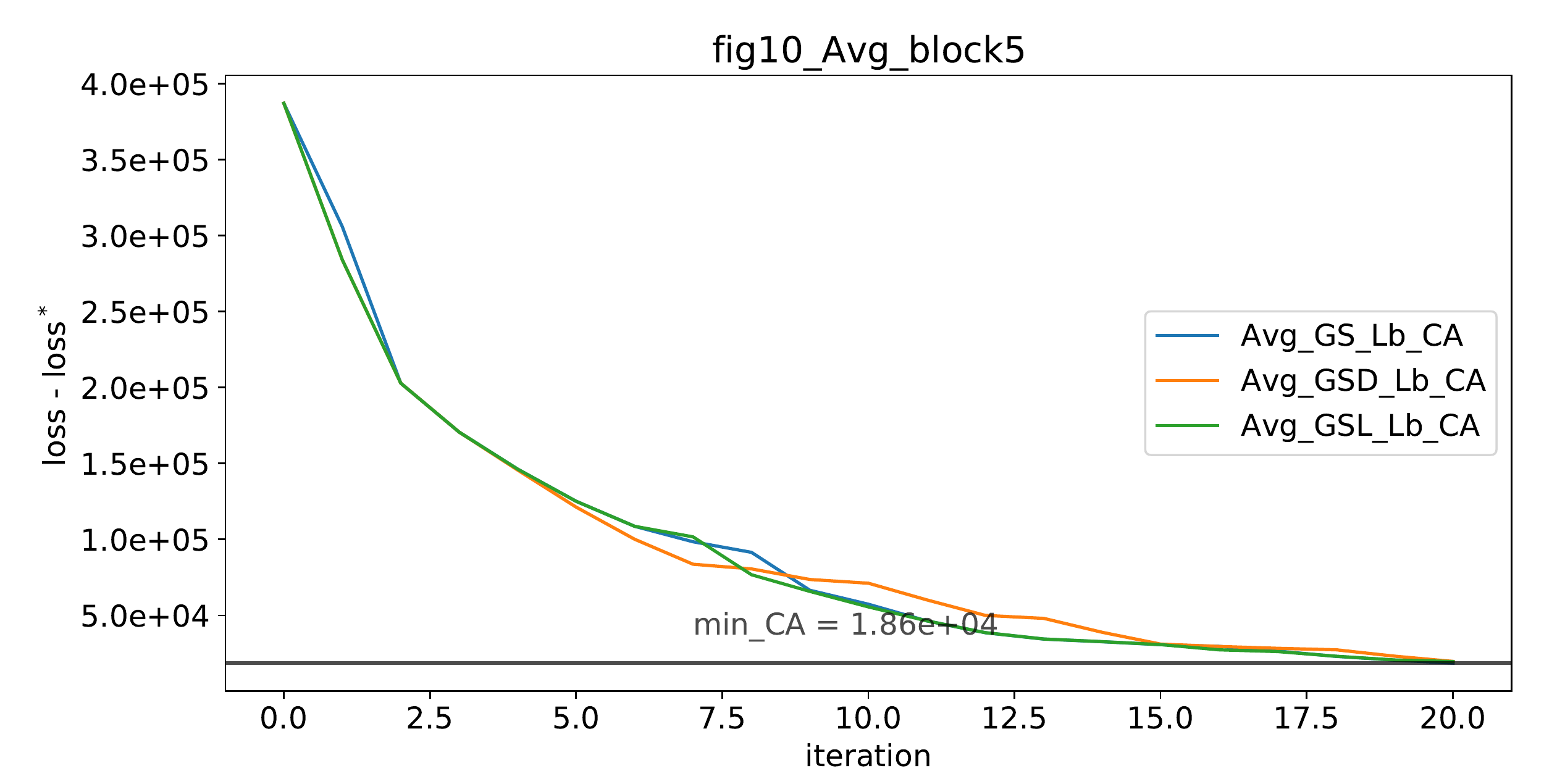}
\includegraphics[width=0.32\textwidth, clip=true, trim = 12mm 125mm 0 0]{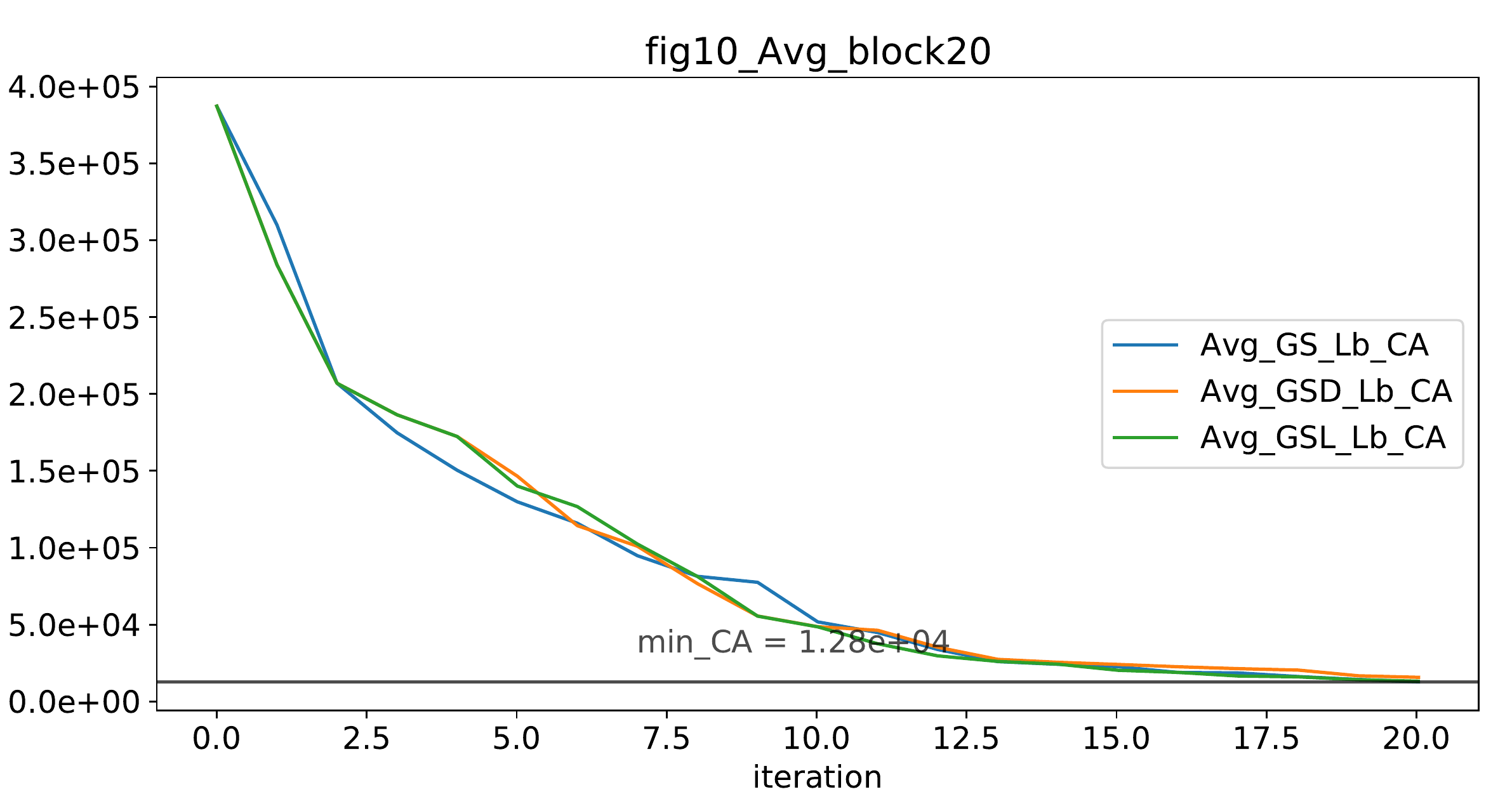}
\includegraphics[width=0.32\textwidth, clip=true, trim = 12mm 125mm 0 0]{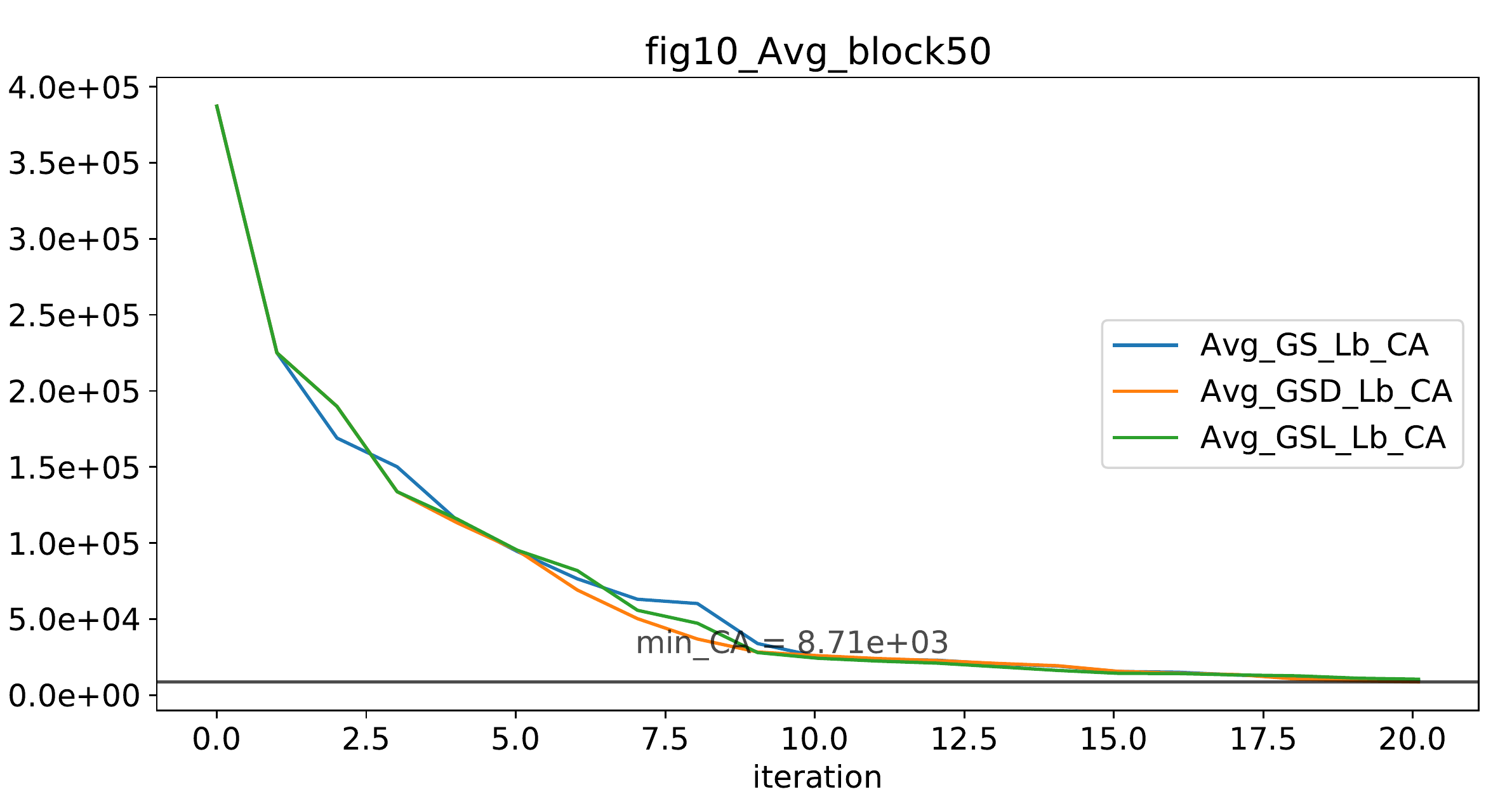}

\caption{{CaBCD applied with different block sizes, rules used in 
\cite[Figure 4, 8, 10]{Nutini2017} (top-two lines), 
\cite[Figure 9]{Nutini2017} (third line) and 
\cite[Figure 10]{Nutini2017}.}
}\label{fig:nutinifig4,8,9,10}
\end{figure}
\subsubsection{BCD update rules}
{ %
The rules presented in~\cite{Nutini2017} can be represented as
\[
\theta_{j+1} = \theta_{j} + \langle\Gamma,\sum_i \nabla L_{\theta_j} (x_i,y_i)\rangle,
\]
where $\Gamma$ can be a function of the Hessian of $L$, of Lipschitz bounds related to $L$, or however it can depend on $L_{\theta_j} (x_i,y_i)$ non linearly in the data, e.g. the inverse of the Hessian.
Due to the non-linearity, we compute the reduced measure for $\sum_i \nabla L_{\theta_j} (x_i,y_i)$ and consider $\Gamma$ as an independent factor.
{In general, Lipschitz bounds are difficult to find, whilst precise Hessian information is expensive computationally unless a closed formula is available, which is the case only for a small portion of models}.\\
In~\cite{Nutini2017} the step-size of the BCD is determined as a function of the computed Lipschitz bounds. While using the recombined measure we use a factor $\gamma=\num{1e-2}$, i.e.
\[
\hat\theta_{j+1} = \hat\theta_{j} + \gamma \times \langle\Gamma,\sum_i \nabla L_{\hat\theta_j} (\hat x_i,\hat y_i)\rangle.
\]
In place of $\EE[\nabla L_{\theta_j} (X,Y)] = \frac{1}{N}\sum_i \nabla L_{\theta_j} (x_i,y_i)$, in~\cite{Nutini2017} $\sum_i \nabla L_{\theta_j} (x_i,y_i)$ is used, which results in higher loss values.\\
Compared to the previous experiments, we want to underline that some of the rules used in~\cite{Nutini2017} compute precisely the Lipschitz Constants of the different blocks. %
Indeed, for least-squares problems Lipschitz constants can be written expliclity as a function of $(x_i,y_i)$ and $\theta_j$, see e.g.~\cite[Appendix B]{Nutini2017}.\\
Dataset A used in~\cite{Nutini2017} is synthetic and sparse.
While the rules to select the directions of~\cite{Nutini2017} prefer sparse matrices, we did not optimize the algorithms' code to find the reduced measures to efficiently deal with sparse datasets.
Nevertheless, we can imagine improvements given a significant presence of matrices multiplications in the implementations.}

\subsection{A list of rules.}
{We briefly introduce the rules below, for an exhaustive description we refer to~\cite{Nutini2017}.
We structure the experiments and plots as follows:
any rule is represented by the following string format
\begin{center}
\textit{``partition\_block-selection\_direction''}
\end{center}
with an additional suffix ``\textit{\_CA}'' to note that we have applied it with CaBCD.
The possible choices for \emph{partition}, \emph{block-selection}, \emph{direction} are\footnote{
Not all the combinations are possible, see \cite{Nutini2017} and the official repository \url{https://github.com/IssamLaradji/BlockCoordinateDescent} from more details. }
  \begin{align}
    \text{ partition} \in & \{\text{VB, Sort, Order, Avg}\}, \\
    \text{block-selection} \in &\{\text{Random, Cyclic, Lipschitz, Perm, GS, GSD, GSL,} \\
    										&\,\,\,\,\,\, \text{GSDHb, GSQ, IHT}\},\\
    \text{direction} \in & \{\text{Hb, Lb}\}.
  \end{align}
We give details on the choices below: 
VB stands for Variable Blocks which indicates that the partition of the directions can change at any iteration of the optimization procedure.
Sort fixes the partition from the beginning, organizing the blocks of directions according to their {Lipschitz values: the largest Lipschitz} values into the first block, and so on.
Order it fixes the partition from the beginning, subdividing the directions in order, e.g. if the block size is 2, the blocks will be $(1,2), (3,4), $ etc.
Avg fixes the partition alternating between adding large and small Lipschitz values.
Between the previous, VB is the only one which allows the partition to change between iterations. 
The ``\textit{block-selection}'' rules prescribe how blocks are selected given the partition of the directions and we refer to \cite{Nutini2017} for details.
The two choices of ``\textit{direction}'' are ``\textit{Lb}'' and ``\textit{Hb}''.
\textit{Lb} means that the direction for the update is {$G_{block} / L_{block}$; $Hb$ signifies that the direction is $ H_{block}^{-1}\cdot G_{block}$, where $L_{block}, G_{block},  H_{block}$ represent respectively the Lipschitz value, the Gradient and the Hessian of the chosen block.}}

The plots are named analogous{ly} to the plots in~\cite{Nutini2017} but additionally we include the values of the size of the blocks.
For the implementation of the block{s}' selection rules we have used the code provided by the authors of~\cite{Nutini2017}, freely available at
\url{https://github.com/IssamLaradji/BlockCoordinateDescent}. 

\begin{figure}[bth!]
\centering
\includegraphics[width=0.16\textwidth]{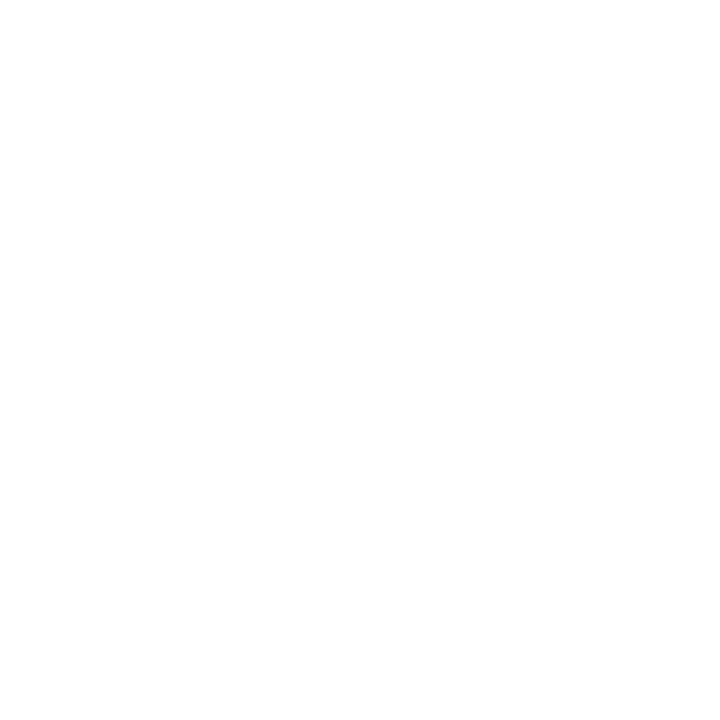}
\includegraphics[width=0.335\textwidth, clip=true, trim = 0 125mm 0 0 ]{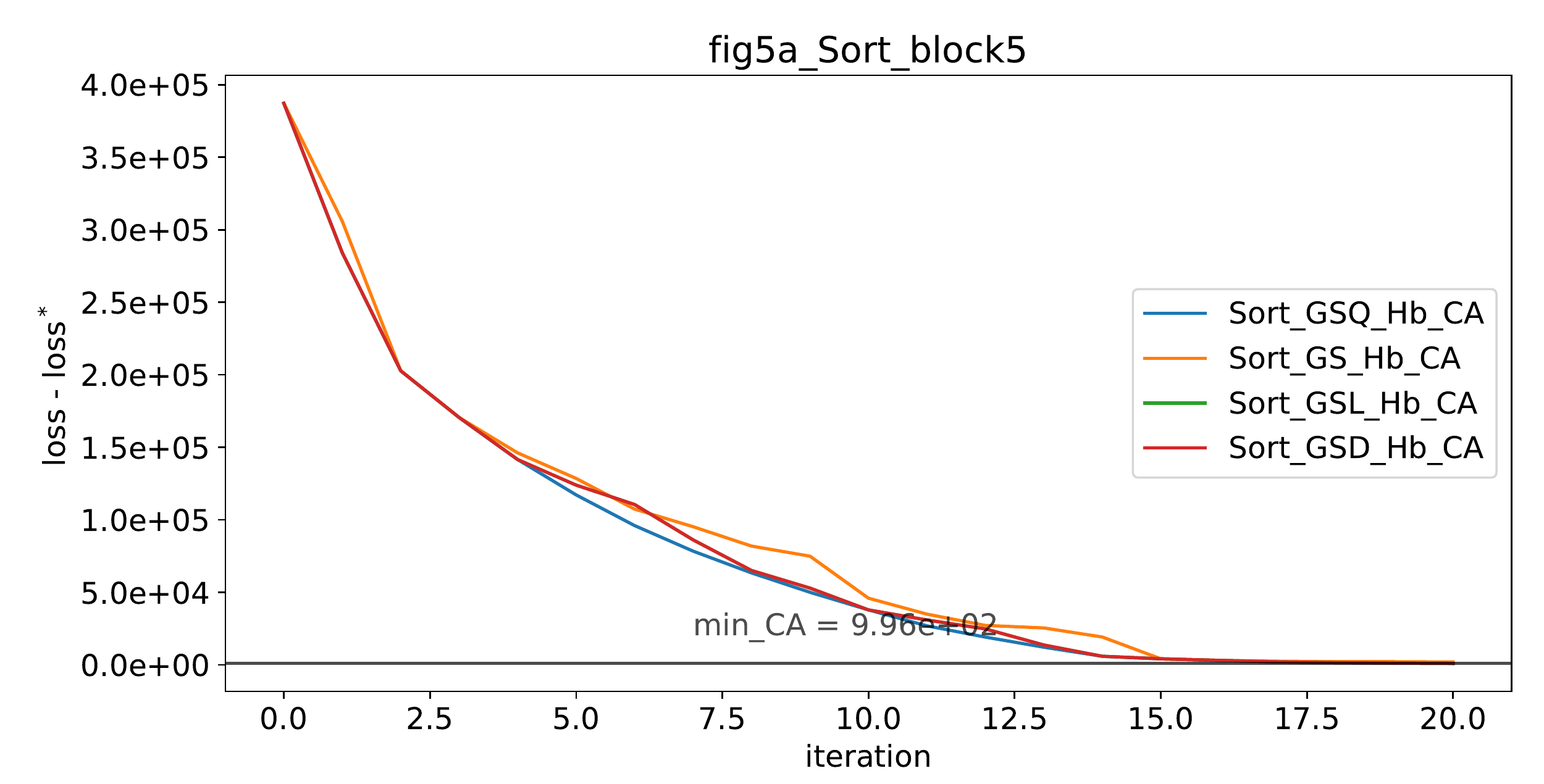}
\includegraphics[width=0.32\textwidth, clip=true, trim = 12mm 125mm 0 0]{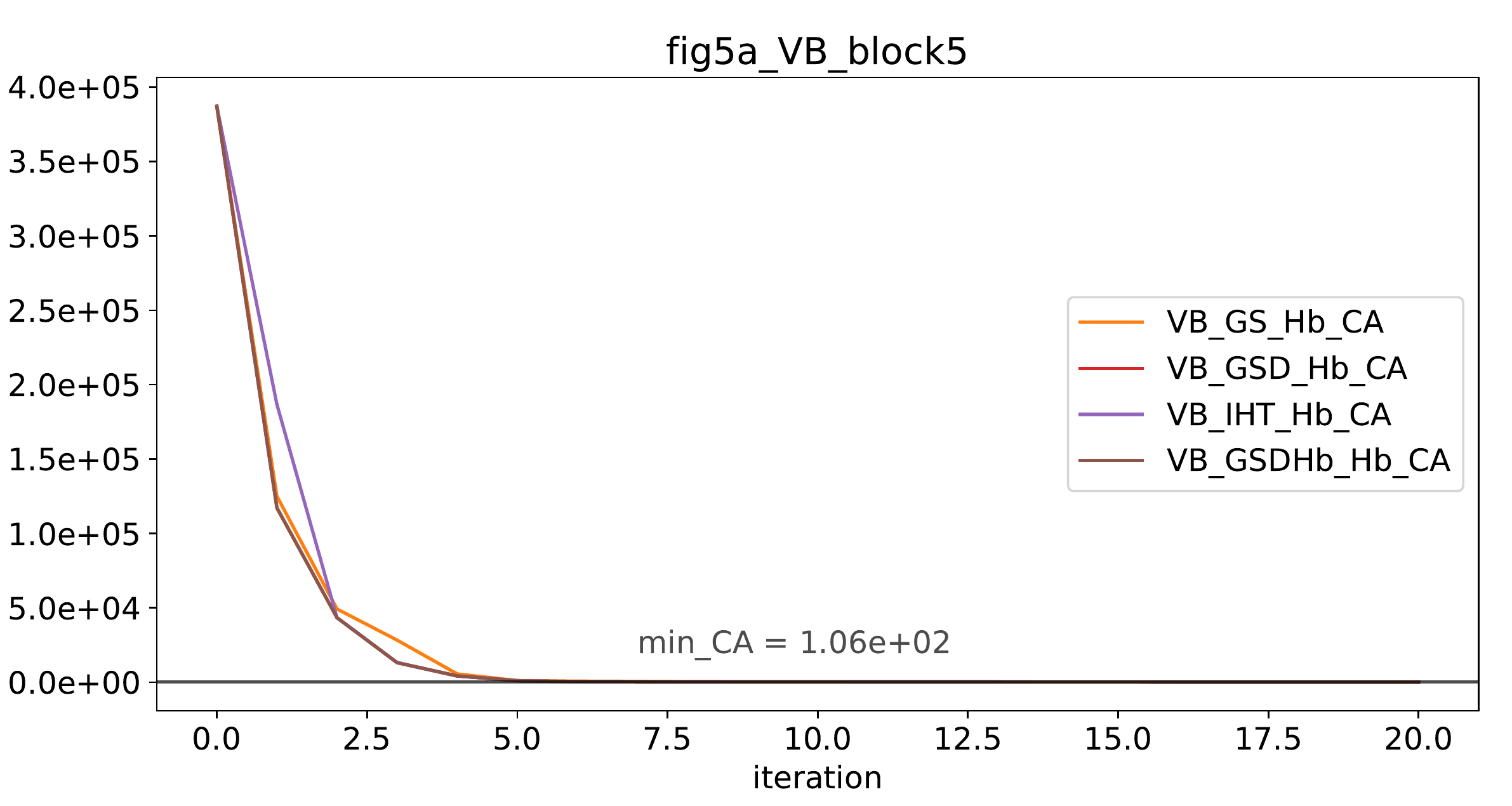}
\includegraphics[width=0.16\textwidth]{figure/figwhite}
\includegraphics[width=0.335\textwidth, clip=true, trim = 0 125mm 0 0 ]{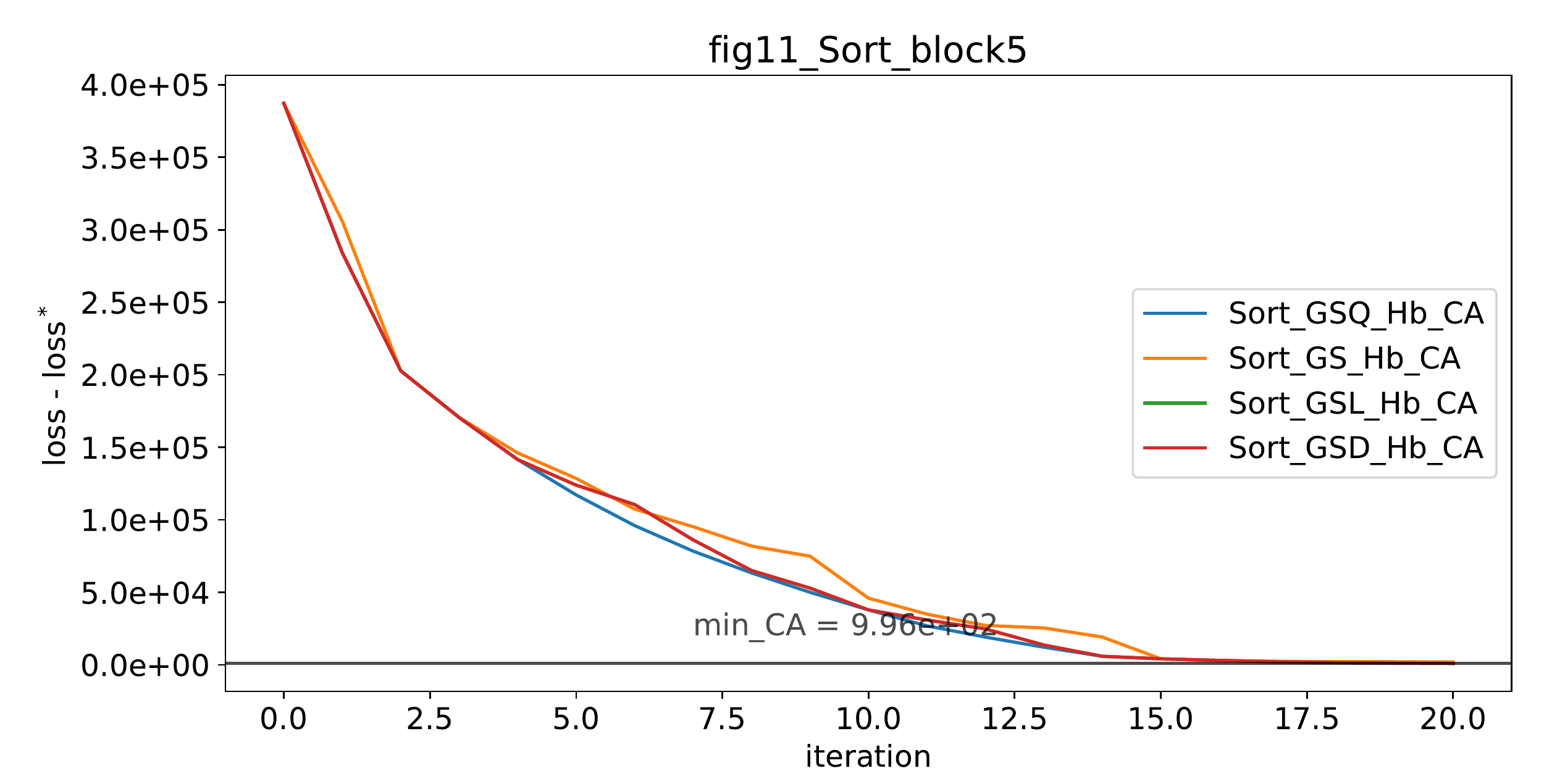}
\includegraphics[width=0.32\textwidth, clip=true, trim = 12mm 125mm 0 0]{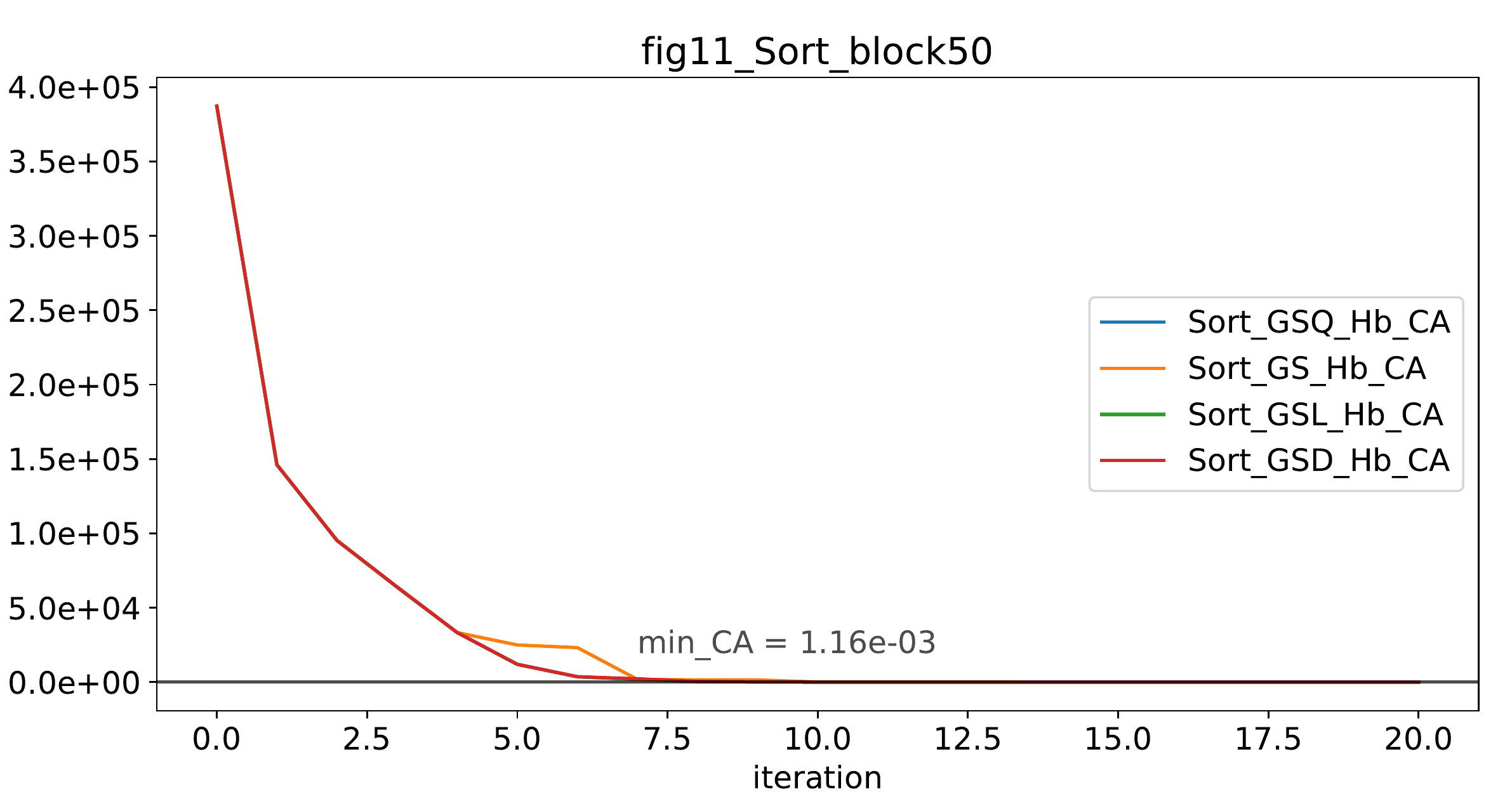}
\includegraphics[width=0.32\textwidth, clip=true, trim = 12mm 125mm 0 0]{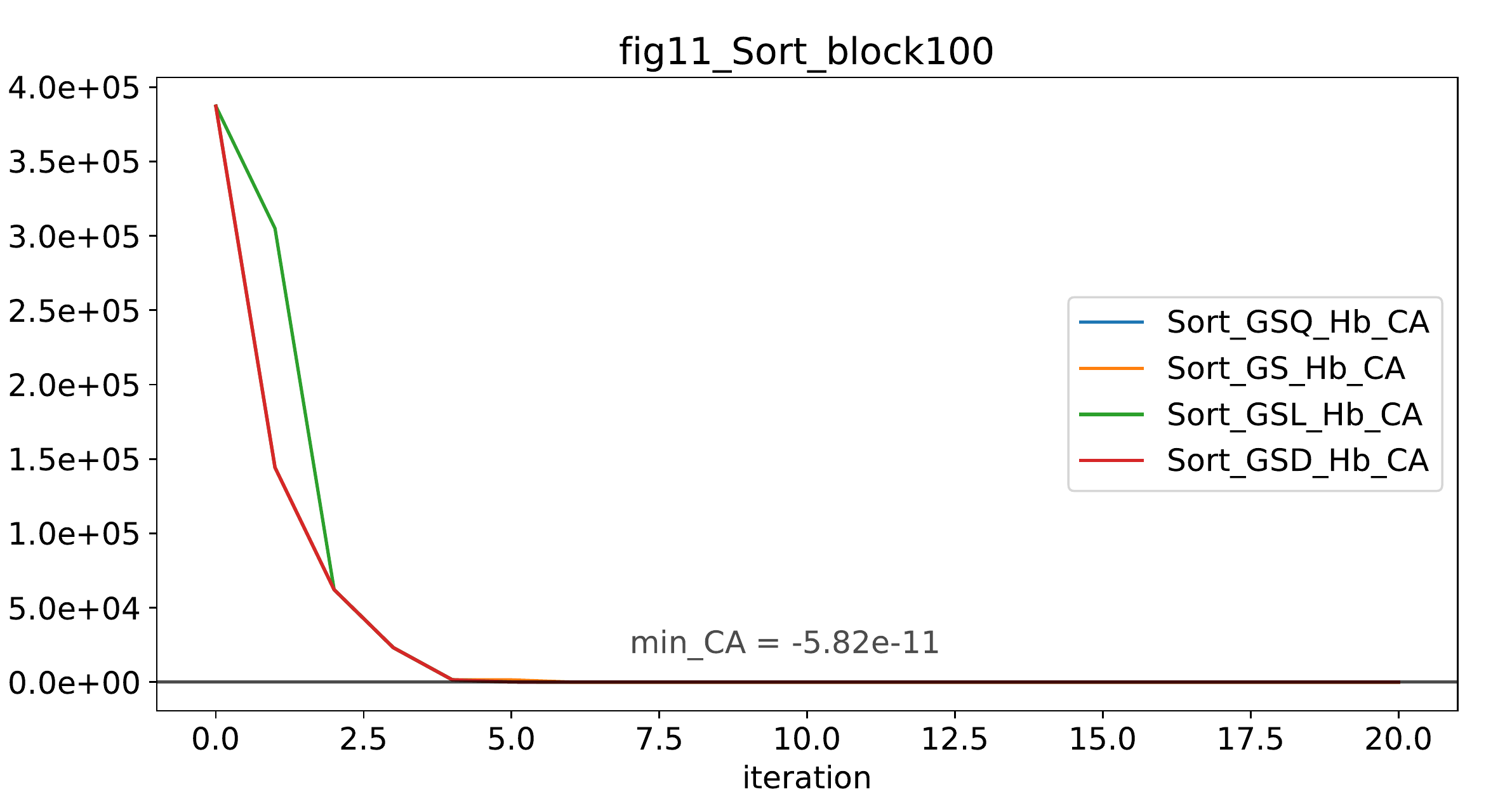}

\includegraphics[width=0.335\textwidth, clip=true, trim = 0 125mm 0 0 ]{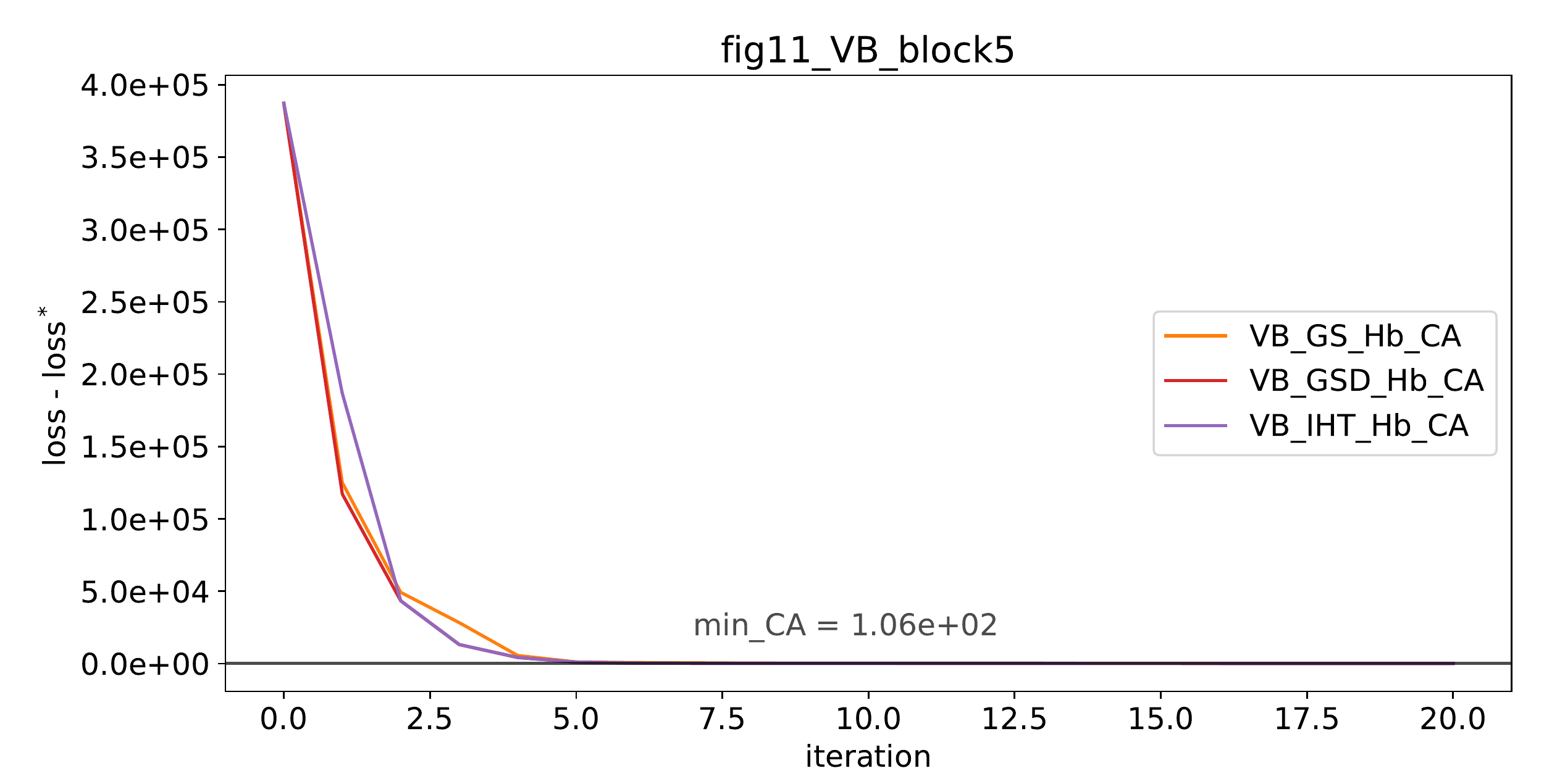}
\includegraphics[width=0.32\textwidth, clip=true, trim = 12mm 125mm 0 0]{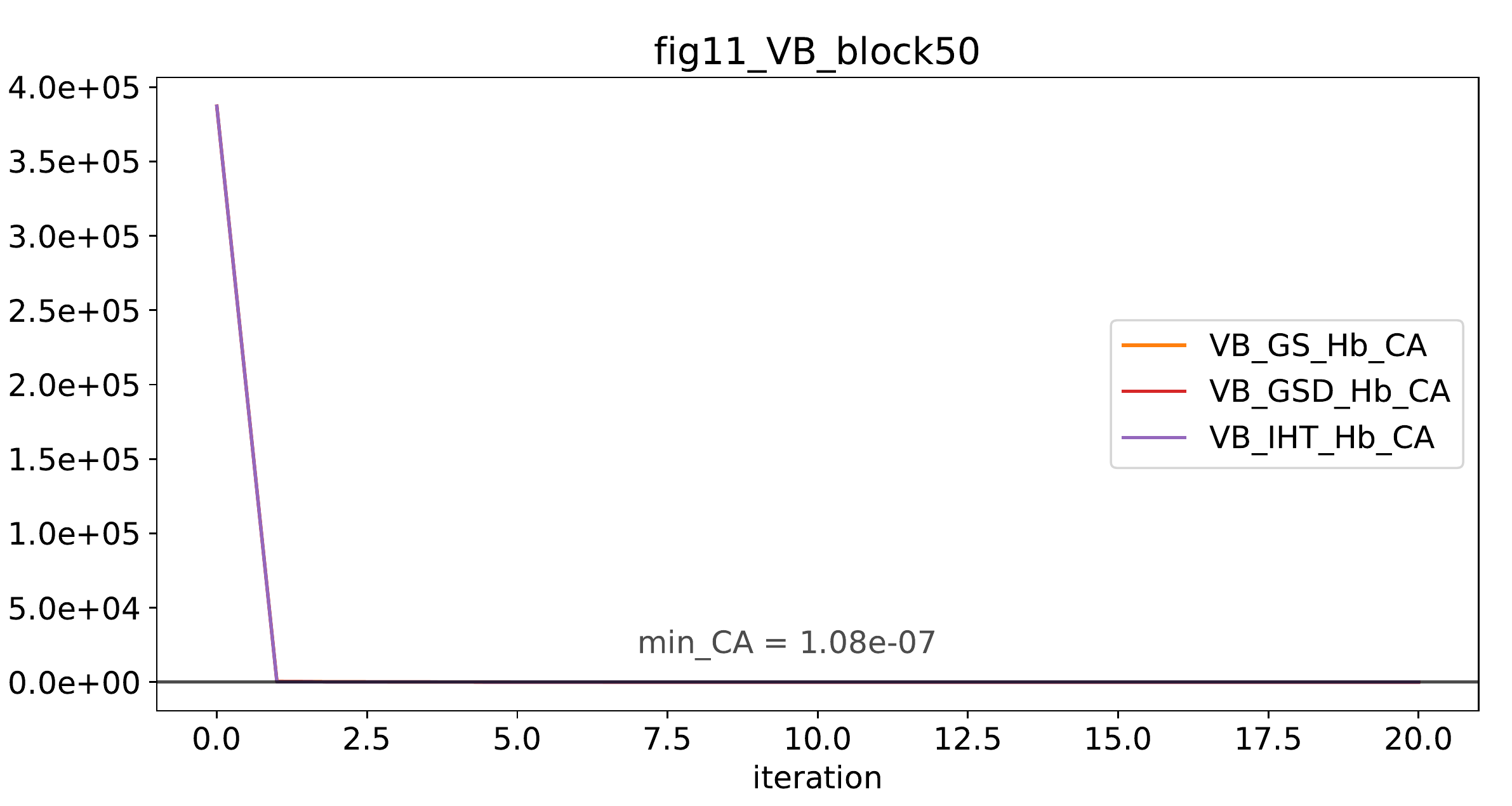}
\includegraphics[width=0.32\textwidth, clip=true, trim = 12mm 125mm 0 0]{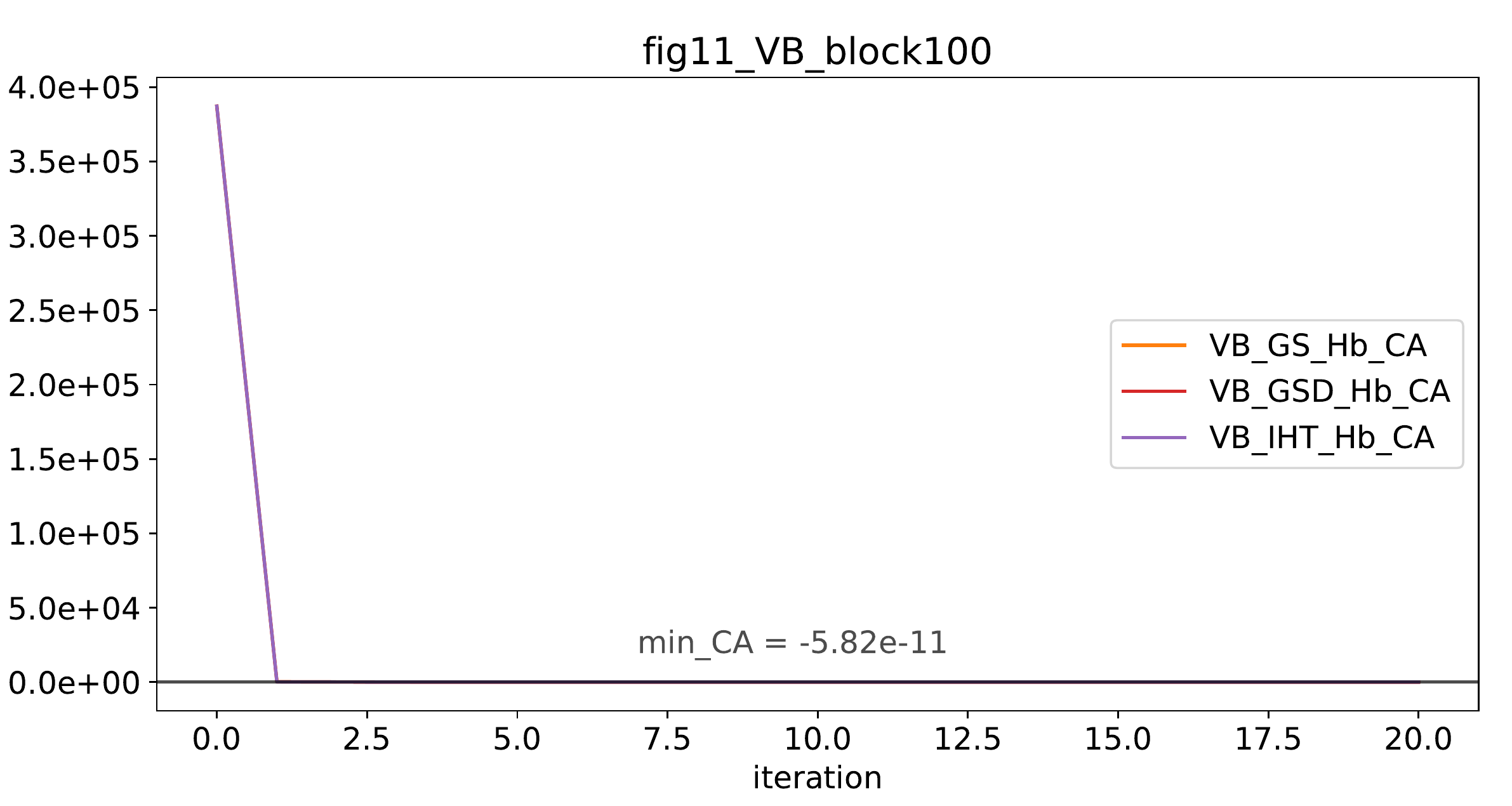}
\caption{CaBCD applied with different block sizes, rules used in \cite[Figure 5]{Nutini2017} (top-line) and 
\cite[Figure 11]{Nutini2017}.}
\label{fig:nutinifig5,11}
\end{figure}
\subsection{Discussion of results}
{
  The results show that the general conclusion of \cite{Nutini2017} also applies to CaBCD. 
Firstly, from Figure~\ref{fig:nutinifig4,8,9,10} the GS based rules should be preferred when possible. 
Secondly, from Figure~\ref{fig:nutinifig4,8,9,10} it can be observed that between the partition rules we should prefer VB or Sort. 
In our experiments, the differences between the partition rules VB and Sort are less evident. In particular, we can notice that the VB partition rule attains its minimum loss when the block size is $5$, which is congruent with our observation of Section~\ref{sec:exp_BCD} that the CaBCD makes more steps with the reduced measure when the block's size is low. 
Thirdly, from Figure~\ref{fig:nutinifig4,8,9,10} and~\ref{fig:nutinifig5,11} the differences between the selection rules vanish when the blocks' size increases. 
Lastly, the (quasi-)Newton updates $Hb$ Figure~\ref{fig:nutinifig5,11} reach a lower minimum faster, as one can expect. 
However, we recall that the Carath\'eodory reduced measure was built matching only the gradient and in the future, we want to refine this aspect applying the Carath\'eodory Sampling ``exactly'' also to the second derivative, i.e. (quasi-)Newton methods. 
}
\section{Summary}
We introduced a new SGD algorithm, CaGD and then combined it with BCD to make it scalable to high-dimensional spaces.
Similar to SGD variants we approximate the gradient in each descent step by a subset of the data.
In contrast to such SGD variants, the approximation is not done by randomly selecting a small subset of points and giving each point the same, uniform weight;
instead the points are carefully selected from the original dataset and weighing them differently.
This recombination step results in a small, weighted summary of the data is constructed and subsequently the gradient is only computed using this simpler summary until a control statistic tells us to recombine again.
To deal with high-dimensional optimization problems we then leveraged the strengths of this approach (low-variance gradient estimates) with BCD (low computational complexity).
Our experiments show that this can lead to remarkable improvements compared to competitive baselines such as ADAM and SAG. 
Many extensions are possible, e.g.~on the theoretical side, studying the behaviour under non-convex losses and on the applied side, combination with Quasi-Newton methods, or BCD rules that are specialized to CaBCD.
Independently of these, any improvement for recombination algorithms can lead to a further speed up of CaGD resp.~CaBCD.

\paragraph{Acknowledgements.}
The authors want to thank The Alan Turing Institute and the University of Oxford for the financial support given. FC is supported by The Alan Turing Institute, TU/C/000021, under the EPSRC Grant No. EP/N510129/1. HO is supported by the EPSRC grant ``Datasig'' [EP/S026347/1], The Alan Turing Institute, and the Oxford-Man Institute.

\bibliographystyle{plain}
\addcontentsline{toc}{section}{References}
\bibliography{Biblio}

\end{document}